\documentclass[12pt]{article}

\usepackage{amsmath}
\usepackage{amsfonts,amssymb}
\usepackage{amsthm}
\usepackage{bm}
\usepackage{booktabs}
\usepackage{array}
\usepackage{graphicx}
\usepackage{float}
\usepackage{placeins}
\usepackage[table]{xcolor}
\usepackage{natbib}
\usepackage{url}
\usepackage[colorlinks=true,allcolors=blue]{hyperref}
\usepackage{microtype}

\newcommand{\blind}{1}

\newcommand{\R}{\mathbb{R}}
\newcommand{\Pp}{\mathbb{P}}
\newcommand{\E}{\mathbb{E}}

\newcommand{\KL}{\operatorname{KL}}
\newcommand{\AUC}{\operatorname{AUC}}
\newcommand{\tr}{\operatorname{tr}}

\newcommand{\norm}[1]{\left\lVert #1\right\rVert}
\newcommand{\ip}[2]{\left\langle #1,#2\right\rangle}

\newtheorem{theorem}{Theorem}
\newtheorem{lemma}{Lemma}
\newtheorem{proposition}{Proposition}

\theoremstyle{definition}

\newtheorem{assumption}{Assumption}

\begin{document}

\def\spacingset#1{\renewcommand{\baselinestretch}{#1}\small\normalsize}
\spacingset{1}

\if1\blind {
\title{\bf Token-Level Likelihood-Array Regression for Membership Inference and AI-Generated Text Detection}
  \author{Jiajun Sun$^1$ and Zhanrui Cai$^2$\\
  $^{1}$ Department of Statistics and Data Science, School of Economics,\\ Xiamen University\\
  $^{2}$ Faculty of Business and Economics, University of Hong Kong}
  
  \date{}
  \maketitle
} \fi

\if0\blind {
  \bigskip
  \bigskip
  \begin{center}
{\LARGE\bf Token-Level Likelihood-Array Regression for Membership Inference and AI-Generated Text Detection}
  \end{center}
  \medskip
} \fi

\bigskip

\begin{abstract}

Membership inference asks whether a text was used to train a language model, whereas AI-generated text detection asks whether it was generated by a language model rather than written by a human. Existing likelihood-based methods typically compress token-level probabilities into a few prespecified scores, most often using only probabilities conditioned on the full preceding context. We propose likelihood-array regression (LAR), which evaluates each target token under nested left-context windows and organizes the resulting likelihood-derived features into a structured array. After aligning arrays across texts of different lengths, LAR learns how detection information varies with context scale, token position, and likelihood features. LAR-1 aggregates learned contributions from individual aligned cells, while LAR-2 adds second-order features formed from pairs of evaluations of the same target token across context lengths. For within-path quadratic model, we establish matching minimax lower and upper bounds, characterize errors from finite-dimensional approximation and random squared projections, and derive conditions under which an oracle spectral sieve attains the minimax rate. Across multiple scoring language models, LAR substantially improves membership inference and AI-generated text detection over likelihood-based baselines. The analyses further show that shorter-context likelihoods contain information beyond conventional full-context probabilities, while second-order features provide additional gains for membership inference.

\end{abstract}

\noindent{\it Keywords:} AI-generated text detection; membership inference; likelihood array; structured covariates

\newpage
\spacingset{1.8}

\section{Introduction}
\label{sec:introduction}

We consider two related problems in auditing language models. The first is membership inference, which asks whether a given text was included in the training data of a particular language model. This question is central to privacy and copyright audits. For example, a publisher may seek to determine whether its proprietary articles were used to train a model, as highlighted by The New York Times' ongoing lawsuit against OpenAI and Microsoft \citep{nyt2023complaint}. The second is AI-generated text detection, which asks whether a text was produced by a language model rather than written by a human. This question arises in authorship verification, academic publishing, and the identification of model-generated passages in mixed-content documents. Although the two problems concern different relationships between a text and a language model, both can be formulated as binary classification tasks based on conditional token probabilities. In membership inference, the label indicates whether the text was used to train the model; in AI-generated text detection, it indicates whether the text was written by a human or generated by AI.


Existing likelihood-based approaches to these two detection tasks generally reduce token-level probabilities to one or a few scalar summaries. For membership inference, commonly used summaries rely on calibrated losses, lower-tail token likelihoods, or adjustments for token rarity; for AI-generated text detection, they rely on token ranks, likelihood curvature, comparisons across models, or adaptive witness functions \citep{carlini2021extracting,shi2024detecting,mitchell2023detectgpt,hans2024binoculars,zhou2025adadetectgpt}. Most of these methods are built from the standard next-token log-probability sequence $\ell_{\mathrm{full}}(t)=\log p_\theta(x_t\mid x_1,\ldots,x_{t-1})$, 
where each token is evaluated conditional on the entire preceding sequence. More recent methods incorporate specific forms of context dependence by introducing external prefixes, computing perplexities over subsequences, or contrasting predictions with and without context \citep{xie2024recall,chang2025contextaware,sun2026clr}. Nevertheless, they ultimately aggregate the resulting likelihood information into a small number of prespecified scores. An alternative approach is to fine-tune a language model as a detector using a large labeled corpus \citep{zhou2026performanceguarantees}. Adapting such detectors to a new audited model or application domain generally requires additional labeled data and extra model fine-tuning. Existing approaches therefore leave open a broader statistical question: how should likelihood information that varies jointly with context length and token position be represented and modeled for these two detection tasks?

Addressing this question involves three major statistical challenges. First, evaluating the tokens in each document under multiple lengths of preceding context produces a likelihood representation whose dimensions vary with document length. 
Such a representation must place documents of different lengths on a common domain while preserving information about both token position and context length.
Second, token probabilities contain not only signals relevant to the detection label but also substantial nuisance variation arising from token rarity, baseline predictability, topic, writing style, document length, and model calibration. Third, the context lengths, document regions, and likelihood patterns that are informative for detection are not known in advance. Learning these relationships without imposing structural assumptions would require estimating a high-dimensional function from a relatively small labeled sample.

In this paper, we propose token-level likelihood-array regression (LAR), a regression framework for both applications. LAR represents the conditional token probabilities obtained from a fixed scoring language model as a structured, variable-length array and models the conditional log odds of the binary label through functionals of this array. To construct the array, the scoring model evaluates each target token under a sequence of nested left-context windows. Each resulting cell is indexed by the amount of preceding context and the token’s relative position in the document, and is augmented with several features derived from the model’s predictive distribution. We align these cells on a common domain so that documents of different lengths can be analyzed within a single model. LAR-1 learns how the location and likelihood-derived features of each cell contribute to the class log odds and averages these contributions across context lengths and target tokens. LAR-2 adds within-path second-order features formed from pairs of evaluations of the same target token; these include both same-context quadratic terms and products across different context lengths. Basis expansions and random squared projections provide finite-dimensional approximations of the two models, which are then estimated by ridge logistic regression.

We provide both empirical and theoretical support for the proposed framework. Most notably, across five scoring models in each application, LAR attains mean held-out AUCs of 0.914–0.983 for membership inference and 0.985–0.996 for AI-generated text detection, substantially outperforming existing likelihood-based procedures. Restricted-array analyses further show that likelihoods computed from shorter preceding contexts contain information beyond conventional full-context likelihoods, while the within-path second-order augmentation yields additional gains primarily for membership inference. On the theoretical side, for the proposed within-path quadratic model, we establish matching minimax lower and upper prediction bounds at the rate $\{{\log(en)/n}\}^{\nu/(\nu+1)}$ under product-form covariance decay. 
We further decompose the prediction error of finite-dimensional estimators into approximation, regularization, estimation, and squared-projection components, identify resolution conditions under which an oracle spectral sieve attains the minimax rate, and connect prediction error to AUC convergence.

The remainder of the paper is organized as follows. Section 2 reviews likelihood-based detection methods and regression with structured covariates. Section 3 constructs the likelihood-array representation and introduces the LAR models and estimators. Section 4 establishes the minimax prediction bounds, finite-resolution guarantees, and implications for AUC. Section 5 presents the numerical studies, and Section 6 discusses the scope and limitations of the proposed framework.

\section{Related Work}
\label{sec:related}

\subsection{Likelihood-based detection}

Existing membership-inference methods differ mainly in how they calibrate and aggregate token likelihoods. Proposed scores calibrate sequence loss using compression or a reference model, compare the target loss with those of nearby perturbations, focus on low-likelihood tokens, or standardize token likelihoods using the model's predictive vocabulary distribution \citep{carlini2021extracting,mattern2023membership,shi2024detecting,zhang2025minkplusplus}. Other procedures vary the conditioning context through external prefixes or collections of subsequences \citep{xie2024recall,chang2025contextaware}, or fine-tune the audited model using known nonmembers before recomputing likelihood-based scores \citep{zhang2025finetuning}. These methods indicate that membership evidence can depend on token difficulty, the predictive vocabulary distribution, and the context used to evaluate each token. As for likelihood-based AI-generated text detectors, existing methods construct scores from token probabilities and ranks, likelihood-to-rank ratios, likelihood curvature, comparisons between scoring models, regenerated text, or adaptive witness functions \citep{gehrmann2019gltr,su2023detectllm,mitchell2023detectgpt,bao2024fastdetectgpt,hans2024binoculars,yang2024dnagpt,verma2024ghostbuster,zhou2025adadetectgpt}. Despite their different constructions, these methods generally reduce the available token-level probability information to one or a few scalar scores.

Another class of methods learns detectors from labeled corpora of human and AI-generated text, using conventional classifiers, fine-tuned language models, or combinations of the two \citep{solaiman2019release,ippolito2020automatic,guo2023hc3,abburi2023generative}. More recently, \citet{zhou2026performanceguarantees} fine-tuned a language model on a large labeled corpus and calibrated its output to provide statistical error control. Those detectors can achieve strong classification performance, but adapting them to new generators or application domains may require substantial labeled data and further model fine-tuning. 

\subsection{Regression with structured covariates}

Scalar-on-function regression relates a scalar response to a functional covariate, with extensions to generalized response models \citep{ramsay2005functional,morris2015functional}. Basis expansions provide finite-dimensional representations of coefficient functions and facilitate regularized estimation. Quadratic functional regression additionally allows the response to depend on products of features at pairs of locations along a trajectory \citep{yaomuller2010quadratic}. LAR-1 and LAR-2 share this distinction between first- and second-order effects, but their covariate structure differs from that of standard functional regression. Rather than observing a single trajectory for each sampling unit, each text supplies a collection of target-token paths indexed by context length and located at different positions in the document. Moreover, the second-order products in LAR-2 are restricted to evaluations of the same target token.

Matrix and tensor regression preserve multidimensional structure when relating an array-valued covariate to a scalar response \citep{zhou2014regularized,luo2023low}. These methods commonly impose low-rank or sparse coefficient structures and generally assume that the observed arrays are of fixed dimension and a common indexing scheme. In contrast, a raw likelihood array has a triangular structure whose dimension and resolution vary with document length. Its cells must be aligned across texts while retaining their grouping into target-token paths. Thus, a likelihood array is neither a single functional trajectory nor a conventional fixed-dimensional array. This structure motivates both the proposed regression framework and the within-path quadratic model analyzed next.

\section{Likelihood-Array Regression}
\label{sec:method}

For $i=1,\ldots,n$, let $X_i=(x_{i1},\ldots,x_{iT_i})$ denote a tokenized text and let $Y_i\in\{0,1\}$ denote its class label. In membership inference, $Y_i=1$ indicates that $X_i$ belongs to the training data of the audited model. In AI-generated text detection, $Y_i=1$ indicates that $X_i$ is model-generated. The texts are the independent sampling units; arbitrary dependence among tokens and likelihood evaluations within a text is allowed. Let $p_\theta$ denote a fixed autoregressive language model from which conditional token probabilities are obtained. We refer to $p_\theta$ as the scoring model. In membership inference, the scoring model is typically the model being audited. In AI-generated text detection, it need not coincide with the model that generated the text. The scoring model remains fixed throughout the analysis: the labeled texts are used to fit the regression model but not to update $p_\theta$.

The construction proceeds from token-level likelihood evaluations to a document-level regression model. We first evaluate each target token under nested left-context windows, producing a variable-size likelihood array. We then align these arrays across documents and attach several likelihood-derived features to each aligned cell. Finally, LAR-1 models the contributions of individual cells to the class log odds, while LAR-2 augments these contributions with second-order features formed within each target-token path.

\subsection{Likelihood arrays and alignment}
\label{subsec:array}

For notational simplicity, we first consider a generic text $X=(x_1,\ldots,x_T)$ and suppress the document index $i$. Define the token-level log-likelihood
\[
\ell(s,t) = \log p_\theta(x_t \mid x_s,\ldots,x_{t-1}).
\]
We call $(s,t)$ a likelihood-array cell. It records the log probability assigned to the observed target token $x_t$ when the available left context begins at $x_s$.

For a fixed target position $t$, the sequence $\ell(1,t), \ell(2,t), \ldots, \ell(t-1,t)$ 
forms the target-token path for $x_t$. Moving from $s=1$ to $s=t-1$ progressively shortens the conditioning context, from all preceding tokens $x_1,\ldots,x_{t-1}$ to the single token $x_{t-1}$. For example, the path for target token $x_5$ consists of
\[
\begin{aligned}
\ell(1,5) &= \log p_\theta(x_5 \mid x_1,x_2,x_3,x_4), \\
\ell(2,5) &= \log p_\theta(x_5 \mid x_2,x_3,x_4), \\
\ell(3,5) &= \log p_\theta(x_5 \mid x_3,x_4), \\
\ell(4,5) &= \log p_\theta(x_5 \mid x_4).
\end{aligned}
\]
Thus, a target-token path records how the scoring model's assessment of the same observed token changes as the amount of preceding context varies. Collecting all cells yields the likelihood array $\mathcal{L}(X) =
\left\{
(s,t,\ell(s,t)) :
1 \leq s < t \leq T
\right\}$. The array is triangular because the path for target token $x_t$ contains $t-1$ cells, giving $T(T-1)/2$ cells in total. The conventional next-token log-probability sequence is the subset obtained by using all preceding tokens: $\ell_{\mathrm{full}}(t)=\ell(1,t)$ for $t=2,\ldots,T$. 


We now define the normalization of context and position. Let
$$
u(s,t)=\frac{\log(1+t-s)}{\log t},
\qquad
v(t;T)=\frac{t-2}{T-2}.
$$
The coordinate $u\in(0,1]$ measures the amount of available context on a normalized logarithmic scale. In particular, $u=1$ corresponds to using all preceding tokens, while smaller values correspond to shorter recent contexts. The logarithmic transformation provides finer resolution among short context windows, for which a fixed increase in length represents a larger proportional change. The coordinate $v\in[0,1]$ records the relative target position, with $v=0$ at the first predicted token and $v=1$ at the final token.

Together, $(u,v)$ maps the cells of variable-size likelihood arrays to a common context-scale and target-position domain. The observed context scales nevertheless remain path dependent: for a fixed target position $t$, both the number and locations of the values $\{u(s,t): s=1,\ldots,t-1\}$
depend on $t$. To place every target-token path on the same context-scale grid, let $u_g=\frac{g-1}{G-1}$, $g=1,\ldots,G,$ where $G$ controls the numerical resolution. For each target token, we linearly interpolate the observed pairs
$\{(u(s,t),\ell(s,t)): s=1,\ldots,t-1\}$
onto this grid and denote the resulting values by $\ell_t(u_g)$. At grid points below the smallest observed context scale, we use the value from the shortest available context.

\subsection{Likelihood-derived cell features}
\label{subsec:cell_features}

Alignment ensures that cell locations are comparable across texts, but the observed-token log probability alone can contain substantial variation unrelated to the class label. In particular, it depends on the intrinsic difficulty of the target token, the dispersion of the LLM's predictive distribution, and the amount of information supplied by the context. We therefore attach several likelihood-derived features to each aligned cell. All features are determined by the text and the fixed scoring model and are constructed without using the class labels.

The first feature is the observed-token log probability $\ell(s,t)$ introduced above. The second is the context contrast
\[
\delta(s,t)
=
\log\frac{p_\theta(x_t\mid x_s,\ldots,x_{t-1})}
{p_\theta(x_t\mid\mathrm{BOS})},
\]
where BOS denotes the beginning-of-sequence token. By comparing the probability of the same observed token with and without the preceding context, $\delta(s,t)$ removes part of the token-specific baseline predictability and isolates the incremental contribution of the available context. In this sense, the contrast directly helps address an important component of the second statistical challenge.
The next two features standardize these quantities against the scoring model's predictive distribution at the same cell. Let $A_{st}$ denote a vocabulary token drawn from $p_\theta(\cdot\mid x_s,\ldots,x_{t-1})$, and define
\begin{equation*}
\begin{aligned}
\mu_{st}
&=\mathbb E\{\log p_\theta(A_{st}\mid x_s,\ldots,x_{t-1})\},
&
\sigma_{st}^2
&=\operatorname{Var}\{\log p_\theta(A_{st}\mid x_s,\ldots,x_{t-1})\},\\
\mu_{st}^{\delta}
&=\mathbb E\!\left[
\log\frac{p_\theta(A_{st}\mid x_s,\ldots,x_{t-1})}
{p_\theta(A_{st}\mid\mathrm{BOS})}
\right],
&
(\sigma_{st}^{\delta})^2
&=\operatorname{Var}\!\left[
\log\frac{p_\theta(A_{st}\mid x_s,\ldots,x_{t-1})}
{p_\theta(A_{st}\mid\mathrm{BOS})}
\right],
\end{aligned}
\end{equation*}
where all expectations and variances are taken over the stated distribution of $A_{st}$.
The standardized log probability and standardized context contrast are
\begin{equation*}
z(s,t)=\frac{\ell(s,t)-\mu_{st}}{\sigma_{st}},
\qquad
z_{\delta}(s,t)=\frac{\delta(s,t)-\mu_{st}^{\delta}}{\sigma_{st}^{\delta}}.
\end{equation*}
These quantities express the observed-token log probability and context contrast in standard-deviation units relative to the model's predictive vocabulary distribution. We align $\delta(s,t)$, $z(s,t)$, and $z_{\delta}(s,t)$ using the same interpolation rule as for $\ell(s,t)$, and denote the resulting paths by $\delta_t(u_g)$, $z_t(u_g)$, and $z_{\delta,t}(u_g)$. The fifth feature describes how standardized token predictability changes as the available context increases. Let $\Delta_u=1/(G-1)$ and define \(\dot z_t(u_g)=\{z_t(u_{g+1})-z_t(u_{g-1})\}/(2\Delta_u)\) at the interior grid points, with one-sided differences at the endpoints. Thus, $\dot z_t(u_g)$ records the local response of standardized token predictability to an increase in context scale. The channel vector is defined as
\begin{equation*}
\boldsymbol{h}_t(u_g)=
\bigl(
\ell_t(u_g),
\delta_t(u_g),
z_t(u_g),
z_{\delta,t}(u_g),
\dot z_t(u_g)
\bigr)^\top
\in\mathbb{R}^5.
\end{equation*}
The five entries respectively describe observed-token predictability, the contribution of context, their standardized counterparts, and the local change in predictability as context increases. Although this paper uses the five channels above, the LAR framework permits other fixed, label-independent channel maps.

For text $i$, write $v_{it}=v(t;T_i)$ and $\boldsymbol{h}_{itg}=\boldsymbol{h}_{it}(u_g)$. The aligned multichannel likelihood array is
\begin{equation*}
\mathcal{A}_i=
\left\{
\bigl(
u_g,
v_{it},
\boldsymbol{h}_{itg}
\bigr):
t=2,\ldots,T_i;
g=1,\ldots,G
\right\}.
\end{equation*}
Every target-token path is now represented at the same $G$ context scales, and every path is located by the relative position of its target token. The resulting arrays remain variable in the number of target-token paths, but their cells share a common coordinate system and a common set of likelihood-derived features. The regression models in the next subsection determine how these aligned cell features and their locations contribute to the class label.

\subsection{First- and second-order likelihood-array models}
\label{subsec:models}

Define the conditional log odds for text $i$ by
\begin{equation*}
\eta_i
=
\log
\frac{\Pr(Y_i=1\mid\mathcal{A}_i)}
     {\Pr(Y_i=0\mid\mathcal{A}_i)}
\end{equation*}
We propose two nested models for $\eta_i$. LAR-1 aggregates contributions from individual aligned cells, whereas LAR-2 augments this cellwise model with second-order features constructed within each target-token path.

For an aligned cell, let $b(u,v,\boldsymbol{h})$ denote a shared cellwise contribution function. It maps the context scale $u$, relative target position $v$, and channel vector $\boldsymbol{h}$ to a contribution on the log-odds scale. Because $b(u,v,\boldsymbol{h})$ depends jointly on all three arguments, the contribution of a likelihood pattern can vary with both context scale and document position, and its dependence on the channel vector need not be linear.

The LAR-1 log odds are
\begin{equation}
\eta_i^{(1)}
=
\alpha
+
\frac{1}{T_i-1}
\sum_{t=2}^{T_i}
\left[
\frac{1}{G}
\sum_{g=1}^{G}
b\!\left(
u_g,
v_{it},
\boldsymbol{h}_{itg}
\right)
\right].
\label{eq:lar1}
\end{equation}
The inner average combines cellwise contributions across the aligned context scales for target token $t$. The outer average then combines the target-token paths across the document. The normalization assigns each target token equal total weight and prevents longer texts from contributing mechanically larger scores. Although the cells are averaged uniformly, their effective contributions are learned through the dependence of $b(u,v,\boldsymbol{h})$ on context scale and target position.

The term \emph{first-order} refers to the cellwise additive structure of LAR-1: each array cell enters through its own value of $b(u,v,\boldsymbol{h})$. This model can learn, for example, that a low standardized likelihood carries different information near the beginning and end of a document or under short and long preceding contexts. It does not, however, directly describe how two evaluations of the same target token behave jointly across context lengths.

Such joint behavior may contain additional information. For example, a target token may receive a low standardized likelihood under a short context. Its interpretation may differ depending on whether the likelihood remains low or rises substantially when more preceding context is supplied. To capture this behavior, LAR-2 introduces second-order features within each target-token path. For each relative target position $v$, let
\begin{equation*}
\kappa_v:
\bigl([0,1]\times\mathbb{R}^5\bigr)^2
\longrightarrow
\mathbb{R}
\end{equation*}
be a symmetric function of two aligned cells. For target token $t$ in text $i$, define the second-order path contribution
\begin{equation}
Q_{it}
=
\frac{1}{G^2}
\sum_{g=1}^{G}
\sum_{g'=1}^{G}
\kappa_{v_{it}}\!\left(
\bigl(u_g,\boldsymbol{h}_{itg}\bigr),
\bigl(u_{g'},\boldsymbol{h}_{itg'}\bigr)
\right).
\label{eq:path-pair-contribution}
\end{equation}
The LAR-2 log odds are defined as
\begin{equation}
\eta_i^{(2)}
=
\eta_i^{(1)}
+
\frac{1}{T_i-1}
\sum_{t=2}^{T_i}
Q_{it}.
\label{eq:lar2}
\end{equation}
For each target token, the double sum in \eqref{eq:path-pair-contribution} combines pairs of cells from the same target-token path. Terms with $g\neq g'$ describe associations across different context scales, while terms with $g=g'$ provide same-scale quadratic features. The resulting path-level contributions are then averaged across target tokens.
The restriction to cells from the same target-token path is deliberate. It captures how the scoring model's assessment of one observed token changes as additional context is supplied. Pairing cells from different target tokens would instead mix this context-response information with arbitrary associations between different tokens and would substantially enlarge the model.

We use the second-order term as a predictive augmentation rather than as a separately identified functional-ANOVA component. Because the cellwise function $b(u,v,\boldsymbol{h})$ is flexible, some same-scale or additive components of $\kappa_v$ could also be represented through $b(u,v,\boldsymbol{h})$. Our target in LAR-2 is therefore the total log odds $\eta_i^{(2)}$, rather than a unique decomposition into $b(u,v,\boldsymbol{h})$ and $\kappa_v$. In the finite-dimensional model below, ridge regularization produces a unique fitted score. Accordingly, comparisons between LAR-1 and LAR-2 measure the predictive value of adding within-path second-order features; they should not be interpreted as uniquely identifying an underlying interaction function.

Conventional likelihood scores can be viewed as prespecified reductions of the same token-level information. Average loss uses only the full-context cells where $u=1$, selects the log-probability channel, and averages it uniformly across target tokens. Min-K\% and Min-K\%++ also use the full-context cells but retain only a document-specific lower tail of the raw or standardized likelihoods. In contrast, LAR learns how likelihood information should be combined across context scales, target positions, and channels. Because the Min-K\% procedures use document-specific empirical quantiles, they are not literal special cases of the shared cellwise model; rather, they exemplify the prespecified aggregation rules that LAR is designed to avoid. The Supplement shows an empirical-measure representation of LAR-1 and LAR-2 and expresses these conventional full-context scores as functionals of the corresponding full-context empirical distributions. The main text retains the explicit cell- and path-based formulations because they are easier to follow. 

\subsection{Finite-dimensional estimation}
\label{subsec:estimation}

The models above involve the unknown functions $b(u,v,\boldsymbol{h})$ and $\kappa_v$. We approximate them using finite basis expansions, after which both LAR-1 and LAR-2 reduce to ridge logistic regression on document-level features. For the cellwise contribution function, let $\varphi_{u,1},\ldots,\varphi_{u,d_u}$, $\varphi_{v,1},\ldots,\varphi_{v,d_v}$, and $\varphi_{h,1},\ldots,\varphi_{h,d_h}$ be basis functions over context scale, relative target position, and the five-dimensional channel space, respectively. We approximate the function $b(u,v,\boldsymbol{h})$ by the tensor-product expansion
\begin{equation}
b(u,v,\boldsymbol{h})
=
\sum_{a=1}^{d_u}
\sum_{j=1}^{d_v}
\sum_{k=1}^{d_h}
\omega_{ajk}
\varphi_{u,a}(u)\varphi_{v,j}(v)\varphi_{h,k}(\boldsymbol{h}).
\label{eq:b-sieve}
\end{equation}
The channel basis may contain both linear and nonlinear functions of the five channels. Substituting \eqref{eq:b-sieve} into \eqref{eq:lar1} yields the document-level coordinates
\begin{equation}
Z^{(1)}_{i,ajk}
=
\frac{1}{(T_i-1)G}
\sum_{t=2}^{T_i}
\sum_{g=1}^{G}
\varphi_{u,a}(u_g)
\varphi_{v,j}(v_{it})
\varphi_{h,k}(\boldsymbol{h}_{itg}).
\label{eq:linear-coordinates}
\end{equation}
The LAR-1 log odds can therefore be written as
\begin{equation*}
\eta_i^{(1)}
=
\alpha
+
\sum_{a=1}^{d_u}
\sum_{j=1}^{d_v}
\sum_{k=1}^{d_h}
\omega_{ajk}Z^{(1)}_{i,ajk}.
\end{equation*}
Thus, the functional averaging in LAR-1 is performed when the document-level coordinates are constructed. Estimation subsequently involves an ordinary linear predictor with $d_ud_vd_h$ features.

To construct the second-order features, combine the context and channel bases into the path feature map $\boldsymbol{\psi}(u,\boldsymbol{h})
=
\left(
\varphi_{u,a}(u)\varphi_{h,k}(\boldsymbol{h})
\right)\in\mathbb{R}^{d}$, where $d=d_ud_h$, and $a=1,\ldots,d_u; k=1,\ldots,d_h$. For target token $t$ in text $i$, define its finite-dimensional path representation by
\begin{equation}
\boldsymbol{w}_{it}
=
\frac{1}{G}
\sum_{g=1}^{G}
\boldsymbol{\psi}\!\left(
u_g,
\boldsymbol{h}_{itg}
\right)
\in\mathbb{R}^{d}.
\label{eq:path-vector}
\end{equation}
This vector summarizes the joint variation of context scale and channel values along one target-token path. Within the finite model, we represent the second-order kernel as
\begin{equation}
\kappa_v\!\left(
(u,\boldsymbol{h}),
(u',\boldsymbol{h}')
\right)
=
\sum_{j=1}^{d_v}
\varphi_{v,j}(v)
\boldsymbol{\psi}(u,\boldsymbol{h})^{\mathsf T}
\boldsymbol{\Theta}_j
\boldsymbol{\psi}(u',\boldsymbol{h}'),
\label{eq:kernel-sieve}
\end{equation}
where each $\boldsymbol{\Theta}_j$ is a symmetric $d\times d$ coefficient matrix. Substituting \eqref{eq:kernel-sieve} into the path-specific double average in \eqref{eq:path-pair-contribution} yields the quadratic contribution
\begin{equation}
\sum_{j=1}^{d_v}
\varphi_{v,j}(v_{it})
\boldsymbol{w}_{it}^{\mathsf T}
\boldsymbol{\Theta}_j
\boldsymbol{w}_{it}.
\label{eq:path-quadratic-form}
\end{equation}
Forming \eqref{eq:path-quadratic-form} separately for each target token preserves the within-path restriction of LAR-2.

Directly estimating $\boldsymbol{\Theta}_1,\ldots,\boldsymbol{\Theta}_{d_v}$ would require $d_vd(d+1)/2$ quadratic coefficients. We reduce this dimension using random projection directions $\boldsymbol{a}_1,\ldots,\boldsymbol{a}_R
\in\mathbb{R}^{d},$
generated independently of the labeled sample. In the numerical studies, each direction is obtained by normalizing an independent Gaussian vector to have Euclidean norm $\sqrt d$. Within the finite model, each coefficient matrix is restricted to their rank-one span:
\begin{equation*}
\boldsymbol{\Theta}_j
=
\frac{1}{\sqrt R}
\sum_{r=1}^{R}
\gamma_{jr}
\boldsymbol{a}_r
\boldsymbol{a}_r^{\mathsf T}.
\end{equation*}
It follows that
\begin{equation*}
\boldsymbol{w}_{it}^{\mathsf T}
\boldsymbol{\Theta}_j
\boldsymbol{w}_{it}
=
\frac{1}{\sqrt R}
\sum_{r=1}^{R}
\gamma_{jr}
\left(
\boldsymbol{a}_r^{\mathsf T}
\boldsymbol{w}_{it}
\right)^2.
\end{equation*}
Each squared projection combines all pairs of context scales within the target-token path. The resulting document-level quadratic coordinates are
\begin{equation}
Z^{(2)}_{i,jr}
=
\frac{1}{\sqrt{R}(T_i-1)}
\sum_{t=2}^{T_i}
\varphi_{v,j}(v_{it})
\left(
\boldsymbol{a}_r^{\mathsf T}
\boldsymbol{w}_{it}
\right)^2.
\label{eq:quadratic-coordinates}
\end{equation}
The factor $R^{-1/2}$ is the normalization used in the theoretical projection map. Because the document-level coordinates are subsequently standardized within the training sample, this common rescaling does not change the fitted numerical regression.

Let $\boldsymbol{Z}_i^{(1)}$ collect the first-order coordinates in \eqref{eq:linear-coordinates}, and let $\boldsymbol{Z}_i^{(2)}$ collect the quadratic coordinates in \eqref{eq:quadratic-coordinates}. LAR-1 uses $\boldsymbol{Z}_i
=\boldsymbol{Z}_i^{(1)},$
whereas LAR-2 uses the concatenated feature vector $\boldsymbol{Z}_i
=
\left(
(\boldsymbol{Z}_i^{(1)})^{\mathsf T},
(\boldsymbol{Z}_i^{(2)})^{\mathsf T}
\right)^{\mathsf T}.$
Their respective dimensions are $d_ud_vd_h$ and $d_ud_vd_h+d_vR$.

All feature transformations are estimated from the training sample. This includes centering and scaling the channel values, path features, and document-level coordinates, as well as removing coordinates with zero empirical variance. The same transformations are then applied unchanged to validation or test texts. The channel basis and projection directions are also fixed before the regression coefficients are estimated.

Next, we fit the ridge logistic regression
\begin{align*}
(\widehat{\alpha},\widehat{\boldsymbol{\beta}})
=
\operatorname*{arg\,min}_{\alpha,\boldsymbol{\beta}}
\left\{
\frac{1}{n}
\sum_{i=1}^{n}
\left[
\log\left\{
1+\exp\left(
\alpha+\boldsymbol{\beta}^{\mathsf T}\boldsymbol{Z}_i
\right)
\right\}-
Y_i\left(
\alpha+\boldsymbol{\beta}^{\mathsf T}\boldsymbol{Z}_i
\right)
\right]
+
\frac{\lambda}{2}
\lVert\boldsymbol{\beta}\rVert_2^2
\right\}.
\end{align*}
The intercept is not penalized. The parameter $\lambda$ controls the tradeoff between fit and regularization and may be prespecified or selected using only the training data. The fitted linear predictor is $\widehat{\eta}_i
=\widehat{\alpha}+\widehat{\boldsymbol{\beta}}^{\mathsf T}
\boldsymbol{Z}_i,$ with larger values indicating stronger evidence for $Y_i=1$.


\section{Theoretical Analysis}
\label{sec:theory}

The preceding section introduced finite-dimensional estimators for the first- and second-order likelihood-array models. We now study an idealized population formulation in which the aligned feature map is fixed. The analysis concerns prediction in this functional model and the errors introduced by finite path features and squared projections; it does not cover likelihood computation, interpolation, training-sample standardization, or selection of the numerical basis. The main theoretical difficulty arises from the second-order term, which combines pairs of features from the same target-token path. Its prediction rate is therefore governed by the spectrum of pairwise products of path-feature covariance eigenvalues.


\subsection{Population formulation and assumptions}
\label{subsec:population_theory}

Let $\mathcal Z$ denote the channel space; for the five-channel construction in Section~\ref{subsec:cell_features}, $\mathcal Z=\mathbb R^5$. Let $\mathcal H$ be a
separable Hilbert space, and let $\psi:[0,1]\times\mathcal Z\longrightarrow\mathcal H$ be a fixed feature map. For target token $t$ in text $i$, define the Hilbert-space
path feature
\begin{equation*}
W_{it}
=
\frac{1}{G}
\sum_{g=1}^{G}
\psi(u_g,\boldsymbol h_{itg}).
\end{equation*}
This feature combines context scale and channel value along a single target-token path.
For example, setting $\mathcal H=\mathbb R^{d_ud_h}$ and
$\psi(u,\boldsymbol h)
=
\bigl(
\varphi_{u,a}(u)\varphi_{h,k}(\boldsymbol h)
\bigr)_{a,k}$ recovers the finite path vector in \eqref{eq:path-vector}. The general Hilbert-space
formulation allows the path representation to be infinite dimensional.

Let $\varphi_{v,1},\ldots,\varphi_{v,d_v}$ be basis functions over relative target
position, and write $\boldsymbol\varphi_v(v)
=\bigl(
\varphi_{v,1}(v),\ldots,\varphi_{v,d_v}(v)
\bigr)^{\mathsf T}.$
We assume that their span contains the constant function, as does the piecewise-linear basis used in the numerical studies. The symbol $\otimes$ denotes the tensor product, the infinite-dimensional analogue of the Kronecker product. In particular, $\boldsymbol\varphi_v(v)\otimes W$ is simply the block feature whose $j$th component is $\varphi_{v,j}(v)W$. To represent second-order effects, we use the symmetric tensor product $W\otimes_s W$. This is the Hilbert-space extension of an ordinary outer product: if $W$ is represented by a finite vector $\boldsymbol w\in\mathbb R^d$, then $W\otimes_s W$ corresponds to the symmetric matrix $\boldsymbol w\boldsymbol w^{\mathsf T}$, and a quadratic coefficient matrix $\Theta$ contributes $\boldsymbol w^{\mathsf T}\Theta\boldsymbol w$. Thus, the tensor notation records all pairwise products of the components of the same path feature.

The uncentered first- and second-order features of text $i$ are
\begin{equation}
\begin{aligned}
\overline\Phi_i^{(1)}
&=
\frac{1}{T_i-1}
\sum_{t=2}^{T_i}
\boldsymbol\varphi_v(v_{it})\otimes W_{it},
\\
\overline\Phi_i^{(2)}
&=
\frac{1}{T_i-1}
\sum_{t=2}^{T_i}
\boldsymbol\varphi_v(v_{it})
\otimes
\bigl(W_{it}\otimes_s W_{it}\bigr).
\end{aligned}
\label{eq:population_path_features}
\end{equation}
The first component averages position-indexed path features across target tokens. The second forms a quadratic feature separately within each target-token path before applying
the same position-sensitive averaging. It therefore preserves the same-token restriction of LAR-2 in \eqref{eq:lar2}.

For the covariance analysis, define the centered features
$\Phi_i^{(r)}
=
\overline\Phi_i^{(r)}
-
\mathbb E\bigl(\overline\Phi_i^{(r)}\bigr), r\in\{1,2\},$
and write $\Phi_i=(\Phi_i^{(1)},\Phi_i^{(2)})$. We equip this pair with the natural product Hilbert-space inner product. Centering only
changes the intercept and has no effect on the model's slopes.
We analyze the population logistic model
\begin{equation}
\mathbb P(Y_i=1\mid\Phi_i)
=
\left[
1+
\exp\left\{
-\alpha_0
-
\left\langle\vartheta_0^{(1)},\Phi_i^{(1)}\right\rangle
-
\left\langle\vartheta_0^{(2)},\Phi_i^{(2)}\right\rangle
\right\}
\right]^{-1},
\label{eq:theory_logistic_model}
\end{equation}
where
$\vartheta_0=(\vartheta_0^{(1)},\vartheta_0^{(2)})$ contains the first- and second-order coefficient functions. Here $\langle\cdot,\cdot\rangle$ denotes the Hilbert-space inner product; for finite vectors, it reduces to the ordinary transpose product. Accordingly,
the conditional log odds based on $\Phi_i$ are
\begin{equation*}
\eta_0(\Phi_i)
=
\alpha_0
+
\left\langle\vartheta_0^{(1)},\Phi_i^{(1)}\right\rangle
+
\left\langle\vartheta_0^{(2)},\Phi_i^{(2)}\right\rangle.
\end{equation*}
The components involving $\Phi_i^{(1)}$ and $\Phi_i^{(2)}$ represent,
respectively, first-order path effects and position-varying quadratic effects within
target-token paths. Thus, \eqref{eq:theory_logistic_model} is the population counterpart
of LAR-2 in Section~\ref{subsec:models}. 

\begin{assumption}[Bounded feature maps]
\label{ass:bounded_features}
There are finite constants $K_\psi$ and $K_v$ such that
\begin{equation*}
\sup_{(u,\boldsymbol h)\in[0,1]\times\mathcal Z}
\left\|\psi(u,\boldsymbol h)\right\|
\leq K_\psi,
\qquad
\sup_{v\in[0,1]}
\left\|\boldsymbol\varphi_v(v)\right\|
\leq K_v.
\end{equation*}
\end{assumption}

Because each $W_{it}$ averages bounded feature-map values,
Assumption~\ref{ass:bounded_features} implies that $\Phi_i^{(1)}$ and
$\Phi_i^{(2)}$ are uniformly bounded by constants that do not increase with text
length. This condition is used to control the empirical logistic risk and to relate excess
logistic risk to squared prediction loss.

\begin{assumption}[Spectral decay of path features]
\label{ass:path_capacity}
There exist a countable orthonormal basis $\{e_j:j\geq1\}$ of $\mathcal H$ and a
nonincreasing sequence $\{\kappa_j:j\geq1\}$ of nonnegative numbers satisfying $\kappa_j \leq C_\kappa j^{-\nu}$, $j\geq1$ for some $\nu>1$ and $C_\kappa<\infty$. Let $\mathcal S:\mathcal H\to\mathcal H$ be the operator defined by $\mathcal S e_j=\kappa_j e_j$. For some
$C_\Gamma<\infty$,
\begin{equation}
\operatorname{Cov}(\Phi^{(1)})
\preceq
C_\Gamma
(I_{d_v}\otimes\mathcal S),
\qquad
\operatorname{Cov}(\Phi^{(2)})
\preceq
C_\Gamma
\left\{
I_{d_v}\otimes
(\mathcal S\otimes_s\mathcal S)
\right\}.
\label{eq:path_capacity}
\end{equation}
Here $I_{d_v}$ is the identity operator on $\mathbb R^{d_v}$, and
$A\preceq B$ means that $B-A$ is positive semidefinite. The operator
$I_{d_v}\otimes\mathcal S$ applies $\mathcal S$ separately to each target-position
block, while $\mathcal S\otimes_s\mathcal S$ assigns covariance order
$\kappa_j\kappa_k$ to the pair of path directions $(e_j,e_k)$.
\end{assumption}

Assumption~\ref{ass:path_capacity} controls the variation of the text-level features
along increasingly complex path directions. The condition $\nu>1$ ensures that
$\sum_{j\geq1}\kappa_j<\infty$. The second covariance bound reflects the
within-path quadratic construction: variation in the symmetric tensor direction indexed
by $(e_j,e_k)$ is at most of order $\kappa_j\kappa_k$. Importantly, this assumption
does not require different target-token paths within a text to be independent. It is
sufficient for analogous bounds to hold for the contribution of a target-token path
selected uniformly from a text; the averaging in \eqref{eq:population_path_features}
then implies \eqref{eq:path_capacity}. A formal statement is provided in the Supplementary
Material.

\subsection{Minimax prediction rate for the within-path quadratic model}
\label{subsec:minimax_theory}

For fixed $\nu>1$ and $M>0$, define
\begin{equation*}
\mathcal P_\nu(M)
=
\left\{
P:
\begin{array}{l}
P\text{ satisfies Assumptions~\ref{ass:bounded_features}--\ref{ass:path_capacity}}
\\
\text{and the logistic model \eqref{eq:theory_logistic_model}},
\quad
|\alpha_0|\leq M,
\quad
\|\vartheta_0\|\leq M
\end{array}
\right\}.
\end{equation*}
The bounds in Assumption~\ref{ass:bounded_features} and the constants
$C_\kappa$ and $C_\Gamma$ in Assumption~\ref{ass:path_capacity} are understood to
hold uniformly over $P\in\mathcal P_\nu(M)$. The class is defined in terms of the
induced feature-label pair $(\Phi,Y)$: a fixed scoring model, channel construction, and
feature map determine one such joint distribution.

Let $P_\Phi$ denote the marginal distribution of $\Phi$. For any candidate score $\eta$,
define the squared prediction loss
\begin{equation*}
\|\eta-\eta_0\|_{P_\Phi}^2
=
\mathbb E\left[
\{\eta(\Phi)-\eta_0(\Phi)\}^2
\right].
\end{equation*}
Under the boundedness and radius restrictions above, the candidate and population scores
are uniformly bounded. Consequently, squared prediction loss is equivalent, up to fixed
constants, to excess population logistic risk.

The quadratic feature produces a product covariance spectrum. To see its effect, consider
the least favorable submodel used for the lower bound, in which
$\kappa_j\asymp j^{-\nu}$. The covariance eigenvalues associated with the symmetric
tensor directions $(e_j,e_k)$ are then of order
\begin{equation*}
\kappa_j\kappa_k
\asymp
(jk)^{-\nu},
\qquad 1\leq j\leq k.
\end{equation*}
The number of pairs $(j,k)$ satisfying $jk\leq m$ is of order $m\log(em)$. Hence,
if $\rho_\ell$ denotes the $\ell$th largest eigenvalue of the quadratic covariance
envelope, then
\begin{equation}
\rho_\ell
\asymp
\left\{
\frac{\ell}{\log(e\ell)}
\right\}^{-\nu}.
\label{eq:ordered_product_spectrum}
\end{equation}
In contrast to functional linear regression, which is governed by a single covariance
sequence, the quadratic model is governed by the ordered pairwise products of that
sequence. The logarithmic factor in \eqref{eq:ordered_product_spectrum} is therefore a
direct consequence of the within-path quadratic construction.

\begin{theorem}[Minimax lower bound]
\label{thm:quadratic_lower}
For fixed $\nu>1$ and $M>0$, there exists a constant $c>0$ such that, for all
sufficiently large $n$,
\begin{equation}
\inf_{\widetilde{\eta}}
\sup_{P\in\mathcal P_\nu(M)}
\mathbb E_P
\|\widetilde{\eta}-\eta_0\|_{P_\Phi}^2
\geq
c
\left\{
\frac{\log(en)}{n}
\right\}^{\nu/(\nu+1)}.
\label{eq:quadratic_lower}
\end{equation}
The infimum is over all estimators based on
$\{(Y_i,\Phi_i):i=1,\ldots,n\}$.
\end{theorem}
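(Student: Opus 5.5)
We prove the bound by reduction to a multiple-hypothesis testing problem and Fano's inequality. We restrict to a least favorable submodel whose covariance envelope realizes the ordered product spectrum \eqref{eq:ordered_product_spectrum}. In this submodel the first-order coefficient is zero, $\alpha_0=0$, and only the quadratic block $\vartheta_0^{(2)}$ varies.

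\emph{Construction of the submodel.} Take $d_v$ with constant $\varphi_{v,1}$, and let every document consist of a single relevant path, so that $\Phi^{(2)}$ reduces to $W\otimes_s W$ minus its mean. Choose $W=\sum_j \kappa_j^{1/2}\xi_j e_j$ with $\kappa_j=c_0 j^{-\nu}$ and independent Rademacher $\xi_j$; the boundedness in Assumption~\ref{ass:bounded_features} holds because $\sum_j\kappa_j<\infty$ when $\nu>1$. For $j<k$ the centered products $\xi_j\xi_k$ are uncorrelated, have unit variance, and are uncorrelated with the diagonal terms, whose centered versions vanish. Hence $\operatorname{Cov}(\Phi^{(2)})$ restricted to off-diagonal tensor directions is diagonal with eigenvalues $\kappa_j\kappa_k$. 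This matches Assumption~\ref{ass:path_capacity}, and $\Phi^{(1)}$ also satisfies it. Order the pairs $(j,k)$ by $\kappa_j\kappa_k$ to obtain directions $f_\ell$ with variances $\rho_\ell\asymp\{\ell/\log(e\ell)\}^{-\nu}$. We can realize this feature law through some channel and feature map, since the class $\mathcal P_\nu(M)$ is defined through the induced pair $(\Phi,Y)$.

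\emph{Hypotheses.} Fix $m$ and select the block $\ell\in\{m+1,\ldots,2m\}$, on which $\rho_\ell\asymp\rho_m$. By Varshamov--Gilbert there are $\omega^{(1)},\ldots,\omega^{(N)}\in\{0,1\}^m$ with $\log N\ge m/8$ and pairwise Hamming distance at least $m/8$. Set
\begin{equation*}
\vartheta^{(2)}_\omega=\delta\sum_{\ell=m+1}^{2m}\omega_{\ell-m}\rho_\ell^{-1/2}f_\ell.
\end{equation*}
Then $\eta_\omega=\langle\vartheta_\omega,\Phi\rangle=\delta\sum\omega\,\rho_\ell^{-1/2}\langle f_\ell,\Phi\rangle$. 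The standardized coordinates are orthonormal in $L^2(P_\Phi)$, so
\begin{equation*}
\|\eta_\omega-\eta_{\omega'}\|_{P_\Phi}^2=\delta^2 d_H(\omega,\omega')\gtrsim \delta^2 m.
\end{equation*}
The radius constraint $\|\vartheta_\omega\|^2\le\delta^2 m\rho_m^{-1}\le M^2$ forces $\delta^2\lesssim \rho_m/m$. Since the logistic link is Lipschitz with log-odds bounded, the per-observation Kullback--Leibler divergence satisfies $\KL\lesssim\|\eta_\omega-\eta_{\omega'}\|^2_{P_\Phi}\le\delta^2 m$. Fano's inequality then requires $n\delta^2 m\lesssim m$, that is, $\delta^2\asymp 1/n$. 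Taking $\delta^2=c/n$ and balancing with the radius constraint gives $m\asymp n\rho_m$, so
\begin{equation*}
m^{\nu+1}\asymp n\,(\log m)^{\nu}\quad\Longrightarrow\quad m\asymp n^{1/(\nu+1)}(\log n)^{\nu/(\nu+1)}.
\end{equation*}
The resulting loss is $\delta^2 m\asymp m/n\asymp\{\log(en)/n\}^{\nu/(\nu+1)}$, which is the claimed rate.

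\emph{Main obstacle.} The main difficulty is boundedness of the log odds across hypotheses. We need $|\eta_\omega(\Phi)|$ bounded almost surely, not only in $L^2$, so that the KL-to-squared-loss comparison and membership in $\mathcal P_\nu(M)$ both hold. With Rademacher $\xi$, we have $|\eta_\omega|\le\delta\sum\rho_\ell^{-1/2}|\langle f_\ell,\Phi\rangle|$, and $|\langle f_\ell,\Phi\rangle|\le\rho_\ell^{1/2}$ for each pair. This gives $|\eta_\omega|\le\delta m\asymp m/\sqrt n$, which diverges. The first fix to try is to scale the Rademacher pattern so that $\eta_\omega$ is a Rademacher chaos and to use a sub-Gaussian hypercontractivity bound to control it with high probability. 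If that does not suffice, we instead reduce the number of active pairs by using pairs $(j,k)$ that share a fixed $j$, or we truncate the hypotheses. Any such modification must preserve the $\log$ count: the logarithmic factor comes precisely from counting pairs with $jk\le m$, so the hypothesis set must keep about $m\log m$ nearly equal eigenvalues while staying uniformly bounded.
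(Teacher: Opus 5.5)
\textbf{Verdict.} Your proof is correct and uses essentially the paper's least favorable submodel: a Rademacher path feature $W=\sum_j\kappa_j^{1/2}\xi_je_j$ shared by all paths, orthonormal off-diagonal chaos coordinates $\xi_j\xi_k$, a block of ranks $\ell\in(m,2m]$ on which $\rho_\ell\asymp\rho_m$, and the balance $m\asymp n\rho_m$. Where you differ from the paper is the information-theoretic step.

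\textbf{Comparison with the paper.} The paper applies Assouad's lemma on the full hypercube $\{-1,1\}^{b_n}$, with a common coefficient $h$, $h^2=\delta/(n\rho_{b_n})$, along each direction $E_{b_n+\ell}$. It needs only the divergence between adjacent vertices, which is at most $\delta/2$, followed by Pinsker's inequality. It converts coordinatewise sign errors into prediction loss through Bessel's inequality for the orthonormal family $\rho_{b_n+\ell}^{-1/2}\ip{Q}{E_{b_n+\ell}}$, so no packing lemma is required.

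You instead use a Varshamov--Gilbert packing with Fano's inequality. This needs the uniform bound $n\max_{\omega,\omega'}\KL\lesssim\log N$ and a minimum-distance reduction in $L_2(P_\Phi)$. That reduction is legitimate for arbitrary $\widetilde\eta$ because all hypotheses share the marginal of $\Phi$. In return you obtain a lower bound on the error probability. Your standardized coefficients $\delta\rho_\ell^{-1/2}$ make every coordinate contribute exactly $\delta^2$, which is slightly tidier than the paper's unequal weights $h^2\rho_{b_n+\ell}$. Both routes give $m/n\asymp\{\log(en)/n\}^{\nu/(\nu+1)}$.

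\textbf{The ``main obstacle'' is not an obstacle.} You should drop the proposed repairs; restricting to pairs with a common index $j$ would in fact lose the logarithmic factor.
\begin{itemize}
\item The bound $\KL\le(x-y)^2/8$ for Bernoulli laws with log odds $x,y$ holds for all real $x,y$, because $\sigma(1-\sigma)\le1/4$ everywhere. The information step therefore needs no boundedness at all.
\item Uniform boundedness of the log odds follows automatically from constraints you already impose. By Cauchy--Schwarz, $|\eta_\omega(\Phi)|\le\norm{\vartheta_\omega}\norm{\Phi}$. The radius condition gives $\norm{\vartheta_\omega}\le M$, and $\norm{Q}\le2\norm{W}^2=2\sum_j\kappa_j$ holds for every realization. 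This is exactly how the paper verifies membership in $\mathcal P_\nu(M)$.
\item Even your crude estimate does not diverge. It gives $\delta m\asymp m/\sqrt n$, and $m\asymp n^{1/(\nu+1)}\{\log(en)\}^{\nu/(\nu+1)}=o(n^{1/2})$ precisely because $\nu>1$.
\end{itemize}

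\textbf{Two minor repairs.}
\begin{itemize}
\item On the block the radius bound is $\norm{\vartheta_\omega}^2\le\delta^2m\rho_{2m}^{-1}$, not $\delta^2m\rho_m^{-1}$. This is harmless since $\rho_{2m}\asymp\rho_m$.
\item You cannot choose $d_v$ or the position basis, since both are given, and $v(t;T)$ is undefined for $T=2$. Follow the paper instead. Fix $T_0>2$ and let all paths share $W$, so that $\Phi^{(2)}=\overline{\boldsymbol\varphi}_v\otimes Q$ with $\overline{\boldsymbol\varphi}_v=(T_0-1)^{-1}\sum_{t=2}^{T_0}\boldsymbol\varphi_v\{v(t;T_0)\}$. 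Then use a coefficient $\boldsymbol a_0\otimes\Theta$ with $\boldsymbol a_0^{\mathsf T}\overline{\boldsymbol\varphi}_v=1$, which exists because the position span contains the constants.
\end{itemize}
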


The proof restricts $\mathcal P_\nu(M)$ to a quadratic submodel with fixed text length,
no first-order effect, and path coefficients constructed along the pair-indexed directions
in \eqref{eq:ordered_product_spectrum}. Assouad's lemma then yields
\eqref{eq:quadratic_lower}. Thus, no estimator can attain a uniformly smaller expected prediction error over $\mathcal P_\nu(M)$. We next show that this lower bound is attained by ridge estimation using the complete
Hilbert-space features. Define
\begin{equation*}
\begin{aligned}
(\widehat\alpha_\lambda,\widehat\vartheta_\lambda)
\in
\operatorname*{arg\,min}_{|\alpha|\leq M,\,\|\vartheta\|\leq M}
\Bigg\{
&
\frac{1}{n}
\sum_{i=1}^n
\left[
\log\left\{
1+
\exp\left(
\alpha+\langle\vartheta,\Phi_i\rangle
\right)
\right\}
-
Y_i
\left(
\alpha+\langle\vartheta,\Phi_i\rangle
\right)
\right]
+
\frac{\lambda}{2}
\|\vartheta\|^2
\Bigg\}.
\end{aligned}
\end{equation*}
The fitted score is $\widehat{\eta}_\lambda(\Phi)=\widehat\alpha_\lambda+\left\langle\widehat\vartheta_\lambda,\Phi\right\rangle$.

The radius constraint is used to control the functional estimator uniformly in the
theoretical analysis. It is not imposed in the numerical implementation; the two
formulations agree whenever the unconstrained solution lies in the specified ball.

Let $\Gamma=\operatorname{Cov}(\Phi)$ denote the covariance operator of the combined
feature. Assumption~\ref{ass:path_capacity} and the ordered product spectrum imply the
effective-dimension bound
\begin{equation*}
\operatorname{tr}
\left\{
\Gamma(\Gamma+\lambda I)^{-1}
\right\}
\lesssim
\lambda^{-1/\nu}
\log(e/\lambda),
\qquad 0<\lambda\leq1.
\end{equation*}
This bound is the source of the estimation-error term in the following result.

\begin{theorem}[Upper bound for the within-path ridge estimator]
\label{thm:quadratic_upper}
There exists a constant $C<\infty$ such that, for every $0<\lambda\leq1$,
\begin{equation}
\sup_{P\in\mathcal P_\nu(M)}
\mathbb E_P
\|\widehat{\eta}_\lambda-\eta_0\|_{P_\Phi}^2
\leq
C
\left\{
\lambda
+
\frac{
\lambda^{-1/\nu}\log(e/\lambda)
}{n}
\right\}.
\label{eq:quadratic_upper}
\end{equation}
\end{theorem}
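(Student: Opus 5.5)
We obtain the bound \eqref{eq:quadratic_upper} by a standard bias--variance analysis of regularized $M$-estimation in the combined Hilbert space, with the product spectrum entering only through the effective dimension. Since $\Phi=(\Phi^{(1)},\Phi^{(2)})$ is bounded by Assumption~\ref{ass:bounded_features} and $|\alpha|,\|\vartheta\|\leq M$, every candidate score is uniformly bounded. On the resulting range the logistic link has second derivative bounded above and below. Hence the population logistic excess risk $\mathcal E(\alpha,\vartheta)$ satisfies
\[
c_1\|\eta_{\alpha,\vartheta}-\eta_0\|_{P_\Phi}^2\leq\mathcal E(\alpha,\vartheta)\leq c_2\|\eta_{\alpha,\vartheta}-\eta_0\|_{P_\Phi}^2 .
\]
Because $\Phi$ is centered, $\|\eta_{\alpha,\vartheta}-\eta_0\|^2_{P_\Phi}=(\alpha-\alpha_0)^2+\langle\vartheta-\vartheta_0,\Gamma(\vartheta-\vartheta_0)\rangle$. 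It therefore suffices to bound $\mathbb E\|\Gamma^{1/2}(\widehat\vartheta_\lambda-\vartheta_0)\|^2$ together with the intercept error.

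First, introduce the population regularized minimizer $(\alpha_\lambda,\vartheta_\lambda)$ of $\mathcal E+\tfrac{\lambda}{2}\|\vartheta\|^2$. Comparing with $\vartheta_0$ gives $c_1\|\eta_\lambda-\eta_0\|^2+\tfrac{\lambda}{2}\|\vartheta_\lambda\|^2\leq\tfrac{\lambda}{2}M^2$, so the approximation term is $O(\lambda)$. This needs no source condition, which is why the rate is only $\lambda$ and not a higher power. Strong convexity also yields the weighted bound $\|(\Gamma+\lambda I)^{1/2}(\vartheta_\lambda-\vartheta_0)\|^2\lesssim\lambda$.

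Second, control the estimation error $\widehat\vartheta_\lambda-\vartheta_\lambda$ in the $(\Gamma+\lambda I)^{1/2}$ norm. Write the first-order conditions, use local strong convexity of the logistic loss on the bounded set, and compare empirical and population Hessians. This gives
\[
\|(\Gamma+\lambda I)^{1/2}(\widehat\vartheta_\lambda-\vartheta_\lambda)\|\lesssim\|(\Gamma+\lambda I)^{-1/2}(\nabla\widehat L_n-\nabla L)(\alpha_\lambda,\vartheta_\lambda)\|.
\]
The gradient difference is an average of i.i.d.\ mean-zero bounded Hilbert-valued terms of the form $\{\sigma(\eta_\lambda(\Phi_i))-Y_i\}(1,\Phi_i)$, with the texts as i.i.d.\ units. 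Its second moment is at most
\[
n^{-1}\,\mathbb E\|(\Gamma+\lambda I)^{-1/2}\Phi\|^2=n^{-1}\operatorname{tr}\{\Gamma(\Gamma+\lambda I)^{-1}\}\lesssim n^{-1}\lambda^{-1/\nu}\log(e/\lambda),
\]
using the effective-dimension bound stated before the theorem. That bound itself follows from Assumption~\ref{ass:path_capacity}, operator monotonicity of $x\mapsto x/(x+\lambda)$, and the ordered spectrum \eqref{eq:ordered_product_spectrum}: summing $\min\{1,\rho_\ell/\lambda\}$ gives $\lambda^{-1/\nu}\log(e/\lambda)$, and the first-order block contributes only $\lambda^{-1/\nu}$. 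The intercept is a one-dimensional parameter handled in the same way.

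The main obstacle is the Hessian step. The empirical Hessian must dominate a constant multiple of $(\Gamma+\lambda I)$ in the relevant directions, with high probability and uniformly in $P$. We would first try a Bernstein inequality for self-adjoint operators applied to $(\Gamma+\lambda I)^{-1/2}(\widehat\Gamma-\Gamma)(\Gamma+\lambda I)^{-1/2}$. Its operator norm is below $1/2$ once $n\gtrsim\lambda^{-1}\log(\text{eff.\ dim})$, since $\|(\Gamma+\lambda I)^{-1/2}\Phi\|^2\lesssim K^2/\lambda$. The logistic weights are bounded below on the bounded range, so this transfers to the empirical logistic Hessian. On the complementary event, or when $n\lambda$ is small, the risk is trivially bounded by a constant, because all scores are bounded. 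In that regime $\lambda^{-1/\nu}\log(e/\lambda)/n\gtrsim1$ for $\lambda\lesssim n^{-1}$, and otherwise the failure probability $e^{-cn\lambda}$ is absorbed by $C\lambda^{-1/\nu}\log(e/\lambda)/n$. This absorption should be checked carefully. The radius constraint is harmless because $\vartheta_0$ lies in the ball, so the constrained minimizer still satisfies a basic inequality. Combining the three pieces gives \eqref{eq:quadratic_upper}, and all constants depend only on the uniform constants of $\mathcal P_\nu(M)$.
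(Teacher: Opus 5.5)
Your route (population ridge minimizer, then a Hessian comparison certified by an operator Bernstein inequality) is sound when $n\lambda\gtrsim\log(en)$. That range covers the rate-optimal choice $\lambda\asymp\{\log(en)/n\}^{\nu/(\nu+1)}$. However, it does not prove the theorem as stated, which claims the bound with a single constant for \emph{every} $0<\lambda\leq1$.

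The failure is in the absorption step you flagged. The operator Bernstein event fails with probability of order $e^{-cn\lambda}$ times an intrinsic-dimension factor. This is absorbed into $\lambda+\lambda^{-1/\nu}\log(e/\lambda)/n$ only if $n\lambda\gtrsim\log(en)$. For smaller $\lambda$ you fall back on the trivial $O(1)$ bound, asserting that $\lambda^{-1/\nu}\log(e/\lambda)/n\gtrsim1$ for $\lambda\lesssim n^{-1}$. Since $\nu>1$, this is false. At $\lambda=n^{-1}$ the term equals $n^{1/\nu-1}\log(en)\to0$, and it is of order one only for $\lambda\lesssim\{\log(en)/n\}^{\nu}$. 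The whole range $\{\log(en)/n\}^{\nu}\ll\lambda\lesssim\log(en)/n$ is therefore uncovered. There you cannot certify that the empirical Hessian dominates $c(\Gamma+\lambda I)$, yet the target bound is $o(1)$.

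The paper never compares the empirical covariance with $\Gamma$. It starts from the basic inequality against $(\alpha_0,\vartheta_0)$ and lower-bounds the population excess risk by $c_0(\widehat\delta^2+\|\widehat h\|_\Gamma^2)$. It then controls $\mathbb E\sup_{\delta,h}[(P-P_n)\Delta\ell-a(\delta^2+\|h\|_\Gamma^2)-b\|h\|^2]$ by symmetrization, a contraction inequality that allows a deterministic offset, and completing the square. This gives $C\{1+\mathcal N_\Gamma(b/a)\}/(an)$ with $b/a\asymp\lambda$. Because the quadratic offset is a population quantity, there is no high-probability event and no restriction on $n\lambda$.

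If you prefer to keep your framework, the radius constraint repairs it:
\begin{itemize}
\item Compare directly with $(\alpha_0,\vartheta_0)$, where the population gradient vanishes.
\item Certify $\widehat\Gamma+\mu I\succeq\frac12(\Gamma+\mu I)$, with the intercept coordinate included, at $\mu=\max\{\lambda,C\log(en)/n\}$.
\item Use $\|\widehat\vartheta_\lambda-\vartheta_0\|\leq2M$ to pay an additive $O(\mu M^2)$.
\end{itemize}
When $\lambda<C\log(en)/n$, this extra $\log(en)/n$ is at most a constant times $\log(e/\lambda)/n$, so it is absorbed into the estimation term. Also $\mathcal N_\Gamma(\mu)\leq\mathcal N_\Gamma(\lambda)$, and the failure probability is now $O(n^{-1})$ for $C$ large.
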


The two terms in \eqref{eq:quadratic_upper} are the regularization bias and the
estimation error, respectively. Choosing $\lambda
\asymp
\left\{
\frac{\log(en)}{n}
\right\}^{\nu/(\nu+1)}$ balances them and implies
\begin{equation*}
\sup_{P\in\mathcal P_\nu(M)}
\mathbb E_P
\|\widehat{\eta}_\lambda-\eta_0\|_{P_\Phi}^2
\leq
C
\left\{
\frac{\log(en)}{n}
\right\}^{\nu/(\nu+1)}.
\end{equation*}
Together, Theorems~\ref{thm:quadratic_lower} and~\ref{thm:quadratic_upper} establish
the minimax prediction rate over $\mathcal P_\nu(M)$. The matching logarithmic factor is induced by the pair-indexed quadratic spectrum and cannot be removed uniformly over this class. 

\subsection{Finite path sieves and squared projections}
\label{subsec:finite_sieve_theory}

The preceding rate is derived for the complete path-feature space, whereas the estimator
in Section~\ref{subsec:estimation} uses a $d$-dimensional path sieve and $R$ squared
projections. Let $\eta_{0,d}$ be the population score after projection onto the finite
path sieve, and write $\epsilon_d^2=\|\eta_{0,d}-\eta_0\|_{P_\Phi}^2$ for its approximation
error. An unrestricted quadratic model in $d$ path coordinates would contain
$d_vd(d+1)/2$ coefficients. The squared-projection representation uses $d_vR$
coefficients and preserves the within-path construction described in
Section~\ref{subsec:estimation}.

\begin{theorem}[Finite-dimensional approximation]
\label{thm:finite_projection}
Suppose Assumptions~\ref{ass:bounded_features}--\ref{ass:path_capacity} hold, with
$d_v$ fixed. Assume
also that the linear predictors in the radius-constrained finite-sieve classes are
uniformly bounded by a constant independent of $d$ and $R$. Fix
$M_1\geq\sqrt{3}M$, and let
$\widehat{\eta}_{d,R,\lambda}$ be the ridge estimator over
$|\alpha|\leq M_1$ and $\|\boldsymbol{\beta}\|\leq M_1$, based on the $d$-dimensional path
sieve and $R$ normalized Gaussian squared projections generated independently of the
labeled sample. For $0<\lambda\leq1$,
\begin{equation*}
\mathbb E_{\mathcal D,\mathcal R}
\|\widehat{\eta}_{d,R,\lambda}-\eta_0\|_{P_\Phi}^2
\leq
C
\left\{
\epsilon_d^2+\lambda+
\frac{\lambda^{-1/\nu}\log(e/\lambda)}{n}+\frac{1}{R}
\right\}.
\end{equation*}
\end{theorem}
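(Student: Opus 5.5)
The bound follows from an oracle-inequality decomposition that separates three sources of error: the finite path sieve, the random squared projections, and ridge estimation within the resulting finite class. Let $\mathcal F_{d,R}$ denote the radius-constrained class of scores $\alpha+\boldsymbol\beta^{\mathsf T}\boldsymbol Z$ built from the $d$-dimensional path sieve and the $R$ squared projections. Conditionally on the projection directions $\mathcal R$, this is a fixed linear class of bounded predictors. The plan is as follows.

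\textbf{Step 1: approximation target.} Take the population score $\eta_{0,d}$ and construct an element $\eta^\ast_{d,R}\in\mathcal F_{d,R}$ with
\[
\mathbb E_{\mathcal R}\|\eta^\ast_{d,R}-\eta_{0,d}\|_{P_\Phi}^2\lesssim 1/R .
\]
For each position block $j$, write the projected quadratic coefficient $\boldsymbol\Theta_j$ as a $d\times d$ symmetric matrix. Use the unbiased random-feature representation
\[
\mathbb E\bigl[(\boldsymbol a^{\mathsf T}\boldsymbol\Theta_j\boldsymbol a)\,\boldsymbol a\boldsymbol a^{\mathsf T}\bigr]
=c_1\boldsymbol\Theta_j+c_2\operatorname{tr}(\boldsymbol\Theta_j)\,I
\]
for normalized Gaussian (approximately spherical) directions, with constants $c_1,c_2$. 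The identity term is absorbed through the linear span: $\boldsymbol w^{\mathsf T}\boldsymbol w$ is itself recovered by averaging the squared projections, and the constant in the $v$-basis handles the offset. Then set
\[
\gamma_{jr}\propto R^{-1/2}\,\boldsymbol a_r^{\mathsf T}\boldsymbol\Theta_j\boldsymbol a_r .
\]
The resulting predictor is an average of $R$ i.i.d.\ terms with the correct mean. Its variance is bounded by $R^{-1}\,\mathbb E\{(\boldsymbol a^{\mathsf T}\boldsymbol\Theta\boldsymbol a)^2(\boldsymbol a^{\mathsf T}\boldsymbol w)^4\}$, and the boundedness assumption for the finite-sieve classes together with Assumption~\ref{ass:bounded_features} makes this $O(1/R)$.

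The main obstacle is keeping $\|\boldsymbol\gamma\|\le M_1$ free of $d$. With the $R^{-1/2}$ normalization, $\|\boldsymbol\gamma\|^2\approx R^{-1}\sum_r(\boldsymbol a_r^{\mathsf T}\boldsymbol\Theta\boldsymbol a_r)^2$, which concentrates near $\mathbb E(\boldsymbol a^{\mathsf T}\boldsymbol\Theta\boldsymbol a)^2 = 2\|\boldsymbol\Theta\|_F^2+(\operatorname{tr}\boldsymbol\Theta)^2$. This is where the factor $M_1\ge\sqrt3 M$ enters: it accommodates the bound $\le 3\|\boldsymbol\Theta\|_{\mathrm{op/trace}}^2$. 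On the complementary event where the norm exceeds $M_1$, I would truncate and pay a probability that is also $O(1/R)$ via Chebyshev, using fourth moments of Gaussian quadratic forms.

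\textbf{Step 2: ridge within the finite class.} Conditionally on $\mathcal R$, repeat the argument of Theorem~\ref{thm:quadratic_upper} with $\eta^\ast_{d,R}$ replacing $\eta_0$, i.e.\ as a misspecified oracle inequality. Because the class is bounded, excess logistic risk and squared $P_\Phi$-loss are equivalent up to constants. Ridge then gives
\[
\mathbb E_{\mathcal D}\|\widehat\eta_{d,R,\lambda}-\eta_0\|^2\lesssim \|\eta^\ast_{d,R}-\eta_0\|^2+\lambda+\frac{\mathrm{df}(\lambda)}{n}.
\]
For the effective dimension, $\operatorname{Cov}(\boldsymbol Z)$ is a compression of $\Gamma$ under a bounded linear map (sieve projection followed by squared projection). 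Conditionally on $\mathcal R$ with controlled operator norm, its trace-resolvent is dominated by that of $\Gamma$ up to constants. This keeps $\mathrm{df}(\lambda)\lesssim\lambda^{-1/\nu}\log(e/\lambda)$, and I would verify it using the monotonicity of $x\mapsto x/(x+\lambda)$ together with the comparison $A\preceq cB\Rightarrow\operatorname{tr}A(A+\lambda)^{-1}\le c'\operatorname{tr}B(B+\lambda)^{-1}$.

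\textbf{Step 3: assemble.} Combine the two steps through
\[
\|\eta^\ast_{d,R}-\eta_0\|^2\le 2\epsilon_d^2+2\|\eta^\ast_{d,R}-\eta_{0,d}\|^2 ,
\]
then take $\mathbb E_{\mathcal R}$. This yields $\epsilon_d^2+\lambda+\lambda^{-1/\nu}\log(e/\lambda)/n+1/R$, with constants uniform over $\mathcal P_\nu(M)$.
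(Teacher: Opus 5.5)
Your architecture matches the paper's proof: an $O(1/R)$ random-feature comparator with Chebyshev truncation, a misspecified ridge oracle inequality applied conditionally on the directions, and an effective-dimension bound for the projected covariance. However, both of your dimension-sensitive steps lose a factor of order $d$. The constant must be free of $d$, because the resolution conditions send $d\to\infty$.

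The common cause is the trace direction of the spherical fourth-moment operator $\mathcal G_d(C)=\mathbb E\{(\boldsymbol a^{\mathsf T}C\boldsymbol a)\boldsymbol a\boldsymbol a^{\mathsf T}\}=a_d\{2C+\operatorname{tr}(C)I_d\}$, $a_d=d/(d+2)$. This identity is exact, since the normalized directions are uniform on the sphere of radius $\sqrt d$, and $\mathcal G_d$ has eigenvalue $d$ on $I_d$.

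In Step 1 you bound the norm of the uncorrected coefficients $\gamma_{jr}\propto R^{-1/2}\boldsymbol a_r^{\mathsf T}\Theta_j\boldsymbol a_r$ via $2\|\Theta\|_F^2+(\operatorname{tr}\Theta)^2\le 3\|\Theta\|^2$. That inequality holds only in trace norm, while the coefficient ball is Hilbert--Schmidt. Take $\Theta_{d,j}=MI_d/\sqrt d$ in one block. Then $\boldsymbol a^{\mathsf T}\Theta_{d,j}\boldsymbol a=M\sqrt d$ for every direction, so $\|\boldsymbol\gamma\|^2$ is deterministically of order $dM^2$. For large $d$ your comparator therefore leaves the $M_1$-ball with probability one, not $O(1/R)$. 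The Monte Carlo summand $M\sqrt d\,\boldsymbol a^{\mathsf T}Q_j\boldsymbol a$ also has variance of order $d\|Q_j\|_F^2$.

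The absorption of the identity term must be built into each coefficient. Take $\gamma_{jr}=R^{-1/2}\boldsymbol a_r^{\mathsf T}A_j\boldsymbol a_r$ with $A_j=\mathcal G_d^{-1}(\Theta_j)=\frac{d+2}{2d}\{\Theta_j-\frac{\operatorname{tr}\Theta_j}{d+2}I_d\}$. This choice has three consequences:
\begin{itemize}
\item it is exactly unbiased;
\item it gives $\mathbb E\|\boldsymbol\gamma\|^2=\sum_j\langle\mathcal G_d^{-1}\Theta_j,\Theta_j\rangle_F\le\frac32\|\Theta\|^2$, which is the actual source of $\sqrt3M$ (mean at most $\frac32M^2$, truncation at $3M^2$);
\item its Monte Carlo variance is at most $CR^{-1}\|\Theta\|^2\|Q\|_{\mathcal G}^2$, where $\|Q\|_{\mathcal G}^2=a_d\sum_j\{2\|Q_j\|_F^2+\operatorname{tr}(Q_j)^2\}$.
\end{itemize}
The quantity $\|Q\|_{\mathcal G}^2$ is bounded free of $d$, because each $Q_j$ averages rank-one matrices $\boldsymbol w\boldsymbol w^{\mathsf T}$ of bounded trace. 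The paper combines this with a dimension-free bound $\mathbb E Z^4\le C(\mathbb E Z^2)^2$ for spherical quadratic forms.

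The same eigenvalue breaks your effective-dimension argument in Step 2. The squared-projection map does not have operator norm controlled uniformly in $d$. For every realization, $\|\mathcal R_R(I_d/\sqrt d)\|^2=R^{-1}\sum_r(\|\boldsymbol a_r\|^2/\sqrt d)^2=d$, while $\|I_d/\sqrt d\|_F=1$. Domination $\Gamma_{d,R}\preceq\|\mathcal T_R\|_{\mathrm{op}}^2\Gamma_d$ therefore yields only $\mathcal N_{\Gamma_d}(\lambda/d)$. This inflates the estimation term by a factor of order $d^{1/\nu}$ and destroys the rate at $d\asymp\{n/\log(en)\}^{1/(\nu+1)}$.

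The paper avoids this in three moves:
\begin{itemize}
\item the nonzero eigenvalues of $\Gamma_{d,R}$ are those of $B_R=\Gamma_d^{1/2}\mathcal T_R^*\mathcal T_R\Gamma_d^{1/2}$;
\item concavity of $B\mapsto\mathcal N_B(\lambda)$ and Jensen replace $\mathcal T_R^*\mathcal T_R$ by its mean $I_U\oplus\mathcal G_d^{\oplus d_v}=(I_U\oplus 2a_dI_Q)+a_d\mathcal P$, where $\mathcal P$ maps each block to $\operatorname{tr}(C)I_d$ and has rank $d_v$;
\item rank-$d_v$ eigenvalue interlacing then gives $\mathbb E_{\mathcal R}\mathcal N_{\Gamma_{d,R}}(\lambda)\le\mathcal N_{\Gamma_d}(\lambda/2)+d_v$.
\end{itemize}
The large trace eigenvalue thus costs only an additive $d_v$, and no concentration of $\mathcal T_R^*\mathcal T_R$ for fixed directions is needed.
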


Here $\mathbb E_{\mathcal D,\mathcal R}$ is over the labeled training sample and the independently generated projection directions. The four terms are, respectively, the path-sieve approximation error, regularization bias, estimation error, and squared-projection approximation error. The last term concerns the scalar quadratic score in prediction loss; it does not require approximation of an unrestricted coefficient matrix in Frobenius norm. The more general effective-dimension bound and the random-projection argument are presented in the Supplement.

For the oracle spectral sieve spanned by the population covariance directions $e_1,\ldots,e_d$, the approximation error satisfies $\epsilon_d^2\leq Cd^{-\nu}$. Consequently, this finite estimator retains the minimax rate when
\begin{equation}
d
\gtrsim
\left\{
\frac{n}{\log(en)}
\right\}^{1/(\nu+1)},
\qquad
R
\gtrsim
\left\{
\frac{n}{\log(en)}
\right\}^{\nu/(\nu+1)},
\qquad
\lambda
\asymp
\left\{
\frac{\log(en)}{n}
\right\}^{\nu/(\nu+1)}.
\label{eq:finite_resolution_orders}
\end{equation}
Under these choices, the expected prediction error is of order $\{\log(en)/n\}^{\nu/(\nu+1)}$. The spline and random channel bases used in the numerical studies are not assumed to coincide with these population covariance directions. For those and other basis families, Theorem~\ref{thm:finite_projection} applies with their approximation error $\epsilon_d^2$ left explicit.

\subsection{Implications for AUC}
\label{subsec:auc_theory}

The previous results concern prediction of the conditional log odds, while detection is
evaluated by ranking. Let $q(\Phi)=\mathbb P(Y=1\mid\Phi)$ and let $\Phi'$ be an
independent copy of $\Phi$. Since $\eta_0(\Phi)=\operatorname{logit}\{q(\Phi)\}$,
$\eta_0$ induces the optimal population ranking among scores based on $\Phi$. The
following condition controls the probability of nearly tied pairs.

\begin{assumption}[Pairwise margin condition]
\label{ass:auc_margin}
There exist constants $C_m<\infty$ and $\tau\geq0$ such that
\begin{equation*}
\mathbb P
\left\{
0<
|q(\Phi)-q(\Phi')|
\leq t
\right\}
\leq
C_m t^\tau,
\qquad
t>0.
\end{equation*}
In addition, there is a constant \(c_Y>0\) such that
\(\mathbb P(Y=y)\geq c_Y\) for \(y\in\{0,1\}\).
\end{assumption}

The condition permits class overlap; it holds with $\tau=0$ without further restrictions
and with $\tau=1$ when $q(\Phi)$ has a bounded density. Such conditions are standard in
bipartite ranking \citep{clemencon2008ranking}.

\begin{proposition}[AUC convergence]
\label{prop:auc_convergence}
Suppose Assumption~\ref{ass:auc_margin} holds. Then any score estimator
$\widehat{\eta}$ based on the training sample satisfies
\begin{equation*}
\operatorname{AUC}(\eta_0)
-
\mathbb E
\left\{
\operatorname{AUC}(\widehat{\eta})
\right\}
\leq
C
\left[
\mathbb E
\|\widehat{\eta}-\eta_0\|_{P_\Phi}^2
\right]^{(\tau+1)/(\tau+2)}.
\end{equation*}
\end{proposition}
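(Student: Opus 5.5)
We work conditionally on the training sample, so $\widehat\eta$ is a fixed measurable score, and take expectations at the end. Write $\pi_1=\mathbb P(Y=1)$, $\pi_0=1-\pi_1$, both at least $c_Y$ by Assumption~\ref{ass:auc_margin}. The standard representation of AUC with ties counted as $1/2$ is
\begin{equation*}
\operatorname{AUC}(\eta)=\frac{1}{\pi_0\pi_1}\,\mathbb E\Bigl[q(\Phi)\{1-q(\Phi')\}\Bigl\{\mathbf 1\{\eta(\Phi)>\eta(\Phi')\}+\tfrac12\mathbf 1\{\eta(\Phi)=\eta(\Phi')\}\Bigr\}\Bigr],
\end{equation*}
where $\Phi,\Phi'$ are independent copies. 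Symmetrizing over $(\Phi,\Phi')$ gives
\begin{equation*}
\operatorname{AUC}(\eta_0)-\operatorname{AUC}(\widehat\eta)=\frac{1}{\pi_0\pi_1}\,\mathbb E\Bigl[|q(\Phi)-q(\Phi')|\,\mathbf 1\{\widehat\eta \text{ misorders or ties the pair}\}\Bigr]
\end{equation*}
up to a factor $1/2$ on ties. The identity uses $q(\Phi)\{1-q(\Phi')\}-q(\Phi')\{1-q(\Phi)\}=q(\Phi)-q(\Phi')$ and the fact that $\eta_0=\operatorname{logit}q$ orders pairs exactly as $q$ does. Hence the regret is bounded by $c_Y^{-2}\,\mathbb E[\Delta\,\mathbf 1\{B\}]$, with $\Delta=|q(\Phi)-q(\Phi')|$ and $B$ the event that $\widehat\eta$ disagrees with, or ties where there is no tie in, the ordering of $\eta_0$. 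Pairs with $\Delta=0$ contribute nothing, which is why the margin condition only concerns $0<\Delta$.

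On $B$ with $\Delta>0$, the sign of $\widehat\eta(\Phi)-\widehat\eta(\Phi')$ fails to match the sign of $\eta_0(\Phi)-\eta_0(\Phi')$. This forces
\begin{equation*}
|\eta_0(\Phi)-\eta_0(\Phi')|\le|\widehat\eta(\Phi)-\eta_0(\Phi)|+|\widehat\eta(\Phi')-\eta_0(\Phi')|=:D.
\end{equation*}
Since the logistic function is $1/4$-Lipschitz, $\Delta\le|\eta_0(\Phi)-\eta_0(\Phi')|/4\le D/4$. So the regret is at most a constant times $\mathbb E[\Delta\,\mathbf 1\{0<\Delta\le D/4\}]$.

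Now split at a threshold $t>0$:
\begin{equation*}
\mathbb E[\Delta\,\mathbf 1\{0<\Delta\le D/4\}]\le t\,\mathbb P(0<\Delta\le t)+\mathbb E[(D/4)\mathbf 1\{D/4>t\}]\le C_m t^{\tau+1}+\frac{\mathbb E D^2}{16\,t}.
\end{equation*}
Here Markov's inequality gives $D\mathbf 1\{D>4t\}\le D^2/(4t)$. By independence and $(a+b)^2\le 2a^2+2b^2$, we have $\mathbb E D^2\le 4\varepsilon^2$ with $\varepsilon^2=\|\widehat\eta-\eta_0\|_{P_\Phi}^2$. Choosing $t=\varepsilon^{2/(\tau+2)}$ balances the two terms and gives a bound $C\,\varepsilon^{2(\tau+1)/(\tau+2)}=C(\varepsilon^2)^{(\tau+1)/(\tau+2)}$. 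This matches the stated exponent.

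Finally, take expectation over the training sample. The map $x\mapsto x^{(\tau+1)/(\tau+2)}$ is concave because its exponent lies in $(0,1]$, so Jensen's inequality gives $\mathbb E[(\varepsilon^2)^{(\tau+1)/(\tau+2)}]\le(\mathbb E\varepsilon^2)^{(\tau+1)/(\tau+2)}$.

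The main obstacle is the first step: writing the AUC regret exactly as a $\Delta$-weighted disagreement probability. Ties in $\widehat\eta$ or in $q$ need care there, and the class-prior normalization is where the lower bound $c_Y$ enters. Once that representation is in place, the remainder is a routine margin-plus-Markov tradeoff.
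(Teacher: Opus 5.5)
Your proof is correct and follows the paper's argument: the symmetrized pairwise AUC representation, the bound on the regret by a $\Delta$-weighted disagreement probability, the triangle inequality on misordered or tied pairs, the margin-plus-Markov split balanced at $t=(\varepsilon^2)^{1/(\tau+2)}$, and Jensen over the training sample. The only differences are minor. The paper applies the triangle inequality to $\widehat q=\sigma(\widehat\eta)$ on the probability scale and uses the $1/4$-Lipschitz property at the end, whereas you work on the log-odds scale and apply Lipschitz to $\Delta$ directly. Also, your symmetrized identity should carry $1/(2\pi_0\pi_1)$ rather than $1/(\pi_0\pi_1)$, so your ``$=$'' should read ``$\le$''; this is harmless for the upper bound.
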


Together with Theorem~\ref{thm:finite_projection}, the proposition implies that the
expected AUC converges to $\operatorname{AUC}(\eta_0)$ at rate
$\{\log(en)/n\}^{\nu(\tau+1)/\{(\nu+1)(\tau+2)\}}$ under
\eqref{eq:finite_resolution_orders}. The next section provides detailed numerical results on AUC.

\section{Numerical Studies}
\label{sec:numerical}

The numerical studies investigate three aspects of the proposed method: (1) its detection performance across scoring models and applications, (2) the contributions of shorter-context information and the within-path second-order augmentation, and (3) its sensitivity to labeled sample size and numerical resolution. We assess overall performance through comparisons with existing procedures and study the sources of detection information through restricted fits. The main paper also examines the effect of labeled sample size. Simulation, same-supervision comparisons, transfer results, and sensitivity to the numerical specification are reported in the Supplement.

Unless otherwise stated, all numerical studies use \(G=24\) equally spaced context-scale points. We use piecewise-linear spline bases with \(d_u=12\) functions over context scale and \(d_v=6\) functions over relative target position. The channel basis consists of a constant function, the five likelihood-derived channels, and 96 random trigonometric functions associated with Gaussian radial basis kernels at bandwidths \(0.5\), \(1\), and \(2\), giving \(d_h=102\). LAR-2 uses \(R=1024\) fixed projection directions. Both estimators are fitted with ridge penalty \(\lambda=0.01\). These values define a common working specification across applications and scoring models rather than being selected separately for each reported comparison. The individual choices are varied in the sensitivity analysis reported in the Supplementary Material.

The primary comparisons in Sections~\ref{sec:wikimia} and~\ref{sec:aidetect} use AUC on held-out texts. We use 20 repeated 80/20 train--test splits, refitting LAR within each split and evaluating the stored comparison scores on the same test texts. WikiMIA splits are stratified jointly by label and text-length group. In the generated-text data, each human passage and its model-generated counterpart remain in the same split, with domains balanced across splits. Entries in Tables~\ref{tab:wikimia} and~\ref{tab:aidetect} report AUCs with descriptive uncertainty summaries in parentheses. The subsequent diagnostic analyses state their evaluation protocols separately. Partial AUC and true-positive rates at selected false-positive rates are reported in the Supplement.

\subsection{WikiMIA membership inference}
\label{sec:wikimia}

WikiMIA is a widely used benchmark for pretraining-data membership inference \citep{shi2024detecting,zhang2025minkplusplus,xie2024recall,zhang2025finetuning}. It assigns proxy membership labels to English Wikipedia event articles based on their creation dates. Articles created well before corpus collection are plausible members, whereas those created after the relevant data cutoff are necessarily absent. WikiMIA approximates this distinction by labeling articles created before 2017 as members and those created after January 1, 2023, as nonmembers. The LLM models were selected to be compatible with this temporal split. Pythia was trained on the Pile, assembled in 2020, and released in April 2023; Falcon-7B was trained in March 2023 and released in May 2023; and LLaMA 2 was released in July 2023 with a reported pretraining-data cutoff of September 2022 \citep{gao2020pile,biderman2023pythia,tii2023falcon7b,touvron2023llama2}. 

The dataset contains segments of 32, 64, 128, and 256 words. These four length groups contain 776, 542, 250, and 82 samples, respectively. Pooling them results in the 1,650 samples analyzed here, including 861 members and 789 nonmembers. 
We compare the proposed method with Loss, Zlib, Min-K\%, Min-K\%++, Ref, ReCaLL, Lowercase, PAC, and DC-PDD \citep{carlini2021extracting,shi2024detecting,zhang2025minkplusplus,xie2024recall,ye2024pac}. Lowercase, PAC, and DC-PDD require additional evaluations or perturbations of the target text.

\begin{table}[ht!]
\centering
\caption{Membership inference on 1,650 WikiMIA texts. Entries are mean test AUCs over 20 label- and length-stratified 80/20 splits, with standard deviations across splits in parentheses. LAR is refitted within each split; the comparison scores are evaluated on the same test texts. The largest mean AUC in each column is shown in bold.}
\label{tab:wikimia}
\footnotesize
\setlength{\tabcolsep}{3pt}
\renewcommand{\arraystretch}{0.8}
\begin{tabular}{lccccc}
\toprule
Method & Pythia-1B & Pythia-6.9B & Pythia-12B & LLaMA-2-7B & Falcon-7B \\
\midrule
\multicolumn{6}{l}{\textit{Direct likelihood scores}} \\
Loss & 0.588 (0.022) & 0.634 (0.024) & 0.647 (0.025) & 0.531 (0.028) & 0.578 (0.030) \\
Zlib & 0.616 (0.019) & 0.659 (0.019) & 0.669 (0.022) & 0.546 (0.026) & 0.595 (0.026) \\
Min-K\% & 0.596 (0.017) & 0.667 (0.022) & 0.688 (0.029) & 0.528 (0.028) & 0.579 (0.029) \\
Min-K\%++ & 0.625 (0.021) & 0.707 (0.021) & 0.725 (0.029) & 0.614 (0.031) & 0.780 (0.021) \\
\midrule
\multicolumn{6}{l}{\textit{Calibration- and perturbation-assisted scores}} \\
Ref & 0.581 (0.034) & 0.636 (0.037) & 0.641 (0.031) & 0.554 (0.034) & 0.556 (0.035) \\
ReCaLL & 0.765 (0.023) & 0.841 (0.018) & 0.871 (0.018) & 0.769 (0.022) & 0.836 (0.024) \\
Lowercase & 0.553 (0.025) & 0.591 (0.028) & 0.616 (0.024) & 0.497 (0.020) & 0.551 (0.028) \\
PAC & 0.632 (0.015) & 0.705 (0.019) & 0.705 (0.020) & 0.544 (0.029) & 0.629 (0.025) \\
DC-PDD & 0.565 (0.020) & 0.583 (0.021) & 0.595 (0.025) & 0.559 (0.029) & 0.586 (0.032) \\
\midrule
\multicolumn{6}{l}{\textit{Likelihood-array estimators}} \\
LAR-1 & 0.914 (0.012) & 0.926 (0.013) & 0.934 (0.013) & 0.951 (0.013) & 0.960 (0.009) \\
LAR-2 & \textbf{0.953} (0.009) & \textbf{0.958} (0.010) & \textbf{0.966} (0.007) & \textbf{0.976} (0.008) & \textbf{0.983} (0.007) \\
\bottomrule
\end{tabular}
\end{table}

Among the comparison procedures, ReCaLL records the largest mean AUC for each of the five scoring models, with values between \(0.765\) and \(0.871\). Both LAR estimators obtain higher mean AUCs than every comparison procedure across all five models. LAR-2 improves on LAR-1 by \(0.023\)--\(0.039\). The consistent increase indicates that the first-order functional does not capture all of the benchmark separation available in the likelihood array; the restricted analyses below examine the contribution of the within-path second-order augmentation.

As a comparison with another supervised procedure, Fine-tuned Score Deviation reports a maximum AUC of \(0.90\) for Pythia-6.9B, compared with \(0.958\) for LAR-2 in Table~\ref{tab:wikimia} \citep{zhang2025finetuning}. The computational requirements also differ: Fine-tuned Score Deviation uses known nonmembers to perform LoRA fine-tuning of the audited language model, whereas the supervised stage of LAR fits a ridge regression once the likelihood arrays are available.

\subsection{AI-generated text detection}
\label{sec:aidetect}

The generated-text study uses the GPT-3-to-4 benchmark assembled for the evaluation of AdaDetectGPT \citep{zhou2025adadetectgpt}. We consider four domains that differ substantially in genre: fiction from WritingPrompts, news articles from XSum, high-school and university essays, and consumer reviews from Yelp. From each domain, we use 150 human passages and 150 passages generated by GPT-4o-2024-08-06, giving 600 matched pairs and 1,200 texts. The detection label distinguishes a GPT-4o passage from its human counterpart. The Supplement repeats the evaluation with Claude-3.5-Haiku and Gemini-2.5-Flash as two additional generators.

Each generated passage is constructed from the same source as the human passage with which it is paired. GPT-4o receives the first 120 tokens of the human text and a domain-specific instruction to continue it, with a requested length of approximately 150 words for fiction, news, and reviews and 200 words for essays. Generation uses temperature $0.8$ and at most 200 output tokens. The two passages in each pair are then truncated to the shorter word count. They consequently share their domain, opening context, and length, limiting the extent to which the detection task can be resolved from topic or document length alone.

The scoring models are DeepSeek-R1-Distill-Qwen-7B (DeepSeek-7B), OLMo-7B, Qwen3-4B, Qwen3-8B, and Qwen3-14B. All five are open-weight models and are distinct from the GPT-4o generator. The analysis therefore evaluates whether the likelihood-array representation remains informative when its probabilities are obtained from an independently chosen scoring model rather than from the generator itself.

\begin{table}[ht!]
\centering
\caption{AI-generated text detection on 1,200 texts from four domains. Entries are AUCs, with descriptive uncertainty summaries in parentheses. The repeated-split results use 20 domain-stratified, source-pair-grouped 80/20 splits. The largest AUC in each column is shown in bold. RADAR and StatDetectLLM each span the five scoring-model columns.}
\label{tab:aidetect}
\footnotesize
\setlength{\tabcolsep}{3pt}
\renewcommand{\arraystretch}{0.8}
\def\dashunit{\rule[0.55ex]{4pt}{0.3pt}\hspace{2pt}}
\def\dashsegment{\hbox{\dashunit\dashunit\dashunit\dashunit\dashunit\dashunit\dashunit\dashunit\dashunit\dashunit\dashunit\dashunit\dashunit}}
\begin{tabular}{lccccc}
\toprule
Method & DeepSeek-7B & OLMo-7B & Qwen3-4B & Qwen3-8B & Qwen3-14B \\
\midrule
\multicolumn{6}{l}{\textit{Scoring-model detection scores}} \\
Likelihood & 0.777 (0.020) & 0.644 (0.022) & 0.809 (0.020) & 0.821 (0.019) & 0.824 (0.020) \\
Entropy & 0.664 (0.016) & 0.531 (0.022) & 0.667 (0.016) & 0.689 (0.018) & 0.685 (0.019) \\
Rank & 0.627 (0.015) & 0.607 (0.009) & 0.628 (0.012) & 0.628 (0.014) & 0.613 (0.010) \\
Log-rank & 0.752 (0.020) & 0.644 (0.022) & 0.793 (0.019) & 0.801 (0.018) & 0.802 (0.019) \\
LRR & 0.608 (0.016) & 0.617 (0.018) & 0.626 (0.015) & 0.657 (0.016) & 0.651 (0.016) \\
Fast-DetectGPT & 0.730 (0.019) & 0.773 (0.019) & 0.664 (0.009) & 0.666 (0.012) & 0.706 (0.011) \\
\midrule
\multicolumn{6}{l}{\textit{Supervised detectors}} \\
AdaDetectGPT & 0.711 (0.015) & 0.773 (0.013) & 0.660 (0.016) & 0.662 (0.016) & 0.702 (0.015) \\
RADAR & \multicolumn{2}{c}{\strut\dashsegment} & 0.839 (0.019) & \multicolumn{2}{c}{\strut\dashsegment} \\
StatDetectLLM & \multicolumn{2}{c}{\strut\dashsegment} & 0.980 (0.005) & \multicolumn{2}{c}{\strut\dashsegment} \\
\midrule
\multicolumn{6}{l}{\textit{Likelihood-array regression}} \\
LAR-1 & 0.985 (0.004) & 0.994 (0.002) & \textbf{0.995} (0.003) & \textbf{0.995} (0.003) & \textbf{0.996} (0.002) \\
LAR-2 & \textbf{0.987} (0.004) & \textbf{0.995} (0.003) & 0.994 (0.004) & 0.994 (0.003) & 0.991 (0.006) \\
\bottomrule
\end{tabular}
\end{table}

The comparison procedures include likelihood, entropy, rank, log-rank, LRR, Fast-DetectGPT, and AdaDetectGPT \citep{gehrmann2019gltr,su2023detectllm,mitchell2023detectgpt,bao2024fastdetectgpt,zhou2025adadetectgpt}. We also include two text-only detectors trained on external labeled data: RADAR and StatDetectLLM \citep{hu2023radar,zhou2026performanceguarantees}. StatDetectLLM is trained on a large external corpus and uses LoRA to fine-tune the one-billion-parameter Gemma 3 model. We evaluate it through the authors' public implementation, applying its General, News, Academia, and User Review settings to the writing, XSum, essay, and Yelp domains, respectively. The implementation returns a p-value for each text, and we use its negative as the ranking score for computing AUC. In contrast, LAR does not fine-tune a language model: its supervised stage fits only the regularized regression after the likelihood arrays have been constructed, although array construction requires access to token-level logits under arbitrary left-context windows.

Among the externally trained detectors, RADAR records a mean AUC of \(0.839\), while StatDetectLLM records \(0.980\). LAR-1 and LAR-2 obtain higher mean AUCs than every comparison procedure for each scoring model. LAR-1 AUCs range from \(0.985\) to \(0.996\), while the corresponding LAR-2 AUCs differ by at most $0.005$. Although the first-order model is nested within the second-order model at the population level, the second-order estimator uses the larger product-spectrum feature class studied in Section~\ref{sec:theory} and therefore incurs additional finite-sample estimation and projection error. When the within-path second-order signal is weak, this additional cost can offset its small approximation gain. The results are therefore consistent with generated-text separation being captured primarily by the first-order functional, with little additional information from the second-order augmentation.

\subsection{Sources of detection information}
\label{sec:mechanism_results}

We examine two possible sources of detection information: the likelihood-derived channels and the location of information within the array. The analysis uses WikiMIA with Pythia-1B and AI-generated text detection with DeepSeek-7B. We first add the five channels cumulatively in four groups to assess the contribution of different likelihood quantities. We then fit both LAR-1 and LAR-2 to the full-context cells, the shorter-context cells, and the complete array. These restrictions assess whether shorter-context cells contribute beyond full-context likelihood and whether this pattern persists after the within-path second-order augmentation is added. These diagnostics use five-fold out-of-fold predictions, with the same basis and ridge specification across fits.

\begin{figure}[ht!]
\centering
\begingroup
\small
\renewcommand{\baselinestretch}{1}\selectfont
\includegraphics[width=0.90\textwidth]{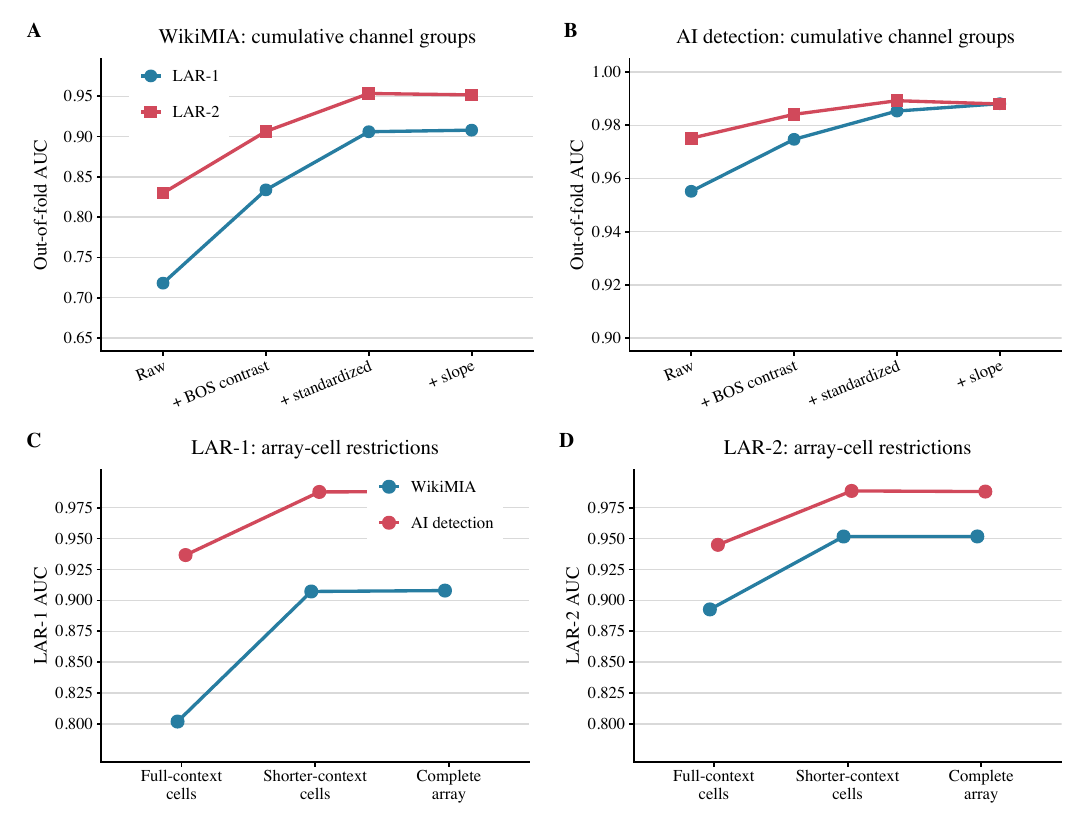}
\caption{Sources of detection information in the likelihood array. Panels A and B report five-fold out-of-fold AUC as the likelihood-derived channels enter LAR-1 and LAR-2 cumulatively. Panels C and D compare LAR-1 and LAR-2, respectively, when fitted to the full-context cells, the shorter-context cells, and the complete array. The two settings are WikiMIA with Pythia-1B and AI-generated text detection with DeepSeek-7B.}
\label{fig:sieve_mechanism}
\endgroup
\end{figure}

In the cumulative sequence, the largest increase occurs after adding the BOS contrast for WikiMIA and after adding the standardized channels for generated-text detection. Adding the local context-response channel changes AUC little in either setting. For LAR-1, the shorter-context cells record AUCs of \(0.907\) in WikiMIA and \(0.988\) in generated-text detection, compared with \(0.802\) and \(0.937\) for the full-context cells. For LAR-2, the corresponding AUCs are \(0.952\) and \(0.989\), compared with \(0.893\) and \(0.945\). In both models, the shorter-context results are close to those from the complete array. These comparisons provide evidence that shorter-context information contributes in both applications. The difference between LAR-1 and LAR-2 remains pronounced for WikiMIA but is small for generated-text detection, consistent with the task-dependent contribution of the within-path second-order augmentation in Tables~\ref{tab:wikimia} and~\ref{tab:aidetect}.

To localize the first-order contributions over context scale and text position, Figure~\ref{fig:functional_contrasts} displays coefficient contrasts for three representative channels.

\begin{figure}[ht!]
\centering
\begingroup
\small
\renewcommand{\baselinestretch}{1}\selectfont
\includegraphics[width=0.96\textwidth]{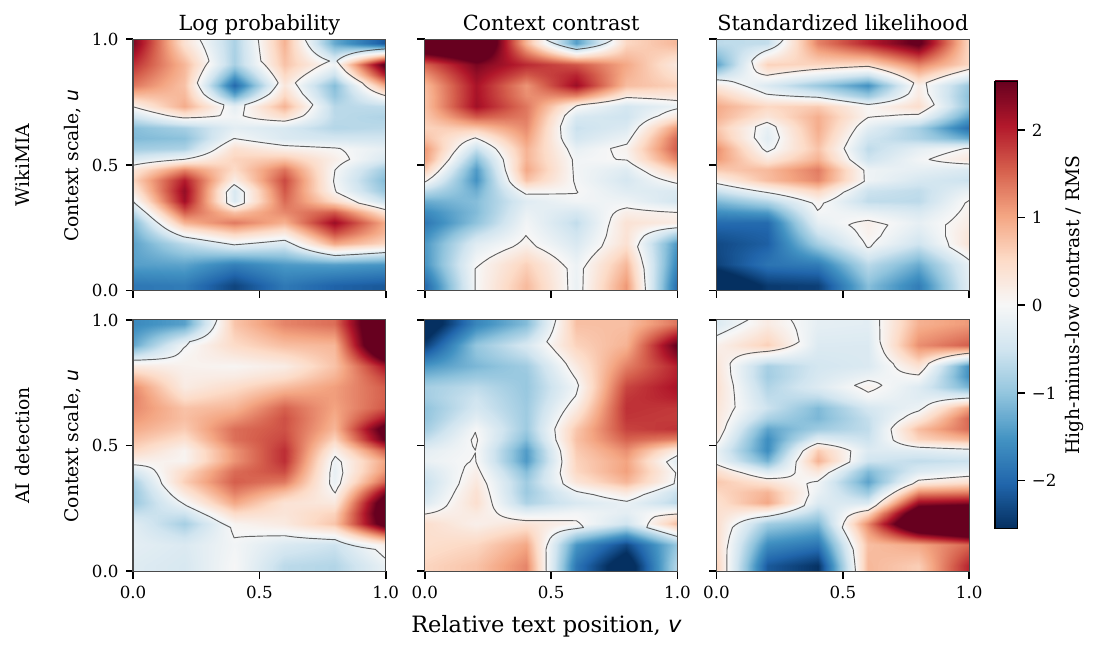}
\caption{Estimated LAR-1 coefficient contrasts for three representative channels. Each panel shows the change in the fitted coefficient function when the indicated channel is moved from its 20th to its 80th percentile, with the remaining channels fixed at their medians. The surfaces are averaged over the five fold-specific fits and divided by their root mean square within each panel; color therefore shows location and sign, not relative importance across channels. Positive values favor membership in the WikiMIA analysis (top) and AI-generated text in the generated-text analysis (bottom). The scoring models are Pythia-1B and DeepSeek-7B, respectively, and the gray contours mark zero.}
\label{fig:functional_contrasts}
\endgroup
\end{figure}

The contrasts vary over both context scale and text position and change sign across the array. Thus, positive and negative regional contributions can partly cancel when likelihood information is reduced to a global summary. The spatial patterns also differ between the two detection tasks. In both settings, substantial variation occurs away from the full-context cells \(u=1\), consistent with the restriction results in Figure~\ref{fig:sieve_mechanism}. The complete set of five channel contrasts is reported in the Supplement.

\subsection{Effect of sample size}
\label{sec:sample_size_results}

We examine how detection performance changes with the number of labeled texts. For each application, we hold out a fixed test set and fit the full-context-only model, LAR-1, and LAR-2 to repeated subsamples of the remaining labeled texts. All fits use the same basis and ridge specification. At each sample size, Figure~\ref{fig:sieve_sample_size} reports the median test AUC over 20 subsamples, together with the 10th and 90th percentiles.

\begin{figure}[H]
\centering
\begingroup
\small
\renewcommand{\baselinestretch}{1}\selectfont
\includegraphics[width=0.90\textwidth]{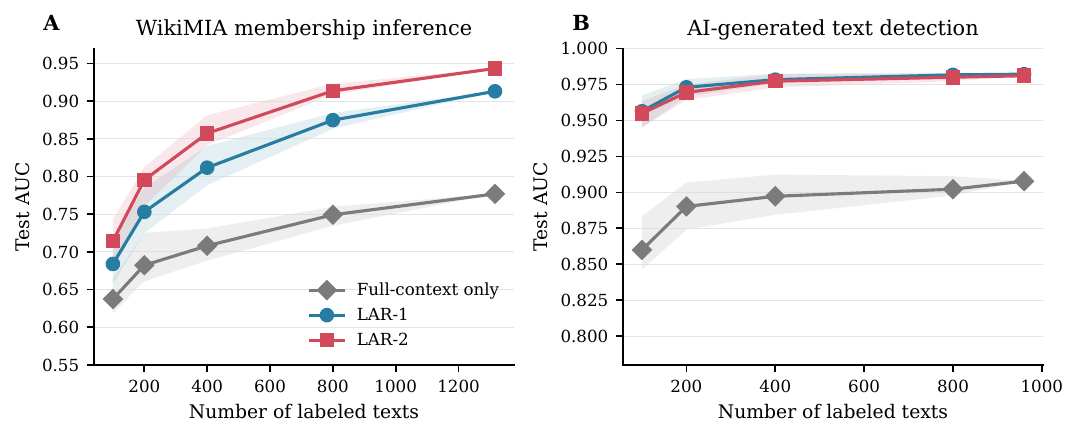}
\caption{Effect of labeled sample size on detection performance. The full-context-only model, LAR-1, and LAR-2 are evaluated on the same held-out test set and use the same basis and ridge specification. Curves show the median test AUC over 20 training subsamples, and shaded bands show the 10th and 90th percentiles.}
\label{fig:sieve_sample_size}
\endgroup
\end{figure}

With 100 labeled texts, the median WikiMIA AUCs are \(0.638\), \(0.684\), and \(0.714\) for the full-context-only model, LAR-1, and LAR-2, respectively. The corresponding values for generated-text detection are \(0.860\), \(0.956\), and \(0.955\). Thus, the shorter-context likelihood-array information improves ranking even at the smallest labeled sample size considered. LAR-2 provides an additional improvement over LAR-1 in WikiMIA, and the difference generally becomes more pronounced as the labeled sample size increases. In generated-text detection, the two LAR estimators remain nearly indistinguishable across sample sizes, consistent with the full-sample results.

Sensitivity to the numerical specification and random-feature seed is reported in the Supplementary Material.

\section{Discussion}
\label{sec:discussion}

This paper develops likelihood-array regression for membership inference and AI-generated text detection. By evaluating each target token under nested context windows, LAR preserves variation across context scale, target position, and likelihood-derived channels that is discarded by conventional full-context summaries. In the numerical studies, shorter-context cells provide additional ranking information in both applications, while the within-path second-order augmentation substantially improves membership inference. Under product-form covariance decay, the theoretical analysis establishes matching minimax prediction bounds for the within-path quadratic model and characterizes the additional approximation costs of finite path sieves and squared random projections.

Practical use raises several directions for further work. Threshold-based decisions require calibration beyond ranking performance, and constructing the likelihood array is more computationally demanding than full-context scoring. Adaptive selection or subsampling of context windows may reduce this cost while preserving the relevant likelihood information. The framework could also be extended to combine multiple scoring models and to localize membership or generated-text evidence within a document.

\section*{Data availability statement}

The data that support the findings of this study are publicly available.
The WikiMIA benchmark is available from its
\href{https://huggingface.co/datasets/swj0419/WikiMIA}{Hugging Face repository},
and the data used for AI-generated text detection are available from the
\href{https://github.com/Mamba413/AdaDetectGPT}{AdaDetectGPT repository}.

\bibliographystyle{apalike}
\begingroup
\small
\renewcommand{\baselinestretch}{1.2}\selectfont
\bibliography{Bibliography-MM-MC}
\endgroup

\clearpage
\spacingset{1}

\setcounter{section}{0}
\setcounter{equation}{0}
\setcounter{figure}{0}
\setcounter{table}{0}
\setcounter{lemma}{0}
\renewcommand{\thesection}{S\arabic{section}}
\renewcommand{\theequation}{S\arabic{equation}}
\renewcommand{\thefigure}{S\arabic{figure}}
\renewcommand{\thetable}{S\arabic{table}}
\renewcommand{\thelemma}{S\arabic{lemma}}
\renewcommand{\theHsection}{supp.\arabic{section}}
\renewcommand{\theHequation}{supp.\arabic{equation}}
\renewcommand{\theHfigure}{supp.\arabic{figure}}
\renewcommand{\theHtable}{supp.\arabic{table}}

\begin{center}
{\LARGE\bf Supplementary Material for\\ Token-Level Likelihood-Array Regression for Membership Inference and AI-Generated Text Detection}
\end{center}
\medskip

\section{Proofs}
\label{sec:supp_setup}

\subsection{Notation and preliminary covariance bound}

We first restate the objects used throughout the proofs.  Let \(\mathcal X_i\) collect the complete likelihood-array data for text \(i\), including all of its context paths. For target token \(t\), let \(W_{it}\in\mathcal H\) be its Hilbert-space path feature and set \(v_{it}=(t-2)/(T_i-2)\). The vector \(\boldsymbol{\varphi}_v(v_{it})\in\R^{d_v}\) contains the target-position basis functions, where \(d_v\) is fixed in the rate analysis. Define the centered text-level features
\begin{equation}
\begin{split}
  \Phi_i^{(1)}
  &=
  \frac{1}{T_i-1}\sum_{t=2}^{T_i}
  \boldsymbol{\varphi}_v(v_{it})\otimes W_{it}
  -
  \E\left\{
  \frac{1}{T_i-1}\sum_{t=2}^{T_i}
  \boldsymbol{\varphi}_v(v_{it})\otimes W_{it}
  \right\},\\
  \Phi_i^{(2)}
  &=
  \frac{1}{T_i-1}\sum_{t=2}^{T_i}
  \boldsymbol{\varphi}_v(v_{it})\otimes(W_{it}\otimes_s W_{it})
  -
  \E\left\{
  \frac{1}{T_i-1}\sum_{t=2}^{T_i}
  \boldsymbol{\varphi}_v(v_{it})\otimes(W_{it}\otimes_s W_{it})
  \right\}.
\end{split}
\label{eq:supp_text_features}
\end{equation}
Here \(\otimes_s\) denotes the symmetric tensor product. Put \(\Phi_i=(\Phi_i^{(1)},\Phi_i^{(2)})\), which lies in \((\R^{d_v}\otimes\mathcal H)\oplus\{\R^{d_v}\otimes\mathcal H_s^{\otimes2}\}\). The conditional response model is
\begin{equation*}
  \Pp(Y_i=1\mid\Phi_i)
  =
  \sigma\{\alpha_0+\ip{\vartheta_0}{\Phi_i}\},
  \qquad
  \sigma(x)=\{1+\exp(-x)\}^{-1},
\end{equation*}
where \(|\alpha_0|\leq M\) and \(\norm{\vartheta_0}\leq M\). Let \(P_\Phi\) denote the marginal distribution of \(\Phi\), and write \(\norm{g}_{P_\Phi}^2=\E\{g(\Phi)^2\}\). Because \(\Phi_i\) is centered, the squared prediction norm of a perturbation \((a,h)\) is \(a^2+\E\langle h,\Phi_i\rangle^2\). We retain the intercept in the proof of Theorem~2 and show explicitly where its parametric \(n^{-1}\) contribution enters.

Assumption~1 of the main paper gives \(\norm{\psi(u,\boldsymbol h)}\leq K_\psi\) and \(\norm{\boldsymbol{\varphi}_v(v)}\leq K_v\) over their respective domains. Because \(W_{it}\) is an average of the values \(\psi(u_g,\boldsymbol h_{itg})\), it follows that \(\sup_{i,t}\norm{W_{it}}\leq K_\psi\). Because each feature in \eqref{eq:supp_text_features} is in turn an average over target tokens, these bounds imply that there is a deterministic constant \(K_\Phi<\infty\) such that \(\|\Phi\|\leq K_\Phi\) almost surely, independently of text length. No independence among the paths within
\(\mathcal X_i\) is used.

Let \(\mathcal S\) be the positive trace-class operator from Assumption~2, with eigenvalues
\begin{equation}
  \kappa_j\leq C_\kappa j^{-\nu},
  \qquad \nu>1.
\label{eq:supp_eigen_upper}
\end{equation}
The two blockwise covariance bounds in Assumption~2 are
\begin{equation}
\begin{split}
  \operatorname{cov}(\Phi^{(1)})
  &\preceq
  C_\Gamma(I_{d_v}\otimes \mathcal S),\\
\operatorname{cov}(\Phi^{(2)})
  &\preceq
  C_\Gamma\{I_{d_v}\otimes(\mathcal S\otimes_s \mathcal S)\}.
\end{split}
\label{eq:supp_block_capacity}
\end{equation}
Here \(A\preceq B\) means that \(B-A\) is positive semidefinite. The condition \(\nu>1\) implies \(\sum_j\kappa_j<\infty\), so \(\mathcal S\) has finite trace.

Let \(\Gamma=\operatorname{cov}(\Phi)\) be the covariance operator of the complete first- and second-order feature. For \(h=(h_1,h_2)\), the inequality \((a+b)^2\leq2a^2+2b^2\) and \eqref{eq:supp_block_capacity} give
\begin{equation}
\Gamma
\preceq
2C_\Gamma
\left[
(I_{d_v}\otimes\mathcal S)
\oplus
\{I_{d_v}\otimes(\mathcal S\otimes_s\mathcal S)\}
\right].
\label{eq:supp_text_capacity}
\end{equation}
This bound includes any cross-covariance between the first- and second-order blocks.

Throughout the proofs, \(\mathcal P_\nu(M)\) denotes the collection of joint laws of the induced path features and labels that satisfy the bounded-feature condition, the conditional logistic model above, and \eqref{eq:supp_eigen_upper}--\eqref{eq:supp_block_capacity}, with \(|\alpha_0|\leq M\), \(\norm{\vartheta_0}\leq M\), and all fixed constants bounded uniformly over the collection. A fixed scoring model, channel map, and feature map select one member of this class. This is the model class defined in the main paper.

The upper bounds use the one-sided decay in \eqref{eq:supp_eigen_upper}. For the proof of Theorem~1, we construct a submodel on which
\begin{equation}
  c_\kappa j^{-\nu}
  \leq
  \kappa_j
  \leq
  C_\kappa j^{-\nu}.
\label{eq:supp_eigen_two_sided}
\end{equation}

The following lemma verifies the sufficient path-level condition stated after Assumption~2 in the main paper. Conditional on \(\mathcal X_i\), draw \(I\) uniformly from \(\{2,\ldots,T_i\}\), and define \(\widetilde{\Phi}^{(1)}=\boldsymbol{\varphi}_v(v_{iI})\otimes W_{iI}\) and \(\widetilde{\Phi}^{(2)}=\boldsymbol{\varphi}_v(v_{iI})\otimes(W_{iI}\otimes_s W_{iI})\).

\begin{lemma}[From path-level to text-level covariance]
\label{lem:supp_contraction}
If
\begin{equation*}
\operatorname{cov}(\widetilde{\Phi}^{(1)})\preceq C_\Gamma(I_{d_v}\otimes\mathcal S),
\qquad
\operatorname{cov}(\widetilde{\Phi}^{(2)})\preceq C_\Gamma\{I_{d_v}\otimes(\mathcal S\otimes_s\mathcal S)\},
\end{equation*}
then the two bounds in \eqref{eq:supp_block_capacity} hold with the same constant \(C_\Gamma\). No independence among paths within a text is required.
\end{lemma}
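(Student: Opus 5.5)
The proof rests on one observation: the text-level feature is a conditional expectation of the randomly selected path feature. Conditional on $\mathcal X_i$, the index $I$ is uniform on $\{2,\ldots,T_i\}$. Hence
\[
\E\bigl(\widetilde{\Phi}^{(1)}\mid\mathcal X_i\bigr)
=
\frac{1}{T_i-1}\sum_{t=2}^{T_i}\boldsymbol{\varphi}_v(v_{it})\otimes W_{it},
\]
which is exactly the uncentered feature $\overline\Phi_i^{(1)}$. The same identity holds for the second-order block with $W_{it}\otimes_s W_{it}$. By the tower property the unconditional means also agree, $\E\widetilde{\Phi}^{(r)}=\E\overline\Phi_i^{(r)}$. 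Therefore $\Phi_i^{(r)}=\E(\widetilde{\Phi}^{(r)}-\E\widetilde{\Phi}^{(r)}\mid\mathcal X_i)$ for $r\in\{1,2\}$. Here $\widetilde{\Phi}^{(r)}$ is understood unconditionally, with randomness from both $\mathcal X_i$ and $I$, and its covariance in the hypothesis is the unconditional one.

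The covariance comparison then follows from the law of total covariance. Fix a test vector $h$ in the relevant block space. Since $\langle h,\cdot\rangle$ is linear and continuous, $\langle h,\Phi_i^{(r)}\rangle=\E\{\langle h,\widetilde{\Phi}^{(r)}-\E\widetilde{\Phi}^{(r)}\rangle\mid\mathcal X_i\}$. Conditional Jensen, i.e.\ $(\E[X\mid\mathcal X_i])^2\le\E[X^2\mid\mathcal X_i]$, then gives
\[
\ip{h}{\operatorname{cov}(\Phi^{(r)})h}
=\E\ip{h}{\Phi_i^{(r)}}^2
\le\E\ip{h}{\widetilde{\Phi}^{(r)}-\E\widetilde{\Phi}^{(r)}}^2
=\ip{h}{\operatorname{cov}(\widetilde{\Phi}^{(r)})h}.
\]
Thus $\operatorname{cov}(\Phi^{(r)})\preceq\operatorname{cov}(\widetilde{\Phi}^{(r)})$. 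Combining this with the hypothesis and using transitivity of $\preceq$ yields \eqref{eq:supp_block_capacity} with the same $C_\Gamma$. Equivalently, $\operatorname{cov}(\widetilde\Phi)=\operatorname{cov}(\E[\widetilde\Phi\mid\mathcal X])+\E\operatorname{cov}(\widetilde\Phi\mid\mathcal X)$, and the second term is positive semidefinite. No independence across paths enters, because only the within-text average is used; the dependence among the $W_{it}$ inside a text is absorbed into the conditional law of $I$ given $\mathcal X_i$.

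The main technical point is measure-theoretic: the conditional expectations must be well defined for Hilbert-space-valued random elements. Assumption~\ref{ass:bounded_features} settles this. It gives $\|\widetilde{\Phi}^{(r)}\|\le K_vK_\psi^r$ almost surely, so Bochner conditional expectations exist and commute with bounded linear functionals, and all second moments are finite. The covariance operators are then trace class, so the quadratic-form comparison above characterizes $\preceq$. Two further details need checking. First, $I$ must be drawn using only auxiliary randomness independent of everything else given $\mathcal X_i$, so that averaging over $I$ reproduces the uniform weights $1/(T_i-1)$. Second, when $T_i$ is random it is part of $\mathcal X_i$, so the conditioning correctly accounts for variable document length.
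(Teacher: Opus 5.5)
Your proof is correct and follows essentially the same route as the paper. You identify the centered text-level feature as the conditional expectation, given $\mathcal X_i$, of the centered uniformly selected path feature, apply conditional Jensen to the scalar projections $\langle h,\cdot\rangle$, and conclude by transitivity of $\preceq$; your Bochner-integrability and law-of-total-covariance remarks only elaborate this same argument.
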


\begin{proof}
The uniform selection of \(I\) gives
\[
\E(\widetilde{\Phi}^{(k)}\mid\mathcal X_i)=\frac{1}{T_i-1}\sum_{t=2}^{T_i}\widetilde{\Phi}_{it}^{(k)},
\qquad k=1,2,
\]
where \(\widetilde{\Phi}_{it}^{(1)}=\boldsymbol{\varphi}_v(v_{it})\otimes W_{it}\) and \(\widetilde{\Phi}_{it}^{(2)}=\boldsymbol{\varphi}_v(v_{it})\otimes(W_{it}\otimes_s W_{it})\). Hence, for \(k\in\{1,2\}\) and any \(h_k\) in the corresponding feature space, \eqref{eq:supp_text_features} implies
\[
\ip{h_k}{\Phi_i^{(k)}}
=
\E\left\{
\ip{h_k}{\widetilde{\Phi}^{(k)}-\E\widetilde{\Phi}^{(k)}}
\,\middle|\,\mathcal X_i
\right\}.
\]
Conditional Jensen's inequality and then the path-level covariance bound give
\begin{align*}
\operatorname{var}\{\ip{h_k}{\Phi_i^{(k)}}\}
&\leq
\operatorname{var}\{\ip{h_k}{\widetilde{\Phi}^{(k)}}\}\\
&\leq
C_\Gamma\ip{h_k}{A_kh_k},
\end{align*}
where \(A_1=I_{d_v}\otimes\mathcal S\) and \(A_2=I_{d_v}\otimes(\mathcal S\otimes_s\mathcal S)\). Since this holds for every \(h_k\), it proves both inequalities in \eqref{eq:supp_block_capacity}. Conditioning on the complete likelihood-array data permits arbitrary dependence among the constituent paths.
\end{proof}

\subsection{Spectrum and effective dimension}
\label{sec:supp_spectrum}

For a positive trace-class operator \(A\), define its effective dimension at regularization level \(\lambda\) by \(\mathcal N_A(\lambda)=\tr\{A(A+\lambda I)^{-1}\}\).

\begin{lemma}[Ordering the product spectrum]
\label{lem:supp_products}
Arrange \(\{\kappa_j\kappa_k:1\leq j<k\}\) in nonincreasing order as \(\rho_1\geq\rho_2\geq\cdots\). Under \eqref{eq:supp_eigen_upper},
\[
\rho_\ell
\leq
C\left\{\frac{\ell}{\log(e\ell)}\right\}^{-\nu},
\qquad \ell\geq1,
\]
and
\[
\sum_{\ell\geq1}\frac{\rho_\ell}{\rho_\ell+\lambda}
\leq
C\lambda^{-1/\nu}\log(e/\lambda),
\qquad 0<\lambda\leq1.
\]
If the two-sided condition \eqref{eq:supp_eigen_two_sided} holds, both bounds are sharp:
\begin{equation}
  \rho_\ell
  \asymp
  \left\{
  \frac{\ell}{\log(e\ell)}
  \right\}^{-\nu},
\label{eq:supp_ordered_products}
\end{equation}
and
\begin{equation}
  \sum_{\ell\geq1}
  \frac{\rho_\ell}{\rho_\ell+\lambda}
  \asymp
  \lambda^{-1/\nu}\log(e/\lambda),
  \qquad 0<\lambda\leq1.
\label{eq:supp_product_dimension}
\end{equation}
\end{lemma}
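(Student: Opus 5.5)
The plan is to reduce everything to a lattice-point count. Define the counting function
\[
N(x)=\#\{(j,k):1\leq j<k,\ \kappa_j\kappa_k\geq x\}.
\]
Under \eqref{eq:supp_eigen_upper}, \(\kappa_j\kappa_k\geq x\) forces \(jk\leq (C_\kappa^2/x)^{1/\nu}=:m\). Hence \(N(x)\leq D(m):=\#\{(j,k):jk\leq m\}\). The divisor-sum bound gives
\[
D(m)=\sum_{j\leq m}\lfloor m/j\rfloor\leq m(1+\log m)\leq m\log(em).
\]
Under the two-sided condition \eqref{eq:supp_eigen_two_sided}, the reverse inclusion gives \(N(x)\geq \#\{j<k:jk\leq m'\}\) with \(m'=(c_\kappa^2/x)^{1/\nu}\). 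That count is \(\gtrsim m'\log(em')\) once \(m'\) exceeds a constant, because the pairs with \(j<k\) make up roughly half of all pairs with \(jk\leq m'\), minus the diagonal of size \(\sqrt{m'}\).

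To get the ordered-eigenvalue bounds, invert the counting function. The \(\ell\)th largest value \(\rho_\ell\) satisfies \(\rho_\ell\geq x\) if and only if \(N(x)\geq\ell\). For the upper bound, suppose \(\rho_\ell\geq x\). Then \(\ell\leq m\log(em)\) with \(m=(C_\kappa^2/x)^{1/\nu}\). Since \(m\mapsto m\log(em)\) is increasing, this gives \(m\gtrsim \ell/\log(e\ell)\), using \(\log(em)\asymp\log(e\ell)\) in that regime. Consequently \(x\lesssim\{\ell/\log(e\ell)\}^{-\nu}\). The lower bound in \eqref{eq:supp_ordered_products} follows symmetrically from the lower count. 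Take \(x=c\{\ell/\log(e\ell)\}^{-\nu}\) with a small constant \(c\), which makes \(m'\log(em')\gtrsim\ell\). Small \(\ell\) are absorbed into the constants.

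For the effective dimension, split the sum at \(L=\lceil\lambda^{-1/\nu}\log(e/\lambda)\rceil\). The terms with \(\ell\leq L\) contribute at most \(L\). For the tail, use \(\rho_\ell/(\rho_\ell+\lambda)\leq\rho_\ell/\lambda\) together with the upper bound on \(\rho_\ell\):
\[
\lambda^{-1}\sum_{\ell>L}\{\log(e\ell)/\ell\}^{\nu}\lesssim\lambda^{-1}L^{1-\nu}\log(eL)^{\nu},
\]
by comparison with an integral, valid because \(\nu>1\). Since \(\log(eL)\asymp\log(e/\lambda)\), substituting \(L\) gives
\[
\lambda^{-1}\lambda^{-(1-\nu)/\nu}\log(e/\lambda)^{1-\nu+\nu}=\lambda^{-1/\nu}\log(e/\lambda).
\]
This is the claimed upper bound. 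For the matching lower bound in \eqref{eq:supp_product_dimension}, restrict to those \(\ell\) with \(\rho_\ell\geq\lambda\), each of which contributes at least \(1/2\). By the lower bound on \(\rho_\ell\), there are \(\gtrsim\lambda^{-1/\nu}\log(e/\lambda)\) such indices, for \(\lambda\) below a fixed constant. For \(\lambda\) near \(1\), the claim holds trivially with adjusted constants as long as at least one \(\rho_\ell\) is positive.

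The main obstacle is the bookkeeping in the inversion step: converting \(\ell\lesssim m\log(em)\) into \(m\gtrsim\ell/\log(e\ell)\) uniformly in \(\ell\geq1\), with care about the \(\log\) of a quantity that may be below \(1\). I would handle this with the elementary fact that \(y=m\log(em)\) implies \(m\asymp y/\log(ey)\) for \(y\geq1\). The same care is needed to check that \(\log(eL)\asymp\log(e/\lambda)\) on \((0,1]\).
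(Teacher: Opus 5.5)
Your proposal is correct and takes essentially the same route as the paper's proof. Both arguments rest on three steps:
\begin{itemize}
\item a lattice-point count showing that $\#\{(j,k):j<k,\ jk\le m\}\asymp m\log(em)$;
\item inverting the counting function to obtain $\rho_\ell\asymp\{\ell/\log(e\ell)\}^{-\nu}$;
\item splitting the effective-dimension sum at $L\asymp\lambda^{-1/\nu}\log(e/\lambda)$, with an integral comparison for the tail.
\end{itemize}
The differences are cosmetic. For the lower count you use symmetry minus the diagonal, where the paper restricts to $j\le\sqrt{x}/4$ and $2j\le k\le x/j$. You also split the sum without the paper's large constant $C_0$, which is indeed unnecessary because $x/(x+\lambda)\le x/\lambda$ always holds.
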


\begin{proof}
The proof has three parts.

\emph{Step 1: count the pairs.} For \(x\geq4\), let
\[
D(x)=\#\{(j,k):1\leq j<k,\ jk\leq x\}.
\]
The restriction \(j<k\) and \(jk\leq x\) implies \(j<\sqrt x\). Hence
\begin{align*}
D(x)
&\leq
\sum_{1\leq j<\sqrt x}\left\lfloor\frac{x}{j}\right\rfloor
\leq
x\sum_{1\leq j<\sqrt x}\frac1j
\leq
C x\log x.
\end{align*}
For the reverse inequality, restrict the count to \(1\leq j\leq\sqrt x/4\) and \(2j\leq k\leq x/j\). The restriction on \(j\) gives
\[
\frac{2j}{x/j}=\frac{2j^2}{x}\leq\frac18,
\qquad\text{and hence}\qquad
\frac{x}{j}-2j\geq\frac{7x}{8j}.
\]
After allowing for the two integer endpoints, the interval \([2j,x/j]\) therefore contains at least \(x/(2j)\) integers for all sufficiently large \(x\). Every such integer satisfies \(j<k\) and \(jk\leq x\). Consequently,
\[
D(x)
\geq
\sum_{1\leq j\leq\sqrt x/4}\frac{x}{2j}
\geq
c x\log x.
\]
Thus, \(D(x)\) is bounded above and below by fixed multiples of \(x\log(ex)\).

\emph{Step 2: order the products.} For \(z>0\), let
\[
N(z)=\#\{(j,k):j<k,\ \kappa_j\kappa_k\geq z\}
\]
be the number of product eigenvalues at least \(z\). Under \eqref{eq:supp_eigen_upper}, \(\kappa_j\kappa_k\leq C(jk)^{-\nu}\). Thus, \(\kappa_j\kappa_k\geq z\) can hold only if \(jk\leq Cz^{-1/\nu}\), and for \(0<z\leq1\),
\begin{align*}
N(z)
&\leq
D\{Cz^{-1/\nu}\}\\
&\leq
Cz^{-1/\nu}\log(e/z).
\end{align*}
Under the two-sided condition \eqref{eq:supp_eigen_two_sided}, every pair satisfying \(jk\leq cz^{-1/\nu}\) has \(\kappa_j\kappa_k\geq z\). The lower bound for \(D\) then gives
\[
N(z)\geq cz^{-1/\nu}\log(e/z).
\]
We next invert these counting bounds. By definition of the decreasing rearrangement, \(\rho_\ell\geq z\) if and only if \(N(z)\geq\ell\). Put \(x=z^{-1/\nu}\). Apart from fixed constants, the counting relation is
\[
\ell\asymp x\log(ex).
\]
Its inverse has order \(x_\ell=\ell/\log(e\ell)\). Indeed,
\[
\log(e x_\ell)
=
\log\left\{\frac{e\ell}{\log(e\ell)}\right\}
\asymp
\log(e\ell),
\]
and hence \(x_\ell\log(e x_\ell)\asymp\ell\). Since \(z=x^{-\nu}\), the level associated with rank \(\ell\) is therefore
\[
z_\ell
\asymp
x_\ell^{-\nu}
=
\left\{\frac{\ell}{\log(e\ell)}\right\}^{-\nu}.
\]
Using only the upper counting bound yields the stated upper bound for \(\rho_\ell\); using both counting bounds yields \eqref{eq:supp_ordered_products}.

\emph{Step 3: sum the regularized eigenvalues.} Put
\[
L_\lambda
=
\left\lceil C_0\lambda^{-1/\nu}\log(e/\lambda)\right\rceil,
\]
where \(C_0\) is a sufficiently large fixed constant. To verify the purpose of this choice, observe that
\[
\log(eL_\lambda)\asymp\log(e/\lambda)
\quad\text{and}\quad
\frac{L_\lambda}{\log(eL_\lambda)}
\asymp
C_0\lambda^{-1/\nu}.
\]
The upper bound for the ordered products then gives
\[
\rho_{L_\lambda}
\leq
C\left\{\frac{L_\lambda}{\log(eL_\lambda)}\right\}^{-\nu}
\leq
CC_0^{-\nu}\lambda.
\]
Choosing \(C_0\) large enough makes the last expression at most \(\lambda\). Since \(\rho_\ell\) is nonincreasing, \(\rho_\ell\leq\lambda\) for every \(\ell>L_\lambda\). We may therefore use \(x/(x+\lambda)\leq\min(1,x/\lambda)\) and split the sum at \(L_\lambda\):
\begin{align*}
\sum_{\ell\geq1}\frac{\rho_\ell}{\rho_\ell+\lambda}
&\leq
L_\lambda
+
\lambda^{-1}\sum_{\ell>L_\lambda}\rho_\ell\\
&\leq
L_\lambda
+
\frac{C}{\lambda}
\sum_{\ell>L_\lambda}
\left\{\frac{\log(e\ell)}{\ell}\right\}^{\nu}.
\end{align*}
Because \(\nu>1\), comparison with the integral of \(x^{-\nu}\{\log(ex)\}^\nu\) gives
\[
\sum_{\ell>L}\left\{\frac{\log(e\ell)}{\ell}\right\}^{\nu}
\leq
C L^{1-\nu}\{\log(eL)\}^\nu.
\]
For clarity, write \(q_\lambda=\log(e/\lambda)\). Since \(L_\lambda\asymp\lambda^{-1/\nu}q_\lambda\) and \(\log(eL_\lambda)\asymp q_\lambda\), the tail term has order
\begin{align*}
\lambda^{-1}L_\lambda^{1-\nu}\{\log(eL_\lambda)\}^\nu
&\asymp
\lambda^{-1}
\left(\lambda^{-1/\nu}q_\lambda\right)^{1-\nu}
q_\lambda^\nu\\
&=
\lambda^{-1/\nu}q_\lambda
\asymp L_\lambda.
\end{align*}
The first \(L_\lambda\) summands and the tail are therefore both bounded by a constant multiple of \(L_\lambda\), proving the upper bound in \eqref{eq:supp_product_dimension}.

Under \eqref{eq:supp_eigen_two_sided}, the lower bound in \eqref{eq:supp_ordered_products} implies that, for a sufficiently small fixed \(c_0>0\),
\[
\rho_\ell\geq c_1\lambda,
\qquad 1\leq\ell\leq c_0L_\lambda,
\]
where \(c_1>0\) does not depend on \(\lambda\). Each of these summands is at least \(c_1/(1+c_1)\). There are order \(L_\lambda\) such terms, which proves the matching lower bound in \eqref{eq:supp_product_dimension}.
\end{proof}

\begin{lemma}[Effective dimension]
\label{lem:supp_effective_dimension}
Under \eqref{eq:supp_text_capacity} and \eqref{eq:supp_eigen_upper},
\begin{equation}
  \mathcal N_\Gamma(\lambda)
  \leq
  C\lambda^{-1/\nu}\log(e/\lambda),
  \qquad 0<\lambda\leq1.
\label{eq:supp_effective_dimension}
\end{equation}
\end{lemma}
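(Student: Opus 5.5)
The plan is to pass from the Loewner bound \eqref{eq:supp_text_capacity} to a bound on the trace of \(\Gamma(\Gamma+\lambda I)^{-1}\). Then compute the effective dimension of the explicit envelope operator block by block, using Lemma~\ref{lem:supp_products} for the quadratic block.

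\emph{Step 1: monotonicity.} The map \(x\mapsto x/(x+\lambda)\) is operator monotone on \([0,\infty)\), since \(A(A+\lambda I)^{-1}=I-\lambda(A+\lambda I)^{-1}\) and \(A\mapsto -(A+\lambda I)^{-1}\) is operator monotone. Hence \(0\preceq\Gamma\preceq B\) implies \(\Gamma(\Gamma+\lambda I)^{-1}\preceq B(B+\lambda I)^{-1}\). Taking traces of these positive operators gives \(\mathcal N_\Gamma(\lambda)\leq\mathcal N_B(\lambda)\). If one prefers to avoid operator monotonicity in infinite dimensions, one can use the trace inequality directly: by Courant--Fischer, the \(\ell\)th eigenvalue of \(\Gamma\) is at most the \(\ell\)th eigenvalue of \(B\), and \(\mathcal N\) is a sum of an increasing function of the eigenvalues. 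This route is cleaner, since \(\Gamma\) is trace class and both operators are compact.

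We apply this with
\[
B=2C_\Gamma\left[(I_{d_v}\otimes\mathcal S)\oplus\{I_{d_v}\otimes(\mathcal S\otimes_s\mathcal S)\}\right].
\]
The constant factor only rescales \(\lambda\): \(\mathcal N_{cA}(\lambda)=\mathcal N_A(\lambda/c)\).

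\emph{Step 2: diagonalize \(B\).} \(B\) is diagonal in the basis \(\{\mathbf e_m\otimes e_j\}\oplus\{\mathbf e_m\otimes(e_j\otimes_s e_k)\}\). Its eigenvalues are \(2C_\Gamma\kappa_j\), each with multiplicity \(d_v\), and \(2C_\Gamma\kappa_j\kappa_k\) for \(j\leq k\), each with multiplicity \(d_v\). Since \(d_v\) is fixed,
\[
\mathcal N_B(\lambda)\leq d_v\Bigl\{\sum_j\frac{\kappa_j}{\kappa_j+\lambda'}+\sum_{j\leq k}\frac{\kappa_j\kappa_k}{\kappa_j\kappa_k+\lambda'}\Bigr\},\qquad \lambda'=\lambda/(2C_\Gamma).
\]

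\emph{Step 3: bound each sum.}
\begin{itemize}
\item The first-order sum is the classical bound \(\sum_j\min(1,C_\kappa j^{-\nu}/\lambda')\lesssim\lambda^{-1/\nu}\). This follows by splitting at \(j\asymp\lambda^{-1/\nu}\) and using \(\nu>1\).
\item For the off-diagonal pairs \(j<k\), Lemma~\ref{lem:supp_products} gives the bound \(C\lambda'^{-1/\nu}\log(e/\lambda')\).
\item The diagonal terms \(j=k\) have eigenvalues \(\kappa_j^2\leq C j^{-2\nu}\). Their sum is \(\lesssim\lambda^{-1/(2\nu)}\leq\lambda^{-1/\nu}\).
\end{itemize}
Finally, \(\lambda'\) differs from \(\lambda\) by a fixed factor, and \(\lambda\leq1\). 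Adjusting constants and handling \(\lambda'>1\) trivially (the sums are then bounded by \(\operatorname{tr}(B)/\lambda'\)) converts \(\lambda'^{-1/\nu}\log(e/\lambda')\) into \(C\lambda^{-1/\nu}\log(e/\lambda)\). Summing the three contributions gives \eqref{eq:supp_effective_dimension}.

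The main obstacle is Step 1: \(\Gamma\) and \(B\) need not commute, so one must justify the transfer of the Loewner order to effective dimension. The eigenvalue-domination argument via min–max is what I would use, since it requires only compactness and self-adjointness. The remaining steps are bookkeeping on top of Lemma~\ref{lem:supp_products}.
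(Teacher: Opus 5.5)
Your proposal is correct and follows essentially the same route as the paper: eigenvalue domination via the min--max principle carries the Loewner envelope \eqref{eq:supp_text_capacity} over to effective dimensions. The envelope then splits into the first-order block (order $\lambda^{-1/\nu}$), the off-diagonal tensor eigenvalues $\kappa_j\kappa_k$ with $j<k$ (bounded by Lemma~\ref{lem:supp_products}), and the diagonal eigenvalues $\kappa_j^2$ (order $\lambda^{-1/(2\nu)}$), with the multiplicity $d_v$ absorbed into the constant; your explicit treatment of the rescaled level $\lambda'=\lambda/(2C_\Gamma)$ possibly exceeding $1$ is a small point the paper leaves implicit.
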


\begin{proof}
If \(A\) has eigenvalues \(a_1\geq a_2\geq\cdots\geq0\), then
\[
\mathcal N_A(\lambda)
=
\sum_{j\geq1}\frac{a_j}{a_j+\lambda}.
\]
The scalar function \(x\mapsto x/(x+\lambda)\) is increasing on \([0,\infty)\). The min--max characterization of eigenvalues therefore implies that \(0\preceq A\preceq CB\) gives
\[
\mathcal N_A(\lambda)
\leq
\mathcal N_{CB}(\lambda)
=
\mathcal N_B(\lambda/C).
\]
Apply this relation to \eqref{eq:supp_text_capacity}. The position basis produces only the fixed multiplicity \(d_v\), which is absorbed into the constant below.

For the first-order block, let \(J_\lambda=\lceil(C_\kappa/\lambda)^{1/\nu}\rceil\). Splitting the sum at \(J_\lambda\) gives
\begin{align*}
\sum_{j\geq1}\frac{\kappa_j}{\kappa_j+\lambda}
&\leq
J_\lambda
+
\lambda^{-1}\sum_{j>J_\lambda}\kappa_j\\
&\leq
C\lambda^{-1/\nu}
+
\frac{C}{\lambda}\int_{J_\lambda}^{\infty}x^{-\nu}\,dx
\\
&=
C\lambda^{-1/\nu}
+
\frac{C}{(\nu-1)\lambda}J_\lambda^{1-\nu}
\leq
C\lambda^{-1/\nu},
\end{align*}
where the last inequality uses \(J_\lambda\asymp\lambda^{-1/\nu}\), and hence \(\lambda^{-1}J_\lambda^{1-\nu}\asymp\lambda^{-1/\nu}\).

The symmetric tensor block has off-diagonal eigenvalues \(\kappa_j\kappa_k\), \(j<k\), and diagonal eigenvalues of order \(\kappa_j^2\). Lemma~\ref{lem:supp_products} bounds the off-diagonal contribution by \(C\lambda^{-1/\nu}\log(e/\lambda)\). For the diagonal contribution, \(\kappa_j^2\leq Cj^{-2\nu}\). Let \(K_\lambda=\lceil(C/\lambda)^{1/(2\nu)}\rceil\). Splitting the diagonal sum at \(K_\lambda\) gives
\begin{align*}
\sum_{j\geq1}\frac{\kappa_j^2}{\kappa_j^2+\lambda}
&\leq
K_\lambda
+
\lambda^{-1}\sum_{j>K_\lambda}\kappa_j^2\\
&\leq
C\lambda^{-1/(2\nu)}
+
\frac{C}{\lambda}
\int_{K_\lambda}^{\infty}x^{-2\nu}\,dx\\
&=
C\lambda^{-1/(2\nu)}
+
\frac{C}{(2\nu-1)\lambda}K_\lambda^{1-2\nu}\\
&\leq
C\lambda^{-1/(2\nu)}.
\end{align*}
The final line uses \(K_\lambda\asymp\lambda^{-1/(2\nu)}\), so \(\lambda^{-1}K_\lambda^{1-2\nu}\asymp\lambda^{-1/(2\nu)}\).
For \(0<\lambda\leq1\),
\[
\lambda^{-1/(2\nu)}
\leq
\lambda^{-1/\nu}
\leq
\lambda^{-1/\nu}\log(e/\lambda).
\]
Thus the diagonal and first-order contributions are no larger than the off-diagonal order. Adding the three contributions proves \eqref{eq:supp_effective_dimension}.
\end{proof}

\subsection{Proof of Theorem 1}
\label{sec:supp_lower}

\begin{proof}
The proof constructs a finite hypercube inside \(\mathcal P_\nu(M)\) and then applies Assouad's argument. We give each step explicitly.

\emph{Step 1: construct an admissible path distribution.} Let \(e_1,e_2,\ldots\) be an orthonormal basis of \(\mathcal H\), let \(\xi_1,\xi_2,\ldots\) be independent Rademacher variables, and choose \(\kappa_j\) satisfying \eqref{eq:supp_eigen_two_sided}. Define
\begin{equation*}
W=\sum_{j\geq1}\sqrt{\kappa_j}\xi_j e_j,
\qquad
Q=W\otimes_s W-\E(W\otimes_s W).
\end{equation*}
Since \(\nu>1\), \(\sum_j\kappa_j<\infty\). Moreover,
\[
\norm{W}^2=\sum_{j\geq1}\kappa_j
\]
for every realization of the Rademacher sequence. Thus \(W\) is uniformly bounded. The norm of a rank-one tensor satisfies \(\|W\otimes_s W\|=\|W\|^2\), so
\[
\norm{Q}
\leq
\norm{W\otimes_s W}
+
\norm{\E(W\otimes_s W)}
\leq
2\norm{W}^2.
\]
Hence \(Q\) is also uniformly bounded. Fix a text length \(T_0>2\), let every target-token path in a text share the same random feature \(W\), and write
\[
\overline{\boldsymbol{\varphi}}_v
=
\frac{1}{T_0-1}\sum_{t=2}^{T_0}\boldsymbol{\varphi}_v\{v(t;T_0)\}.
\]
The centered text-level features are then \(\Phi^{(1)}=\overline{\boldsymbol{\varphi}}_v\otimes W\) and \(\Phi^{(2)}=\overline{\boldsymbol{\varphi}}_v\otimes Q\). The position basis contains the constant function, so there is a fixed vector \(\boldsymbol a_0\in\R^{d_v}\) such that \(\boldsymbol a_0^\prime\boldsymbol{\varphi}_v(v)=1\) for every \(v\), and consequently \(\boldsymbol a_0^\prime\overline{\boldsymbol{\varphi}}_v=1\). A quadratic coefficient of the form \(\boldsymbol a_0\otimes\Theta\) therefore gives the score \(\langle\Theta,Q\rangle\). We set the first-order coefficient to zero. The fixed norms of \(\boldsymbol a_0\) and \(\overline{\boldsymbol{\varphi}}_v\) are absorbed into the constants below.

For \(j<k\), define the orthonormal symmetric tensor
\[
E_{jk}=\frac{e_j\otimes e_k+e_k\otimes e_j}{\sqrt2}.
\]
The diagonal terms of \(W\otimes_s W\) are deterministic because \(\xi_j^2=1\), and they disappear after centering. Expanding the off-diagonal terms gives
\begin{equation*}
Q
=
\sum_{j<k}\sqrt{2\kappa_j\kappa_k}\,\xi_j\xi_k E_{jk}.
\end{equation*}
For two distinct unordered pairs \(\{j,k\}\neq\{j',k'\}\), at least one index appears exactly once in the product \(\xi_j\xi_k\xi_{j'}\xi_{k'}\). Independence and \(\E\xi_j=0\) therefore imply
\[
\E(\xi_j\xi_k\xi_{j'}\xi_{k'})=0.
\]
For the same pair, \(\E(\xi_j\xi_k)^2=1\). Hence the coordinates \(\xi_j\xi_k\) are orthonormal in \(L_2\), and
\[
\E\ip{Q}{E_{jk}}^2
=
2\kappa_j\kappa_k.
\]
Thus \(E_{jk}\) is a covariance eigenvector with eigenvalue \(2\kappa_j\kappa_k\). It follows that
\[
\operatorname{cov}(\Phi^{(1)})
=
(\overline{\boldsymbol{\varphi}}_v\,\overline{\boldsymbol{\varphi}}_v^{\prime})\otimes\mathcal S
\preceq
\norm{\overline{\boldsymbol{\varphi}}_v}^2(I_{d_v}\otimes\mathcal S)
\]
and
\[
\operatorname{cov}(\Phi^{(2)})
\preceq
2\norm{\overline{\boldsymbol{\varphi}}_v}^2
\{I_{d_v}\otimes(\mathcal S\otimes_s\mathcal S)\}.
\]
The submodel therefore satisfies Assumption~2 with a fixed covariance constant. This verification is made at the text level and does not require the individual paths within a text to be independent.

Arrange the off-diagonal covariance eigenvalues in decreasing order as \(\rho_1\geq\rho_2\geq\cdots\), and denote the corresponding tensor directions by \(E_1,E_2,\ldots\). The factor 2 does not affect their order. Lemma~\ref{lem:supp_products} gives
\[
\rho_\ell
\asymp
\left\{\frac{\ell}{\log(e\ell)}\right\}^{-\nu}.
\]

\emph{Step 2: choose the active block.} Let
\[
b_n
=
\left\lfloor
c_b n^{1/(\nu+1)}\{\log(en)\}^{\nu/(\nu+1)}
\right\rfloor,
\]
where \(c_b>0\) is fixed and sufficiently small. Put \(L_n=\log(en)\). The integer part does not affect any of the following orders, so
\[
b_n\asymp n^{1/(\nu+1)}L_n^{\nu/(\nu+1)}.
\]
Moreover,
\[
\log b_n
=
\frac{1}{\nu+1}\log n
+
\frac{\nu}{\nu+1}\log L_n
+O(1).
\]
Because \(\log L_n=o(\log n)\), it follows that \(\log(eb_n)\asymp L_n\). Consequently,
\begin{align*}
\frac{b_n}{\log(eb_n)}
&\asymp
\frac{n^{1/(\nu+1)}L_n^{\nu/(\nu+1)}}{L_n}\\
&=
n^{1/(\nu+1)}L_n^{-1/(\nu+1)}
=
\left(\frac{n}{L_n}\right)^{1/(\nu+1)}.
\end{align*}
Substitution into the ordered-eigenvalue formula gives
\begin{align*}
\rho_{b_n}
&\asymp
\left\{\frac{b_n}{\log(eb_n)}\right\}^{-\nu}\\
&\asymp
\left(\frac{L_n}{n}\right)^{\nu/(\nu+1)}.
\end{align*}
The definition of \(b_n\) also gives
\[
\frac{b_n}{n}
\asymp
n^{-\nu/(\nu+1)}L_n^{\nu/(\nu+1)}
=
\left(\frac{L_n}{n}\right)^{\nu/(\nu+1)}.
\]
Thus \(n\rho_{b_n}\asymp b_n\). Replacing \(b_n\) by \(2b_n\) changes \(b_n/\log(eb_n)\) only by a fixed multiplicative factor, and hence \(\rho_{2b_n}\asymp\rho_{b_n}\). We have therefore established
\begin{equation}
\rho_{2b_n}\asymp\rho_{b_n},
\qquad
n\rho_{b_n}\asymp b_n,
\qquad
\frac{b_n}{n}
\asymp
\left\{\frac{\log(en)}{n}\right\}^{\nu/(\nu+1)}.
\label{eq:supp_lower_balance}
\end{equation}
\emph{Step 3: define the hypercube of logistic models.} For \(\omega=(\omega_1,\ldots,\omega_{b_n})\in\{-1,1\}^{b_n}\), set
\begin{equation*}
\Theta_\omega
=
h\sum_{\ell=1}^{b_n}\omega_\ell E_{b_n+\ell},
\qquad
h^2=\frac{\delta}{n\rho_{b_n}},
\end{equation*}
where \(\delta>0\) is a sufficiently small fixed constant. This choice balances separation and indistinguishability. Changing one sign produces squared prediction separation of order \(h^2\rho_{b_n}\) for one text; over \(n\) independent texts, the corresponding Kullback--Leibler divergence is of order \(nh^2\rho_{b_n}=\delta\). Give every model the same marginal distribution of \(W\), and define its conditional response law by
\begin{equation*}
\Pp_\omega(Y=1\mid W)
=
\sigma\{\eta_\omega(Q)\},
\qquad
\eta_\omega(Q)=\ip{\Theta_\omega}{Q}.
\end{equation*}
The squared coefficient norm is
\[
\norm{\Theta_\omega}^2=b_nh^2
=
\delta\frac{b_n}{n\rho_{b_n}}.
\]
By \eqref{eq:supp_lower_balance}, the ratio on the right is bounded above and below by fixed constants. Choosing \(\delta\) sufficiently small therefore ensures \(\norm{\boldsymbol a_0\otimes\Theta_\omega}\leq M\) for every \(\omega\). Moreover,
\[
|\eta_\omega(Q)|
=
|\ip{\Theta_\omega}{Q}|
\leq
\norm{\Theta_\omega}\norm{Q},
\]
and both factors on the right are uniformly bounded. Thus the log odds are uniformly bounded and every \(\Pp_\omega\) belongs to \(\mathcal P_\nu(M)\).

\emph{Step 4: control the distance between adjacent experiments.} Let \(\omega^{(\ell)}\) be obtained from \(\omega\) by changing only the sign of coordinate \(\ell\). Write \(B(x)=\log(1+e^x)\). The Kullback--Leibler divergence between two Bernoulli distributions in canonical logistic form is the Bregman divergence of \(B\):
\[
\KL\{\operatorname{Bernoulli}(\sigma(x)),
       \operatorname{Bernoulli}(\sigma(y))\}
=B(y)-B(x)-B'(x)(y-x).
\]
The integral remainder in Taylor's theorem gives
\[
B(y)-B(x)-B'(x)(y-x)
=
(y-x)^2
\int_0^1
(1-t)B''\{x+t(y-x)\}\,dt.
\]
Because \(B''(z)=\sigma(z)\{1-\sigma(z)\}\leq1/4\) and \(\int_0^1(1-t)dt=1/2\),
\begin{equation}
\KL\{\operatorname{Bernoulli}(\sigma(x)),
       \operatorname{Bernoulli}(\sigma(y))\}
\leq\frac{(x-y)^2}{8}.
\label{eq:supp_logistic_kl}
\end{equation}
The adjacent log-odds functions differ by
\[
\eta_\omega(Q)-\eta_{\omega^{(\ell)}}(Q)
=2h\omega_\ell\ip{Q}{E_{b_n+\ell}}.
\]
The marginal law of \(Q\) is common to all alternatives, so conditioning on \(Q\), applying \eqref{eq:supp_logistic_kl}, and then using independence across the \(n\) texts gives
\begin{align*}
\KL(\Pp_\omega^n,\Pp_{\omega^{(\ell)}}^n)
&=n\E_Q\left[
\KL\{\Pp_\omega(Y\mid Q),\Pp_{\omega^{(\ell)}}(Y\mid Q)\}
\right]\\
&\leq
\frac{n}{8}(2h)^2
\E\ip{Q}{E_{b_n+\ell}}^2\\
&=
\frac{nh^2}{2}\rho_{b_n+\ell}
\leq
\frac{nh^2}{2}\rho_{b_n}
=\frac{\delta}{2}.
\end{align*}
Pinsker's inequality therefore gives
\begin{equation}
\operatorname{TV}(\Pp_\omega^n,\Pp_{\omega^{(\ell)}}^n)
\leq
\sqrt{\frac12\KL(\Pp_\omega^n,\Pp_{\omega^{(\ell)}}^n)}
\leq
\frac{\sqrt\delta}{2}.
\label{eq:supp_adjacent_tv}
\end{equation}

\emph{Step 5: convert testing error into prediction error.} Fix any estimator \(\widetilde{\eta}\) based on the labeled sample. Conditional on that sample, regard \(\widetilde{\eta}\) as a function of the complete feature \(\Phi\) and define its coefficient along the normalized quadratic direction \(E_{b_n+\ell}\) by
\[
\widehat a_\ell
=
\E_\Phi\left[
\widetilde{\eta}(\Phi)
\frac{\ip{Q}{E_{b_n+\ell}}}{\sqrt{\rho_{b_n+\ell}}}
\right],
\qquad
\widehat\omega_\ell=\operatorname{sign}(\widehat a_\ell),
\]
with an arbitrary convention when \(\widehat a_\ell=0\). To verify the normalization used here, define
\[
\varphi_\ell(Q)
=
\frac{\ip{Q}{E_{b_n+\ell}}}{\sqrt{\rho_{b_n+\ell}}}.
\]
The covariance calculation in Step 1 gives
\[
\E\varphi_\ell(Q)=0,
\qquad
\E\{\varphi_\ell(Q)\varphi_k(Q)\}
=
\mathbf 1(\ell=k).
\]
Thus \(\varphi_1,\ldots,\varphi_{b_n}\) are orthonormal in \(L_2(P_\Phi)\), and \(\widehat a_\ell=\langle\widetilde{\eta},\varphi_\ell\rangle_{L_2(P_\Phi)}\). This definition permits \(\widetilde{\eta}\) to use both the first- and second-order components of \(\Phi\). Under model \(\omega\),
\[
\ip{\eta_\omega}{\varphi_\ell}_{L_2(P_\Phi)}
=
h\omega_\ell\sqrt{\rho_{b_n+\ell}}.
\]
If \(\widehat\omega_\ell\neq\omega_\ell\), these two coefficients have opposite signs or the estimated coefficient is zero. Their difference therefore has magnitude at least \(h\sqrt{\rho_{b_n+\ell}}\). Bessel's inequality states that the squared norm of \(\widetilde{\eta}-\eta_\omega\) is at least the sum of its squared coefficients along any finite orthonormal family. Applying it to \(\{\varphi_\ell\}_{\ell=1}^{b_n}\) yields
\begin{align}
\norm{\widetilde{\eta}-\eta_\omega}_{P_\Phi}^2
&\geq
h^2\sum_{\ell=1}^{b_n}
\rho_{b_n+\ell}
\mathbf 1(\widehat\omega_\ell\neq\omega_\ell)\notag\\
&\geq
h^2\rho_{2b_n}
\sum_{\ell=1}^{b_n}
\mathbf 1(\widehat\omega_\ell\neq\omega_\ell).
\label{eq:supp_prediction_hamming}
\end{align}

Average \eqref{eq:supp_prediction_hamming} over the \(2^{b_n}\) values of \(\omega\). For a fixed coordinate \(\ell\), pair every \(\omega\) with \(\omega_\ell=1\) with \(\omega^{(\ell)}\), whose \(\ell\)th sign is \(-1\). Then
\begin{align*}
&2^{-b_n}\sum_\omega
\Pp_\omega(\widehat\omega_\ell\neq\omega_\ell)\\
&\quad=
2^{-b_n}
\sum_{\omega:\,\omega_\ell=1}
\left\{
\Pp_\omega(\widehat\omega_\ell=-1)
+
\Pp_{\omega^{(\ell)}}(\widehat\omega_\ell=1)
\right\}.
\end{align*}
For any test between two distributions, the sum of its two error probabilities is at least one minus their total variation distance. There are \(2^{b_n-1}\) adjacent pairs in the last sum. Hence \eqref{eq:supp_adjacent_tv} implies
\[
2^{-b_n}\sum_{\omega}
\Pp_\omega(\widehat\omega_\ell\neq\omega_\ell)
\geq
\frac12\left(1-\frac{\sqrt\delta}{2}\right).
\]
Choose \(\delta\leq1\), so the right side is bounded below by a positive constant. Summing over \(\ell\) gives
\begin{align*}
\inf_{\widetilde{\eta}}\sup_{\omega}
\E_\omega\norm{\widetilde{\eta}-\eta_\omega}_{P_\Phi}^2
&\geq
c b_n h^2\rho_{2b_n}\\
&=
c\frac{b_n}{n}\frac{\rho_{2b_n}}{\rho_{b_n}}\\
&\geq
c'\frac{b_n}{n}.
\end{align*}
The equality substitutes \(h^2=\delta/(n\rho_{b_n})\), with the fixed \(\delta\) absorbed into \(c\); the final inequality uses \(\rho_{2b_n}/\rho_{b_n}\asymp1\). Finally, \eqref{eq:supp_lower_balance} shows that \(b_n/n\asymp\{\log(en)/n\}^{\nu/(\nu+1)}\). Since the hypercube is a subset of \(\mathcal P_\nu(M)\), the same lower bound holds for the supremum over \(\mathcal P_\nu(M)\), proving Theorem~1.
\end{proof}

\subsection{Proof of Theorem 2}
\label{sec:supp_upper}

Write \(\ell_y(s)=\log(1+e^s)-ys\). The estimator in Theorem~2 is
\begin{equation}
(\widehat\alpha_\lambda,\widehat\vartheta_\lambda)
\in
\arg\min_{|\alpha|\leq M,\,\norm{\vartheta}\leq M}
\left\{
P_n\ell_Y\bigl(\alpha+\ip{\vartheta}{\Phi}\bigr)
+
\frac{\lambda}{2}\norm{\vartheta}^2
\right\},
\label{eq:supp_ridge_estimator}
\end{equation}
where \(P_n\) denotes the empirical average over the \(n\) independent texts. Let \(L(\alpha,\vartheta)=P\ell_Y\{\alpha+\langle\vartheta,\Phi\rangle\}\) denote population logistic risk.

\begin{lemma}[Bounded-score curvature]
\label{lem:supp_curvature}
There are constants \(0<c<C<\infty\), depending only on the feature and coefficient bounds, such that, for every \((\alpha,\vartheta)\) in the radius-constrained class,
\[
  c\left[
  (\alpha-\alpha_0)^2
  +\norm{\vartheta-\vartheta_0}_\Gamma^2
  \right]
  \leq
  L(\alpha,\vartheta)-L(\alpha_0,\vartheta_0)
  \leq
  C\left[
  (\alpha-\alpha_0)^2
  +\norm{\vartheta-\vartheta_0}_\Gamma^2
  \right],
\]
where \(\norm{h}_\Gamma^2=\E\langle h,\Phi\rangle^2\).
\end{lemma}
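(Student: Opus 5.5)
The plan is a second-order Taylor expansion of the population logistic risk around the true parameter, together with uniform boundedness of the scores over the radius-constrained class. Write \(s(\Phi)=\alpha+\ip{\vartheta}{\Phi}\) and \(s_0(\Phi)=\alpha_0+\ip{\vartheta_0}{\Phi}\). Conditioning on \(\Phi\) and using the logistic model, I would express the excess risk as
\[
L(\alpha,\vartheta)-L(\alpha_0,\vartheta_0)=\E\bigl[\KL\{\operatorname{Bernoulli}(\sigma(s_0(\Phi))),\operatorname{Bernoulli}(\sigma(s(\Phi)))\}\bigr].
\]
This holds because \(\E(Y\mid\Phi)=\sigma(s_0(\Phi))=B'(s_0(\Phi))\), with \(B(x)=\log(1+e^x)\). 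The conditional expected loss difference is then exactly the Bregman divergence \(B(s)-B(s_0)-B'(s_0)(s-s_0)\), which is the identity already used in Step 4 of the proof of Theorem~1.

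Next, apply the integral form of Taylor's theorem, as in \eqref{eq:supp_logistic_kl}:
\[
B(s)-B(s_0)-B'(s_0)(s-s_0)=(s-s_0)^2\int_0^1(1-t)B''\{s_0+t(s-s_0)\}\,dt.
\]
Both scores are bounded. Since \(\norm{\Phi}\leq K_\Phi\) almost surely, \(|\alpha|\leq M\) and \(\norm{\vartheta}\leq M\) give \(|s(\Phi)|\leq M(1+K_\Phi)=:K\), and the same bound holds for \(s_0\). Every point on the segment between them therefore lies in \([-K,K]\). On that interval, \(B''=\sigma(1-\sigma)\) is bounded between \(\sigma(K)\{1-\sigma(K)\}>0\) and \(1/4\). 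The integral weight is therefore pinched between \(\tfrac12\sigma(K)\{1-\sigma(K)\}\) and \(1/8\), pointwise in \(\Phi\). Taking expectations gives
\[
c\,\E\{s(\Phi)-s_0(\Phi)\}^2\leq L(\alpha,\vartheta)-L(\alpha_0,\vartheta_0)\leq C\,\E\{s(\Phi)-s_0(\Phi)\}^2 .
\]

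Finally, identify the squared prediction distance. Since \(\Phi\) is centered,
\[
\E\{(\alpha-\alpha_0)+\ip{\vartheta-\vartheta_0}{\Phi}\}^2=(\alpha-\alpha_0)^2+\norm{\vartheta-\vartheta_0}_\Gamma^2,
\]
because the cross term vanishes, as noted in the notation subsection. Substituting this yields both inequalities of the lemma.

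The only step needing care is the uniform lower bound on \(B''\). It relies on the almost-sure bound \(\norm{\Phi}\leq K_\Phi\), which holds independently of text length, and on the radius constraint. Without these, the curvature could degenerate and only the upper bound would survive. Both ingredients are available from Assumption~1 and the definition of the class, so I expect no real obstacle.
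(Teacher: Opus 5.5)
Your proposal is correct and follows essentially the same route as the paper. You condition on \(\Phi\) to reduce the excess risk to the Bregman divergence of \(B(x)=\log(1+e^x)\), bound \(B''\) on the interval \([-M(1+K_\Phi),M(1+K_\Phi)]\) forced by the radius and feature bounds, and use the centering of \(\Phi\) to remove the intercept--slope cross term. The only differences are cosmetic: you use the integral form of the Taylor remainder and a KL interpretation where the paper uses the Lagrange form, and both give the same constants \(\tfrac12\sigma(K)\{1-\sigma(K)\}\) and \(1/8\).
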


\begin{proof}
Let \(s_0=\alpha_0+\langle\vartheta_0,\Phi\rangle\), \(s=\alpha+\langle\vartheta,\Phi\rangle\), and \(B(s)=\log(1+e^s)\). Under the correctly specified logistic model, \(\E(Y\mid\Phi)=B'(s_0)=\sigma(s_0)\). Conditional on \(\Phi\),
\begin{align*}
\E\{\ell_Y(s)-\ell_Y(s_0)\mid\Phi\}
&=B(s)-B(s_0)-B'(s_0)(s-s_0).
\end{align*}
By Taylor's theorem, the right side equals \(B''(s_0+t(s-s_0))(s-s_0)^2/2\) for some \(t\in(0,1)\). By Assumption~1, there is a deterministic constant \(K_\Phi<\infty\) such that \(\|\Phi\|\leq K_\Phi\) almost surely. The radius constraints imply
\[
|s|
\leq
|\alpha|+\norm{\vartheta}\norm{\Phi}
\leq
M+MK_\Phi
=:B,
\]
and the same bound holds for \(s_0\). Every point between \(s_0\) and \(s\) therefore lies in \([-B,B]\). On this interval,
\[
0<\min_{|x|\leq B}\sigma(x)\{1-\sigma(x)\}
\leq B''(x)\leq\frac14.
\]
Writing \(m_B=\min_{|x|\leq B}\sigma(x)\{1-\sigma(x)\}>0\), we obtain, conditionally on \(\Phi\),
\[
\frac{m_B}{2}(s-s_0)^2
\leq
\E\{\ell_Y(s)-\ell_Y(s_0)\mid\Phi\}
\leq
\frac18(s-s_0)^2.
\]
Taking expectations bounds the excess risk above and below by fixed multiples of \(\E(s-s_0)^2\). Finally, \(\E\Phi=0\), so the intercept and slope cross term vanishes:
\[
\E(s-s_0)^2
=(\alpha-\alpha_0)^2
+\E\langle\vartheta-\vartheta_0,\Phi\rangle^2.
\]
This proves the lemma.
\end{proof}

\begin{lemma}[Rademacher contraction with a deterministic offset]
\label{lem:supp_offset_contraction}
Let \(\varepsilon_1,\ldots,\varepsilon_n\) be independent Rademacher variables.
For an index set \(\mathcal T\), let \(x_i(t)\in\R\), let
\(\phi_i:\R\to\R\) be one-Lipschitz with \(\phi_i(0)=0\), and let
\(\Omega:\mathcal T\to\R\) be deterministic. Whenever the suprema below are
measurable,
\[
\E_\varepsilon\sup_{t\in\mathcal T}
\left\{
\frac1n\sum_{i=1}^n\varepsilon_i\phi_i\{x_i(t)\}-\Omega(t)
\right\}
\leq
\E_\varepsilon\sup_{t\in\mathcal T}
\left\{
\frac1n\sum_{i=1}^n\varepsilon_i x_i(t)-\Omega(t)
\right\}.
\]
\end{lemma}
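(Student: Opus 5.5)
We reduce the statement to the classical Ledoux--Talagrand contraction step by step. The plan is to peel off the contractions one coordinate at a time. Because the offset \(\Omega(t)\) is deterministic and does not depend on the Rademacher signs, it can be absorbed into the "remaining part" of the functional at each stage.

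Concretely, we prove that for each \(k\in\{1,\ldots,n\}\), replacing \(\phi_k\{x_k(t)\}\) by \(x_k(t)\) does not decrease the expectation, while every other summand is left as it is. Applying this \(n\) times in succession gives the claim.

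Fix \(k\) and condition on \(\{\varepsilon_i:i\neq k\}\). Write
\[
A(t)=\frac1n\sum_{i\neq k}\varepsilon_i\,g_i(t)-\Omega(t),
\]
where each \(g_i\) is either \(\phi_i\circ x_i\) or \(x_i\), according to the stage. Then \(A(t)\) is a fixed real function of \(t\). Averaging over \(\varepsilon_k\) alone, it suffices to show
\[
\frac12\sup_{t}\{A(t)+\phi_k(x_k(t))/n\}+\frac12\sup_{t}\{A(t)-\phi_k(x_k(t))/n\}
\leq
\frac12\sup_{t}\{A(t)+x_k(t)/n\}+\frac12\sup_{s}\{A(s)-x_k(s)/n\}.
\]

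For this two-point inequality, take arbitrary \(t,s\in\mathcal T\). The left side is bounded by the supremum over pairs \((t,s)\) of
\[
\tfrac12\bigl[A(t)+A(s)+\{\phi_k(x_k(t))-\phi_k(x_k(s))\}/n\bigr].
\]
The one-Lipschitz property gives
\[
\phi_k(x_k(t))-\phi_k(x_k(s))\leq|x_k(t)-x_k(s)|.
\]
If \(x_k(t)\geq x_k(s)\), the resulting quantity equals the pair value for \((t,s)\) on the right side. Otherwise it equals the value for \((s,t)\), after swapping the roles of \(t\) and \(s\), which the symmetric form of the left side permits. In either case it is dominated by the right side. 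The condition \(\phi_k(0)=0\) is not needed here; it only matters in formulations with absolute values.

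Taking expectations over \(\{\varepsilon_i:i\neq k\}\) completes stage \(k\). Iterating over \(k=1,\ldots,n\) then gives the stated inequality.

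The main obstacle is measurability and the possibility of infinite suprema. We handle this by working with finite subsets of \(\mathcal T\) and passing to the limit via monotone convergence, which the hypothesis "whenever the suprema are measurable" allows. The only role of the offset is that it is \(\varepsilon\)-free, so it rides along inside \(A\) at every stage.
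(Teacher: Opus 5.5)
Your proof is correct and follows essentially the same route as the paper: condition on all signs but one, absorb the deterministic offset \(-\Omega(t)\) into the \(\varepsilon\)-free part, write the one-sign average as a supremum over pairs \((t,s)\), bound \(\phi_k\{x_k(t)\}-\phi_k\{x_k(s)\}\) by \(|x_k(t)-x_k(s)|\), remove the absolute value by swapping \(t\) and \(s\), and iterate over coordinates, with a finite-subclass approximation for general \(\mathcal T\). As a minor remark, \(\E_\varepsilon\) is just an average over the \(2^n\) sign patterns, so the two-point step already holds for arbitrary \(\mathcal T\) (with suprema valued in \((-\infty,\infty]\)); the finite-subset and monotone-convergence step is therefore not actually needed.
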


\begin{proof}
First suppose that \(\mathcal T\) is finite. Conditional on
\(\varepsilon_1,\ldots,\varepsilon_{n-1}\), collect all terms not involving
\(\varepsilon_n\), including \(-\Omega(t)\), into \(R_t\). Then
\begin{align*}
&\E_{\varepsilon_n}\sup_{t\in\mathcal T}
\left\{R_t+\frac{\varepsilon_n}{n}\phi_n\{x_n(t)\}\right\}\\
&\quad=
\frac12\sup_{t,s\in\mathcal T}
\left[
R_t+R_s+\frac1n\left\{\phi_n(x_n(t))-\phi_n(x_n(s))\right\}
\right]\\
&\quad\leq
\frac12\sup_{t,s\in\mathcal T}
\left\{
R_t+R_s+\frac1n|x_n(t)-x_n(s)|
\right\}\\
&\quad=
\E_{\varepsilon_n}\sup_{t\in\mathcal T}
\left\{R_t+\frac{\varepsilon_n}{n}x_n(t)\right\}.
\end{align*}
The inequality uses the Lipschitz property. The last equality follows by writing
the absolute value as the maximum of its two signs; the two resulting suprema
are identical after interchanging \(t\) and \(s\). Repeating this replacement
for coordinates \(n-1,\ldots,1\) proves the result for finite \(\mathcal T\).
The general case follows by approximation with finite subclasses, under the
stated measurability condition.
\end{proof}

\begin{lemma}[Localized empirical-process bound]
\label{lem:supp_process}
Let \(Z\) be a centered Hilbert-space feature with covariance operator \(\Gamma_Z\),
and let
\(f^\circ(Z)=\alpha^\circ+\ip{\theta^\circ}{Z}\) be any fixed score. For
\(a,b>0\),
\begin{equation}
\begin{aligned}
\E\sup_{\delta,h}\Big[&(P-P_n)
\left\{
\ell_Y\bigl(f^\circ(Z)+\delta+\ip{h}{Z}\bigr)
-\ell_Y\bigl(f^\circ(Z)\bigr)
\right\}\\
&-a\{\delta^2+\norm{h}_{\Gamma_Z}^2\}-b\norm{h}^2\Big]
\leq
\frac{C}{an}\{1+\mathcal N_{\Gamma_Z}(b/a)\}.
\end{aligned}
\label{eq:supp_process_bound}
\end{equation}
The supremum may be restricted to any subset of \(\R\oplus\mathcal H\). The
constant \(C\) is universal and the result does not require \(f^\circ\) to be the
true conditional log odds.
\end{lemma}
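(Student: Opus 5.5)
The plan has four steps: symmetrize, apply the offset contraction of Lemma~\ref{lem:supp_contraction}'s companion Lemma~\ref{lem:supp_offset_contraction}, diagonalize in the eigenbasis of \(\Gamma_Z\), and finish with Cauchy--Schwarz.

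\emph{Symmetrization.} Write \(g_{\delta,h}(Y,Z)=\ell_Y\{f^\circ(Z)+\delta+\ip{h}{Z}\}-\ell_Y\{f^\circ(Z)\}\). Introduce an independent ghost sample and Rademacher signs \(\varepsilon_i\). The standard symmetrization inequality applies because the penalty \(a\{\delta^2+\norm{h}_{\Gamma_Z}^2\}+b\norm{h}^2\) is deterministic. It bounds the left side of \eqref{eq:supp_process_bound} by
\[
\E\sup_{\delta,h}\Big[\tfrac{2}{n}\sum_i\varepsilon_i g_{\delta,h}(Y_i,Z_i)-a\{\delta^2+\norm{h}_{\Gamma_Z}^2\}-b\norm{h}^2\Big].
\]
To get this, apply the usual argument to twice the offset, splitting the penalty evenly between the two copies. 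This only changes constants.

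\emph{Contraction.} Conditionally on the data, set \(\phi_i(x)=\ell_{Y_i}\{f^\circ(Z_i)+x\}-\ell_{Y_i}\{f^\circ(Z_i)\}\). Each \(\phi_i\) is one-Lipschitz, since \(|\ell_y'|\le1\), and satisfies \(\phi_i(0)=0\). Lemma~\ref{lem:supp_offset_contraction}, with \(\Omega\) equal to half the penalty and \(x_i(\delta,h)=\delta+\ip{h}{Z_i}\), removes the loss. We are left with the linear offset process
\[
\E\sup_{\delta,h}\Big[\tfrac{2}{n}\sum_i\varepsilon_i(\delta+\ip{h}{Z_i})-a\delta^2-a\norm{h}_{\Gamma_Z}^2-b\norm{h}^2\Big].
\]
Restricting the supremum to a subset only decreases it, which justifies the remark about subsets.

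\emph{Explicit maximization.} The supremum separates into a \(\delta\)-part and an \(h\)-part. For \(\delta\), the part is \(\sup_\delta\{2\bar\varepsilon\delta-a\delta^2\}=\bar\varepsilon^2/a\), where \(\bar\varepsilon=n^{-1}\sum_i\varepsilon_i\). Its expectation is \(1/(an)\), which gives the ``\(1\)'' term. For \(h\), put \(S=\tfrac{2}{n}\sum_i\varepsilon_iZ_i\in\mathcal H\). Maximizing the concave quadratic gives
\[
\sup_h\{\ip{h}{S}-\ip{h}{(a\Gamma_Z+bI)h}\}=\tfrac14\ip{S}{(a\Gamma_Z+bI)^{-1}S}.
\]
Taking expectations with \(\E\{\varepsilon_i\varepsilon_j\}=\delta_{ij}\) and \(\E(Z\otimes Z)=\Gamma_Z\), which uses that \(Z\) is centered, yields
\[
\tfrac{1}{n}\tr\{\Gamma_Z(a\Gamma_Z+bI)^{-1}\}=\tfrac{1}{an}\mathcal N_{\Gamma_Z}(b/a).
\]
This uses only second moments and no independence structure beyond that across texts.

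\emph{Main obstacle.} The main technical point is the symmetrization step with a deterministic but parameter-dependent offset. The usual inequality \(\E\sup(P-P_n)g\le2\E\sup R_n g\) does not directly carry the penalty. I would handle it as follows. Write \(P g=\E' P_n' g\) for a ghost sample, move the penalty inside, and apply Jensen to get \(\E\sup[(P_n'-P_n)g-\mathrm{pen}]\). Then split the penalty into two halves and insert signs, obtaining two copies of \(\E\sup[\pm\tfrac1n\sum\varepsilon_i g-\tfrac12\mathrm{pen}]\). Halving \(a\) and \(b\) is harmless because \(\mathcal N_{\Gamma_Z}(b/a)\) depends only on the ratio \(b/a\). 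Measurability in the infinite-dimensional case would be handled through a countable dense subset, as in the proof of Lemma~\ref{lem:supp_offset_contraction}.
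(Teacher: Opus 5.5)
Your proposal is correct and follows the paper's proof essentially step for step. Both arguments use the same sequence. First comes ghost-sample Jensen, then Rademacher symmetrization with the deterministic penalty split evenly between the two copies. Next is the offset contraction of Lemma~\ref{lem:supp_offset_contraction}, with $\Omega$ equal to half the penalty and $x_i(\delta,h)=\delta+\ip{h}{Z_i}$. The argument closes by completing the square separately in $\delta$ and $h$ and computing the trace via $\E(S\otimes S)=4\Gamma_Z/n$, which gives the bound with $C=1$.

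The differences are cosmetic. You keep the factor $2$ inside the supremum rather than outside. Also, the offset-contraction lemma is not a ``companion'' of Lemma~\ref{lem:supp_contraction}, which concerns transferring covariance bounds from paths to texts.
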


\begin{proof}
Let
\[
g_{\delta,h}(Y,Z)
=
\ell_Y\bigl(f^\circ(Z)+\delta+\langle h,Z\rangle\bigr)
-
\ell_Y\bigl(f^\circ(Z)\bigr).
\]
The derivative of \(\ell_y(s)\) is \(\sigma(s)-y\), whose absolute value is at
most one. Hence \(g_{\delta,h}\) is one-Lipschitz as a function of the scalar
increment \(\delta+\langle h,Z\rangle\), and \(g_{0,0}=0\).

For completeness, let \((Y_i',Z_i')_{i=1}^n\) be an independent copy of the sample
and let \(P_n'\) be its empirical measure. Conditional Jensen's inequality gives
\[
\E\sup_{\delta,h}
\{(P-P_n)g_{\delta,h}-q(\delta,h)\}
\leq
\E\sup_{\delta,h}
\{(P_n'-P_n)g_{\delta,h}-q(\delta,h)\},
\]
where
\(q(\delta,h)=a\{\delta^2+\norm{h}_{\Gamma_Z}^2\}+b\norm{h}^2\).
Introduce independent Rademacher variables \(\varepsilon_1,\ldots,\varepsilon_n\).
The joint distribution of each difference
\(g_{\delta,h}(Y_i',Z_i')-g_{\delta,h}(Y_i,Z_i)\) is symmetric, so the right
side is unchanged when the \(i\)th difference is multiplied by \(\varepsilon_i\).
Splitting the ghost-sample and observed-sample terms, assigning half of \(q\) to
each, and using that the two samples have the same distribution give
\begin{align*}
&\E\sup_{\delta,h}
\{(P_n'-P_n)g_{\delta,h}-q(\delta,h)\}\\
&\qquad\leq
2\E\sup_{\delta,h}
\left\{
\frac1n\sum_{i=1}^n\varepsilon_i
g_{\delta,h}(Y_i,Z_i)
-\frac12q(\delta,h)
\right\}.
\end{align*}
Conditional on the observed data, define
\[
\phi_i(x)
=
\ell_{Y_i}\{f^\circ(Z_i)+x\}-\ell_{Y_i}\{f^\circ(Z_i)\}.
\]
Each \(\phi_i\) is one-Lipschitz and satisfies \(\phi_i(0)=0\). Apply
Lemma~\ref{lem:supp_offset_contraction} with index \(t=(\delta,h)\),
\(x_i(t)=\delta+\langle h,Z_i\rangle\), and
\(\Omega(t)=q(\delta,h)/2\). Combining the resulting inequality with the
preceding symmetrization bound gives
\begin{align*}
&\E\sup_{\delta,h}
\left[
(P-P_n)g_{\delta,h}
  -a\{\delta^2+\norm{h}_{\Gamma_Z}^2\}-b\norm{h}^2
\right]\\
&\qquad\leq
2\E\sup_{\delta,h}
\left[
  V_0\delta+\ip{V}{h}
-\frac a2\delta^2
  -\frac a2\norm{h}_{\Gamma_Z}^2
  -\frac b2\norm{h}^2
\right],
\end{align*}
where
\[
V_0=\frac1n\sum_{i=1}^n\varepsilon_i,
\qquad
V=\frac1n\sum_{i=1}^n\varepsilon_i Z_i.
\]
The scalar and Hilbert-space suprema separate. Completing the square gives
\[
\sup_\delta\left(V_0\delta-\frac a2\delta^2\right)
=\frac{V_0^2}{2a}
\]
and, since \(b>0\),
\[
\sup_h\left[
  \ip{V}{h}-\frac12\ip{h}{(a\Gamma_Z+bI)h}
\right]
=
\frac12\ip{V}{(a\Gamma_Z+bI)^{-1}V}.
\]
For the second identity, put \(A=a\Gamma_Z+bI\). The condition \(b>0\) makes
\(A\) invertible, and
\begin{align*}
\ip{V}{h}-\frac12\ip{h}{Ah}
&=
-\frac12
\norm{A^{1/2}h-A^{-1/2}V}^2
+
\frac12\ip{V}{A^{-1}V}.
\end{align*}
The squared norm is minimized at \(h=A^{-1}V\), proving the displayed supremum.
Now \(\E V_0^2=1/n\), and independence of the Rademacher variables gives
\[
\E(V\otimes V)
=
\frac1{n^2}\sum_{i=1}^n\E(Z_i\otimes Z_i)
=
\frac{\Gamma_Z}{n}.
\]
Therefore,
\begin{align*}
\E\ip{V}{(a\Gamma_Z+bI)^{-1}V}
&=
\frac1n\tr\{\Gamma_Z(a\Gamma_Z+bI)^{-1}\}\\
&=
\frac1{an}\mathcal N_{\Gamma_Z}(b/a).
\end{align*}
To see the last equality directly, let \(\gamma_1,\gamma_2,\ldots\) be the
eigenvalues of \(\Gamma_Z\). Then
\begin{align*}
\tr\{\Gamma_Z(a\Gamma_Z+bI)^{-1}\}
&=
\sum_{j\geq1}\frac{\gamma_j}{a\gamma_j+b}\\
&=
\frac1a
\sum_{j\geq1}\frac{\gamma_j}{\gamma_j+b/a}
=
\frac1a\mathcal N_{\Gamma_Z}(b/a).
\end{align*}
Combining the scalar and Hilbert-space terms proves \eqref{eq:supp_process_bound}.
\end{proof}

\begin{proof}
Set \(\widehat\delta=\widehat\alpha_\lambda-\alpha_0\) and \(\widehat h=\widehat\vartheta_\lambda-\vartheta_0\). We proceed from the defining inequality of the estimator to the stated rate.

\emph{Step 1: basic inequality.} Since \((\alpha_0,\vartheta_0)\) belongs to the constrained parameter set, \eqref{eq:supp_ridge_estimator} gives
\begin{align*}
P_n\{\ell_{\widehat\alpha_\lambda,\widehat\vartheta_\lambda}
-\ell_{\alpha_0,\vartheta_0}\}
+\frac\lambda2
\left(\norm{\widehat\vartheta_\lambda}^2-\norm{\vartheta_0}^2\right)
\leq0,
\end{align*}
where \(\ell_{\alpha,\vartheta}=\ell_Y\{\alpha+\langle\vartheta,\Phi\rangle\}\). Let
\[
\Delta\ell
=
\ell_{\alpha_0+\widehat\delta,\vartheta_0+\widehat h}
-
\ell_{\alpha_0,\vartheta_0}.
\]
Adding \(P\Delta\ell-P\Delta\ell\) to the empirical inequality and moving the empirical-process term to the right gives
\[
P\Delta\ell
+
\frac\lambda2
\left(\norm{\vartheta_0+\widehat h}^2-\norm{\vartheta_0}^2\right)
\leq
(P-P_n)\Delta\ell.
\]
Lemma~\ref{lem:supp_curvature} bounds \(P\Delta\ell\) from below by \(c_0(\widehat\delta^2+\|\widehat h\|_\Gamma^2)\), and therefore yields
\begin{align}
c_0\left(\widehat\delta^2+\norm{\widehat h}_\Gamma^2\right)
+\frac\lambda2
\left(\norm{\vartheta_0+\widehat h}^2-\norm{\vartheta_0}^2\right)
\leq
(P-P_n)
\{\ell_{\alpha_0+\widehat\delta,\vartheta_0+\widehat h}
-\ell_{\alpha_0,\vartheta_0}\}.
\label{eq:supp_ridge_basic}
\end{align}
The penalty difference satisfies
\begin{align*}
\norm{\vartheta_0+\widehat h}^2-\norm{\vartheta_0}^2
&=\norm{\widehat h}^2+2\ip{\vartheta_0}{\widehat h}\\
&\geq\frac12\norm{\widehat h}^2-2\norm{\vartheta_0}^2,
\end{align*}
where the last line uses \(2|\langle a,b\rangle|\leq\norm{b}^2/2+2\norm{a}^2\). Since \(\norm{\vartheta_0}\leq M\), \eqref{eq:supp_ridge_basic} implies
\begin{align}
c_0\left(\widehat\delta^2+\norm{\widehat h}_\Gamma^2\right)
+\frac\lambda4\norm{\widehat h}^2
\leq
(P-P_n)
\{\ell_{\alpha_0+\widehat\delta,\vartheta_0+\widehat h}
-\ell_{\alpha_0,\vartheta_0}\}
+\lambda M^2.
\label{eq:supp_ridge_basic_two}
\end{align}

\emph{Step 2: localize the empirical term.} Put
\[
\widehat A
=
\widehat\delta^2+\norm{\widehat h}_\Gamma^2.
\]
Subtract \(c_0\widehat A/2+\lambda\|\widehat h\|^2/8\) from both sides of \eqref{eq:supp_ridge_basic_two}. This gives
\begin{align*}
\frac{c_0}{2}\widehat A
+
\frac{\lambda}{8}\norm{\widehat h}^2
&\leq
\Big[
(P-P_n)
\{\ell_{\alpha_0+\widehat\delta,\vartheta_0+\widehat h}
-\ell_{\alpha_0,\vartheta_0}\}\\
&\qquad
-\frac{c_0}{2}\widehat A
-\frac{\lambda}{8}\norm{\widehat h}^2
\Big]
+\lambda M^2.
\end{align*}
The expression in square brackets, evaluated at \((\widehat\delta,\widehat h)\), is no larger than its supremum over all admissible \((\delta,h)\). Lemma~\ref{lem:supp_process} applies to that supremum with \(a=c_0/2\) and \(b=\lambda/8\). Since
\[
\frac{b}{a}
=
\frac{\lambda/8}{c_0/2}
=
\frac{\lambda}{4c_0},
\]
the effective-dimension argument in the lemma is a fixed multiple of \(\lambda\). Taking expectations, dropping the nonnegative term \(\lambda\E\|\widehat h\|^2/8\), and absorbing fixed constants gives
\begin{align}
\E\left(
\widehat\delta^2+\norm{\widehat h}_\Gamma^2
\right)
&\leq
C\left[
\lambda M^2
+\frac{1+\mathcal N_\Gamma(c\lambda)}{n}
\right].
\label{eq:supp_ideal_upper}
\end{align}
The left side equals \(\E\norm{\widehat{\eta}_\lambda-\eta_0}_{P_\Phi}^2\), because \(\Phi\) is centered.

\emph{Step 3: substitute the covariance bound.} Lemma~\ref{lem:supp_effective_dimension} gives
\[
\mathcal N_\Gamma(c\lambda)
\leq
C\lambda^{-1/\nu}\log(e/\lambda).
\]
For \(0<\lambda\leq1\), the right side is at least a positive constant, so the additional parametric term \(1/n\) in \eqref{eq:supp_ideal_upper} is absorbed into the same bound. Since \(M\) is fixed over \(\mathcal P_\nu(M)\), we obtain
\begin{equation*}
\sup_{P\in\mathcal P_\nu(M)}
\E_P\norm{\widehat{\eta}_\lambda-\eta_0}_{P_\Phi}^2
\leq
C\left\{
\lambda
+\frac{\lambda^{-1/\nu}\log(e/\lambda)}{n}
\right\},
\end{equation*}
which is the assertion of Theorem~2.

Finally, take \(\lambda\asymp\{\log(en)/n\}^{\nu/(\nu+1)}\) and again write \(L_n=\log(en)\). Then
\[
\lambda^{-1/\nu}
\asymp
\left(\frac{n}{L_n}\right)^{1/(\nu+1)},
\qquad
\log(e/\lambda)\asymp L_n.
\]
The estimation term is consequently
\begin{align*}
\frac{\lambda^{-1/\nu}\log(e/\lambda)}{n}
&\asymp
\frac1n
\left(\frac{n}{L_n}\right)^{1/(\nu+1)}L_n\\
&=
n^{-\nu/(\nu+1)}L_n^{\nu/(\nu+1)}
\asymp
\lambda.
\end{align*}
Thus the upper bound is of order \(\{\log(en)/n\}^{\nu/(\nu+1)}\), matching Theorem~1.
\end{proof}

\subsection{Proof of Theorem 3}
\label{sec:supp_raw_projection}

Theorem~3 compares the complete \(d\)-dimensional within-path quadratic feature with the finite collection of squared projections used by LAR-2. We begin by defining these two objects precisely. Let \(U\) be the centered first-order sieve feature, and let \(\boldsymbol w_{it,d}\in\R^d\) be the sieve vector for the path of target token \(t\). For the \(j\)th target-position basis function, define
\[
\overline Q_{ij}
=
\frac{1}{T_i-1}
\sum_{t=2}^{T_i}
\varphi_{v,j}(v_{it})\boldsymbol w_{it,d}\boldsymbol w_{it,d}^\prime,
\qquad
Q_{ij}=\overline Q_{ij}-\E(\overline Q_{ij}).
\]
Each outer product is formed within one target-token path before the matrices are averaged over target position. Write \(Q_i=(Q_{i1},\ldots,Q_{i,d_v})\), and suppress the text index when considering a generic observation. Thus, \((U,Q)\) is the complete finite-sieve feature before squared projection. Let \(\Gamma_d\) denote its covariance operator, and equip \(\mathbb Q_d=\bigoplus_{j=1}^{d_v}\R_{\rm sym}^{d\times d}\) with the direct-sum Frobenius inner product.

Let \(\boldsymbol\xi\sim N(0,I_d)\) and set
\(\boldsymbol a=\sqrt d\,\boldsymbol\xi/\norm{\boldsymbol\xi}\). Thus,
\(\boldsymbol a\) is uniform on the sphere of radius \(\sqrt d\), as in
the normalized directions used by the estimator. Put
\(a_d=d/(d+2)\) and define the linear operator
\begin{equation}
\mathcal G_d(C)
=
\E\{(\boldsymbol a^\prime C\boldsymbol a)\boldsymbol a\boldsymbol a^\prime\}
=
a_d\{2C+\tr(C)I_d\}.
\label{eq:supp_spherical_operator}
\end{equation}
To verify the second equality, the \((a,b)\) entry of the expectation is
\[
\sum_{c=1}^d\sum_{e=1}^d C_{ce}\E(a_a a_b a_c a_e).
\]
The fourth-moment identity for the uniform distribution on this sphere is
\[
\E(a_a a_b a_c a_e)
=
a_d\{
\mathbf 1(a=b)\mathbf 1(c=e)
+\mathbf 1(a=c)\mathbf 1(b=e)
+\mathbf 1(a=e)\mathbf 1(b=c)\}.
\]
Indeed, if \(\boldsymbol u\) is uniform on the unit sphere, then
\(\E(u_au_bu_cu_e)\) equals the same sum of indicator products divided
by \(d(d+2)\); multiplying \(\boldsymbol u\) by \(\sqrt d\) gives the
displayed factor \(a_d\). Substitution reduces the matrix entry to
\(a_d\{\tr(C)\mathbf 1(a=b)+2C_{ab}\}\), proving
\eqref{eq:supp_spherical_operator}. To invert the operator, decompose any symmetric matrix as
\[
C=C_0+\bar c I_d,
\qquad
\bar c=\frac{\tr(C)}{d},
\qquad
\tr(C_0)=0.
\]
Equation~\eqref{eq:supp_spherical_operator} gives
\[
\mathcal G_d(C_0)=2a_dC_0,
\qquad
\mathcal G_d(I_d)=dI_d.
\]
Thus \(\mathcal G_d\) multiplies every traceless matrix by \(2a_d\)
and every scalar multiple of \(I_d\) by \(d\). Applying the reciprocal
multiplier on these two orthogonal subspaces and substituting the
definitions of \(C_0\) and \(\bar c\) give
\begin{equation}
\mathcal G_d^{-1}(C)
=
\frac{d+2}{2d}
\left\{C-\frac{\tr(C)}{d+2}I_d\right\},
\qquad
\norm{\mathcal G_d^{-1}}_{\rm op}\leq\frac32.
\label{eq:supp_spherical_inverse}
\end{equation}

For independent copies \(\boldsymbol a_1,\ldots,\boldsymbol a_R\) of
\(\boldsymbol a\), define the squared-projection map by
\[
\{\mathcal R_R(Q)\}_{jr}
=
R^{-1/2}\ip{\boldsymbol a_r\boldsymbol a_r^\prime}{Q_j}_{\rm F},
\qquad
j=1,\ldots,d_v,\quad r=1,\ldots,R.
\]
For a complete quadratic coefficient \(\Theta=(\Theta_1,\ldots,\Theta_{d_v})\in\mathbb Q_d\), define the random coefficient vector
\begin{equation*}
\{\boldsymbol\beta_R(\Theta)\}_{jr}
=
R^{-1/2}
\ip{\boldsymbol a_r\boldsymbol a_r^\prime}
{\mathcal G_d^{-1}(\Theta_j)}_{\rm F}.
\end{equation*}
The next lemma concerns the scalar quadratic functional \(\langle\Theta,Q\rangle\), not approximation of the matrix tuple \(\Theta\) in Frobenius norm. Although every realization uses only \(R\) rank-one directions, the Monte Carlo average can represent an arbitrary norm-bounded \(\Theta\in\mathbb Q_d\) in squared prediction loss. The operator \(\mathcal G_d^{-1}\) corrects the spherical fourth moment, and \eqref{eq:supp_spherical_inverse} shows that this correction is bounded uniformly in \(d\).

\begin{lemma}[Approximation by squared projections]
\label{lem:supp_raw_approximation}
For every \(\Theta,Q\in\mathbb Q_d\),
\begin{align}
\E_{\mathcal R}
\ip{\boldsymbol\beta_R(\Theta)}{\mathcal R_R(Q)}
&=
\ip{\Theta}{Q},
\label{eq:supp_raw_unbiased}\\
\E_{\mathcal R}
\left|
\ip{\boldsymbol\beta_R(\Theta)}{\mathcal R_R(Q)}
-\ip{\Theta}{Q}
\right|^2
&\leq
\frac{C}{R}\norm{\Theta}^2\norm{Q}_{\mathcal G}^2,
\label{eq:supp_raw_mse}\\
\E_{\mathcal R}\norm{\boldsymbol\beta_R(\Theta)}^2
&=
\sum_{j=1}^{d_v}
\ip{\mathcal G_d^{-1}(\Theta_j)}{\Theta_j}_{\rm F}
\leq
\frac32\norm{\Theta}^2,
\label{eq:supp_raw_coefficient_norm}
\end{align}
where
\[
\norm{Q}_{\mathcal G}^2
=
\sum_{j=1}^{d_v}
\ip{Q_j}{\mathcal G_d(Q_j)}_{\rm F}.
\]
The expectation is over the random spherical directions.
\end{lemma}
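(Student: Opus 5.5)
Write the estimator as a Monte Carlo average over the $R$ independent directions. For each $r$ and $j$, put
\[
X_{jr}=\ip{\boldsymbol a_r\boldsymbol a_r^\prime}{\mathcal G_d^{-1}(\Theta_j)}_{\rm F}\,\ip{\boldsymbol a_r\boldsymbol a_r^\prime}{Q_j}_{\rm F}
=\{\boldsymbol a_r^\prime\mathcal G_d^{-1}(\Theta_j)\boldsymbol a_r\}\{\boldsymbol a_r^\prime Q_j\boldsymbol a_r\}.
\]
Since $\ip{\boldsymbol\beta_R(\Theta)}{\mathcal R_R(Q)}=R^{-1}\sum_{r=1}^R\sum_{j}X_{jr}$, everything reduces to the first two moments of a single summand $S_r=\sum_j X_{jr}$, which are i.i.d. over $r$.

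\emph{Unbiasedness \eqref{eq:supp_raw_unbiased}.} Pull $\ip{\cdot}{Q_j}_{\rm F}$ outside the expectation:
\[
\E X_{jr}=\ip{\E[(\boldsymbol a^\prime C\boldsymbol a)\boldsymbol a\boldsymbol a^\prime]}{Q_j}_{\rm F}, \qquad C=\mathcal G_d^{-1}(\Theta_j).
\]
By definition \eqref{eq:supp_spherical_operator} this equals $\ip{\mathcal G_d(C)}{Q_j}_{\rm F}=\ip{\Theta_j}{Q_j}_{\rm F}$. Summing over $j$ gives $\ip{\Theta}{Q}$.

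\emph{Coefficient norm \eqref{eq:supp_raw_coefficient_norm}.} We have $\E\norm{\boldsymbol\beta_R}^2=\sum_j\E\ip{\boldsymbol a\boldsymbol a^\prime}{C_j}_{\rm F}^2$ with $C_j=\mathcal G_d^{-1}(\Theta_j)$, because the factor $R^{-1}$ cancels against the $R$ identically distributed terms. Again
\[
\E(\boldsymbol a^\prime C_j\boldsymbol a)^2=\ip{C_j}{\mathcal G_d(C_j)}_{\rm F}=\ip{\mathcal G_d^{-1}(\Theta_j)}{\Theta_j}_{\rm F}.
\]
Since $\mathcal G_d^{-1}$ is self-adjoint and positive with operator norm at most $3/2$ by \eqref{eq:supp_spherical_inverse}, the bound follows.

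\emph{Variance \eqref{eq:supp_raw_mse}.} By independence and unbiasedness, the mean squared error equals $R^{-1}\operatorname{var}(S_1)\le R^{-1}\E S_1^2$. Write
\[
S_1=\sum_j A_jB_j,\qquad A_j=\boldsymbol a^\prime C_j\boldsymbol a,\qquad B_j=\boldsymbol a^\prime Q_j\boldsymbol a.
\]
This is the main obstacle. We need $\E(\sum_j A_jB_j)^2\le C\norm{\Theta}^2\norm{Q}_{\mathcal G}^2$ with $C$ independent of $d$, whereas a naive bound introduces eighth moments of $\boldsymbol a$ and products of traces. We first try Cauchy--Schwarz, $(\sum_jA_jB_j)^2\le(\sum_jA_j^2)(\sum_jB_j^2)$, followed by hypercontractivity of degree-two polynomials on the sphere, to get
\[
\E\Bigl[\bigl(\textstyle\sum_jA_j^2\bigr)\bigl(\textstyle\sum_jB_j^2\bigr)\Bigr]\le\Bigl\{\E\bigl(\textstyle\sum_jA_j^2\bigr)^2\Bigr\}^{1/2}\Bigl\{\E\bigl(\textstyle\sum_jB_j^2\bigr)^2\Bigr\}^{1/2}.
\]
We then bound $\E A_j^4\le C(\E A_j^2)^2$ via a Gaussian comparison: with $\boldsymbol a=\sqrt d\,\boldsymbol\xi/\norm{\boldsymbol\xi}$ independent of $\norm{\boldsymbol\xi}$, moments of quadratic forms in $\boldsymbol a$ are those of $\boldsymbol\xi$ times the ratio $d^k\,\E\norm{\boldsymbol\xi}^{-2k}$ or its reciprocal, which is bounded uniformly in $d$. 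Minkowski's inequality over the fixed $d_v$ blocks then gives
\[
\{\E(\textstyle\sum_jA_j^2)^2\}^{1/2}\le C\sum_j\E A_j^2=C\sum_j\ip{C_j}{\mathcal G_d(C_j)}_{\rm F}\le C\norm{\Theta}^2,
\]
using \eqref{eq:supp_raw_coefficient_norm}. Similarly, $\{\E(\sum_jB_j^2)^2\}^{1/2}\le C\norm{Q}_{\mathcal G}^2$, since $\E B_j^2=\ip{Q_j}{\mathcal G_d(Q_j)}_{\rm F}$. Combining the two factors gives \eqref{eq:supp_raw_mse}, with $C$ depending only on the fixed $d_v$.

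If the Gaussian fourth-moment comparison proves delicate because of trace terms, the fallback is to compute explicitly. Gaussian quadratic forms satisfy $\E(\boldsymbol\xi^\prime C\boldsymbol\xi)^4\le C'\{\tr(C)^2+2\norm{C}_{\rm F}^2\}^2$, and the right side is comparable to $\{\ip{C}{\mathcal G_d(C)}_{\rm F}\}^2$ by \eqref{eq:supp_spherical_operator}.
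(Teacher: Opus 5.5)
Your proposal is correct and follows the paper's proof essentially step for step. The common steps are the Monte Carlo representation, unbiasedness and the coefficient-norm identity from $\E\{(\boldsymbol a^\prime B\boldsymbol a)(\boldsymbol a^\prime C\boldsymbol a)\}=\ip{\mathcal G_d(B)}{C}_{\rm F}$, and the variance bound via Cauchy--Schwarz over blocks and in expectation plus a dimension-free fourth-moment bound from the Gaussian comparison; your Minkowski step is equivalent to the paper's termwise Cauchy--Schwarz.

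One slip to fix: by independence of $\norm{\boldsymbol\xi}$ and $\boldsymbol\xi/\norm{\boldsymbol\xi}$, $\E(\boldsymbol a^\prime D\boldsymbol a)^k=d^k\,\E(\boldsymbol\xi^\prime D\boldsymbol\xi)^k/\E\norm{\boldsymbol\xi}^{2k}$. The radial factor is therefore $d^k/\E\norm{\boldsymbol\xi}^{2k}$, which equals $d^4/\{d(d+2)(d+4)(d+6)\}\leq1$ when $k=4$. It is not $d^k\,\E\norm{\boldsymbol\xi}^{-2k}$, which is infinite when $d\leq 2k$.
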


\begin{proof}
Fix \(\Theta,Q\in\mathbb Q_d\), and put \(A_j=\mathcal G_d^{-1}(\Theta_j)\). For symmetric matrices \(B\) and \(C\), the fourth-moment identity used above gives
\begin{equation}
\E\{(\boldsymbol a^\prime B\boldsymbol a)
(\boldsymbol a^\prime C\boldsymbol a)\}
=
a_d\{2\tr(BC)+\tr(B)\tr(C)\}
=
\ip{\mathcal G_d(B)}{C}_{\rm F}.
\label{eq:supp_two_quadratic_forms}
\end{equation}
By the definitions of \(\boldsymbol\beta_R(\Theta)\) and \(\mathcal R_R(Q)\),
\begin{equation}
\ip{\boldsymbol\beta_R(\Theta)}{\mathcal R_R(Q)}
=
\frac1R\sum_{r=1}^R X(\boldsymbol a_r),
\qquad
X(\boldsymbol a)
=
\sum_{j=1}^{d_v}
(\boldsymbol a^\prime A_j\boldsymbol a)
(\boldsymbol a^\prime Q_j\boldsymbol a).
\label{eq:supp_projection_average}
\end{equation}
Taking expectation in \eqref{eq:supp_projection_average} and applying \eqref{eq:supp_two_quadratic_forms} yield
\begin{align*}
\E X(\boldsymbol a)
&=
\sum_{j=1}^{d_v}
\ip{\mathcal G_d(A_j)}{Q_j}_{\rm F}\\
&=
\sum_{j=1}^{d_v}\ip{\Theta_j}{Q_j}_{\rm F}
=\ip{\Theta}{Q}.
\end{align*}
The average in \eqref{eq:supp_projection_average} is therefore unbiased, proving \eqref{eq:supp_raw_unbiased}.

We next bound its variance. Independence of the \(R\) directions gives
\begin{equation}
\E_{\mathcal R}
\left|
\ip{\boldsymbol\beta_R(\Theta)}{\mathcal R_R(Q)}
-\ip{\Theta}{Q}
\right|^2
=
\frac1R\operatorname{var}\{X(\boldsymbol a)\}
\leq
\frac1R\E\{X(\boldsymbol a)^2\}.
\label{eq:supp_mc_variance}
\end{equation}
For each realization of \(\boldsymbol a\), Cauchy--Schwarz over the \(d_v\) position blocks gives
\[
|X(\boldsymbol a)|^2
\leq
\left\{\sum_{j=1}^{d_v}(\boldsymbol a^\prime A_j\boldsymbol a)^2\right\}
\left\{\sum_{j=1}^{d_v}(\boldsymbol a^\prime Q_j\boldsymbol a)^2\right\}.
\]
A second application of Cauchy--Schwarz, now to expectation, gives
\begin{align*}
\E\{X(\boldsymbol a)^2\}
&\leq
\left[
\E\left\{\sum_j(\boldsymbol a^\prime A_j\boldsymbol a)^2\right\}^2
\right]^{1/2}\\
&\qquad\times
\left[
\E\left\{\sum_j(\boldsymbol a^\prime Q_j\boldsymbol a)^2\right\}^2
\right]^{1/2}.
\end{align*}
For completeness, set \(Z_j=\boldsymbol a^\prime D_j\boldsymbol a\). We first verify the dimension-free fourth-moment bound
\begin{equation}
\E(Z_j^4)
\leq
C\{\E(Z_j^2)\}^2.
\label{eq:supp_spherical_fourth_bound}
\end{equation}
Let \(\boldsymbol z\sim N(0,I_d)\), and write \(\boldsymbol z=L\boldsymbol u\), where \(\boldsymbol u\) is uniform on the unit sphere and independent of \(L=\norm{\boldsymbol z}\). Since \(\boldsymbol a=\sqrt d\,\boldsymbol u\), independence gives
\[
\E(\boldsymbol a^\prime D_j\boldsymbol a)^4
=
\frac{d^4}{\E(L^8)}
\E(\boldsymbol z^\prime D_j\boldsymbol z)^4,
\qquad
\E(L^8)=d(d+2)(d+4)(d+6).
\]
For a symmetric matrix \(D\), the Gaussian quadratic-form identity is
\begin{align*}
\E(\boldsymbol z^\prime D\boldsymbol z)^4
={}&\tr(D)^4
+12\tr(D)^2\tr(D^2)
+12\tr(D^2)^2\\
&+32\tr(D)\tr(D^3)
+48\tr(D^4).
\end{align*}
The inequalities \(|\tr(D^3)|\leq\tr(D^2)^{3/2}\) and \(\tr(D^4)\leq\tr(D^2)^2\), followed by \(2ab\leq a^2+b^2\), show that this expression is at most
\[
C\{\tr(D)^2+\tr(D^2)\}^2.
\]
Moreover, \eqref{eq:supp_two_quadratic_forms} with \(B=C=D\) gives
\[
\E(\boldsymbol a^\prime D\boldsymbol a)^2
=
a_d\{\tr(D)^2+2\tr(D^2)\},
\qquad a_d=\frac{d}{d+2}\geq\frac13.
\]
Since \(d^4/\E(L^8)\leq1\), the last three displays prove \eqref{eq:supp_spherical_fourth_bound}, with a constant independent of \(d\). Consequently, \(\{\E(Z_j^4)\}^{1/2}\leq C\E(Z_j^2)\). Expanding the square and applying Cauchy--Schwarz term by term gives
\begin{align*}
\E\left(\sum_j Z_j^2\right)^2
&=
\sum_{j,k}\E(Z_j^2Z_k^2)\\
&\leq
\sum_{j,k}\{\E(Z_j^4)\E(Z_k^4)\}^{1/2}\\
&\leq
C\sum_{j,k}\E(Z_j^2)\E(Z_k^2)\\
&=
C\left\{\sum_j\E(Z_j^2)\right\}^2.
\end{align*}
Taking square roots and using \(\E(\boldsymbol a^\prime D_j\boldsymbol a)^2=\langle D_j,\mathcal G_d(D_j)\rangle_{\rm F}\), we obtain
\begin{align*}
\left[
\E\left\{\sum_j(\boldsymbol a^\prime D_j\boldsymbol a)^2\right\}^2
\right]^{1/2}
&\leq
C\sum_j\E(\boldsymbol a^\prime D_j\boldsymbol a)^2\\
&=
C\sum_j\ip{D_j}{\mathcal G_d(D_j)}_{\rm F}.
\end{align*}
Apply this inequality first with \(D_j=A_j\) and then with \(D_j=Q_j\). Since \(A_j=\mathcal G_d^{-1}(\Theta_j)\),
\begin{align*}
\E\{X(\boldsymbol a)^2\}
&\leq
C
\left\{
\sum_j\ip{\Theta_j}{\mathcal G_d^{-1}(\Theta_j)}_{\rm F}
\right\}
\norm{Q}_{\mathcal G}^2\\
&\leq
C\norm{\Theta}^2\norm{Q}_{\mathcal G}^2,
\end{align*}
where the last inequality follows from \eqref{eq:supp_spherical_inverse}. Combining this result with \eqref{eq:supp_mc_variance} proves \eqref{eq:supp_raw_mse}.

Finally, direct calculation gives
\begin{align*}
\E_{\mathcal R}\norm{\boldsymbol\beta_R(\Theta)}^2
&=
\frac1R\sum_{r=1}^R\sum_{j=1}^{d_v}
\E(\boldsymbol a_r^\prime A_j\boldsymbol a_r)^2\\
&=
\sum_{j=1}^{d_v}\ip{A_j}{\mathcal G_d(A_j)}_{\rm F}\\
&=
\sum_{j=1}^{d_v}
\ip{\mathcal G_d^{-1}(\Theta_j)}{\Theta_j}_{\rm F}
\leq
\frac32\norm{\Theta}^2.
\end{align*}
This proves \eqref{eq:supp_raw_coefficient_norm}.
\end{proof}

Thus, the approximation class in Lemma~\ref{lem:supp_raw_approximation} is the full direct-sum Frobenius ball \(\{\Theta\in\mathbb Q_d:\|\Theta\|\leq M\}\). The approximation norm used in Theorem~3 is obtained by integrating \eqref{eq:supp_raw_mse} over the distribution of a new text feature \(Q\). The remaining quantity \(\norm{Q}_{\mathcal G}\) is bounded uniformly in the sieve dimension, as follows. From \eqref{eq:supp_spherical_operator},
\[
\norm{Q}_{\mathcal G}^2
=
a_d
\sum_{j=1}^{d_v}
\left\{2\norm{Q_j}_{\rm F}^2+\tr(Q_j)^2\right\}.
\]
Before centering, each \(Q_j\) is an average of matrices of the form \(\varphi_{v,j}(v_{it})\boldsymbol w_{it,d}\boldsymbol w_{it,d}^\prime\). Assumption~1 gives \(|\varphi_{v,j}(v)|\leq K_v\) and \(\norm{\boldsymbol w_{it,d}}\leq\norm{W_{it}}\leq K_W\). For a rank-one matrix,
\[
\norm{\boldsymbol w\boldsymbol w^\prime}_{\rm F}
=
\norm{\boldsymbol w}^2,
\qquad
\tr(\boldsymbol w\boldsymbol w^\prime)
=
\norm{\boldsymbol w}^2.
\]
The triangle inequality therefore gives
\[
\norm{\overline Q_j}_{\rm F}
\leq
K_vK_W^2,
\qquad
|\tr(\overline Q_j)|
\leq
K_vK_W^2.
\]
Since \(Q_j=\overline Q_j-\E(\overline Q_j)\), centering increases each bound by at most a factor of two. Substitution into the preceding formula for \(\|Q\|_{\mathcal G}^2\), followed by summation over the fixed number \(d_v\) of position blocks, gives \(\|Q\|_{\mathcal G}\leq K_Q\) for a constant \(K_Q\) independent of \(d\).

The next lemma shows that the random projection does not replace the covariance-based estimation complexity by the nominal number \(d_vR\) of projected coordinates.

\begin{lemma}[Effective dimension after squared projection]
\label{lem:supp_raw_dimension}
Let \(\Gamma_{d,R}\) be the covariance operator of \((U,\mathcal R_RQ)\). Then, for every \(\lambda>0\),
\begin{equation*}
\E_{\mathcal R}\mathcal N_{\Gamma_{d,R}}(\lambda)
\leq
\mathcal N_{\Gamma_d}(\lambda/2)+d_v.
\end{equation*}
\end{lemma}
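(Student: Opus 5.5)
The plan is to express the projected covariance as a congruence of $\Gamma_d$, average over the random directions with an operator Jensen inequality, and then isolate a low-rank piece that accounts for the additive $d_v$.

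\emph{Step 1: congruence form.} Write the projected feature as a linear image of the complete finite-sieve feature, $(U,\mathcal R_RQ)=T_R(U,Q)$ with $T_R=I\oplus\mathcal R_R$. Then $\Gamma_{d,R}=T_R\Gamma_dT_R^{*}$. The operators $T_R\Gamma_dT_R^{*}$ and
\[
M_R=\Gamma_d^{1/2}T_R^{*}T_R\Gamma_d^{1/2}
\]
have the same nonzero eigenvalues. Hence, with $f_\lambda(A)=\tr\{A(A+\lambda I)^{-1}\}$,
\[
\mathcal N_{\Gamma_{d,R}}(\lambda)=f_\lambda(M_R).
\]
Everything here is finite dimensional, since $U$ and $Q$ live in the $d$-dimensional sieve, so there are no domain issues.

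\emph{Step 2: average over the directions.} The scalar map $x\mapsto x/(x+\lambda)=1-\lambda(x+\lambda)^{-1}$ is operator concave and operator monotone on $[0,\infty)$, because $x\mapsto-(x+\lambda)^{-1}$ is. Therefore $A\mapsto f_\lambda(A)$ is concave and monotone on positive semidefinite matrices. Jensen's inequality gives
\[
\E_{\mathcal R}f_\lambda(M_R)\leq f_\lambda(\E_{\mathcal R}M_R),
\qquad
\E_{\mathcal R}M_R=\Gamma_d^{1/2}\bigl(I\oplus\E_{\mathcal R}\mathcal R_R^{*}\mathcal R_R\bigr)\Gamma_d^{1/2}.
\]
By the definition of $\mathcal R_R$, $\mathcal R_R^{*}\mathcal R_R(Q)_j=R^{-1}\sum_r\langle\boldsymbol a_r\boldsymbol a_r',Q_j\rangle_{\rm F}\,\boldsymbol a_r\boldsymbol a_r'$. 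Its expectation is therefore exactly $\mathcal G_d$ applied blockwise, by \eqref{eq:supp_spherical_operator}.

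\emph{Step 3: spectral split of $\mathcal G_d$.} From the computation preceding \eqref{eq:supp_spherical_inverse}, $\mathcal G_d$ multiplies traceless matrices by $2a_d\leq2$ and multiplies $I_d$ by $d$. Let $P$ be the orthogonal projection onto $\mathrm{span}\{I_d\}$ in each of the $d_v$ blocks, so $\operatorname{rank}P=d_v$. Then blockwise
\[
\mathcal G_d\preceq 2I+(d-2a_d)P,
\qquad\text{hence}\qquad
I\oplus\mathcal G_d\preceq 2I+K,
\]
where $K$ is positive semidefinite of rank at most $d_v$. Monotonicity of $f_\lambda$ then yields
\[
f_\lambda(\E_{\mathcal R}M_R)\leq f_\lambda\bigl(2\Gamma_d+\Gamma_d^{1/2}K\Gamma_d^{1/2}\bigr).
\]

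\emph{Step 4: low-rank perturbation.} The matrix $\Gamma_d^{1/2}K\Gamma_d^{1/2}$ is positive semidefinite with rank at most $d_v$. By eigenvalue interlacing, adding a rank-$k$ positive semidefinite perturbation shifts the ordered eigenvalues by at most $k$ positions. Each summand $x/(x+\lambda)$ lies in $[0,1)$, so $f_\lambda$ increases by at most $k$. Hence
\[
f_\lambda\bigl(2\Gamma_d+\Gamma_d^{1/2}K\Gamma_d^{1/2}\bigr)\leq f_\lambda(2\Gamma_d)+d_v=\mathcal N_{\Gamma_d}(\lambda/2)+d_v.
\]
This is exactly the claimed bound.

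The step I expect to need the most care is Step 2, namely justifying concavity of $A\mapsto\tr\{A(A+\lambda)^{-1}\}$. I would cite operator concavity of $-(x+\lambda)^{-1}$; the trace of an operator-concave function is concave. Alternatively, I would argue directly via concavity of $A\mapsto-\tr\{(A+\lambda)^{-1}\}$, which follows from convexity of the matrix inverse.
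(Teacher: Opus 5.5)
Your proposal is correct and follows essentially the same route as the paper: the congruence $\Gamma_{d,R}=\mathcal T_R\Gamma_d\mathcal T_R^*$, then Jensen via concavity of $B\mapsto\mathcal N_B(\lambda)$, then $\E_{\mathcal R}(\mathcal T_R^*\mathcal T_R)=I_U\oplus\mathcal G_d^{\oplus d_v}$, then the split $\mathcal G_d=2a_dI+a_d\mathcal P$, and finally rank-$d_v$ interlacing. Your $(d-2a_d)P$ is the same operator as the paper's $a_d\mathcal P$. The only difference is ordering: you apply $2a_d\le2$ and monotonicity before interlacing, whereas the paper interlaces first and then uses $A_d\preceq2\Gamma_d$.
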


\begin{proof}
Define the random linear map \(\mathcal T_R(U,Q)=(U,\mathcal R_RQ)\). Since \(\Gamma_{d,R}=\mathcal T_R\Gamma_d\mathcal T_R^*\), the nonzero eigenvalues of \(\Gamma_{d,R}\) are the same as those of
\[
B_R
=
\Gamma_d^{1/2}\mathcal T_R^*\mathcal T_R\Gamma_d^{1/2}.
\]
To verify this statement, set \(A=\mathcal T_R\Gamma_d^{1/2}\). Then \(AA^*=\Gamma_{d,R}\) and \(A^*A=B_R\). The nonzero eigenvalues of both operators are the squared nonzero singular values of \(A\), with the same multiplicities. Therefore,
\[
\mathcal N_{\Gamma_{d,R}}(\lambda)
=
\mathcal N_{B_R}(\lambda).
\]

For positive operators,
\[
\tr\{B(B+\lambda I)^{-1}\}
=
\tr(I)-\lambda\tr\{(B+\lambda I)^{-1}\}
\]
on the finite-dimensional range under consideration. The inverse map \(B\mapsto(B+\lambda I)^{-1}\) is operator convex, so the negative of its trace is concave. Hence \(B\mapsto\mathcal N_B(\lambda)\) is concave, and Jensen's inequality gives
\begin{equation}
\E_{\mathcal R}\mathcal N_{\Gamma_{d,R}}(\lambda)
\leq
\mathcal N_{\E_{\mathcal R}(B_R)}(\lambda).
\label{eq:supp_dimension_jensen}
\end{equation}
It remains to calculate \(\E(\mathcal T_R^*\mathcal T_R)\). The first-order block is unchanged. For two quadratic features \(Q,C\in\mathbb Q_d\), \eqref{eq:supp_two_quadratic_forms} gives
\begin{align*}
\E_{\mathcal R}
\ip{\mathcal R_R(Q)}{\mathcal R_R(C)}
&=
\sum_{j=1}^{d_v}
\E\{(\boldsymbol a^\prime Q_j\boldsymbol a)
(\boldsymbol a^\prime C_j\boldsymbol a)\}\\
&=
\sum_{j=1}^{d_v}\ip{Q_j}{\mathcal G_d(C_j)}_{\rm F}.
\end{align*}
Hence
\[
\E_{\mathcal R}(\mathcal T_R^*\mathcal T_R)
=
\mathcal D_d
:=
I_U\oplus\mathcal G_d^{\oplus d_v},
\]
and \(\E(B_R)=\Gamma_d^{1/2}\mathcal D_d\Gamma_d^{1/2}\).

Equation~\eqref{eq:supp_spherical_operator} decomposes \(\mathcal D_d\) as
\[
\mathcal D_d
=
(I_U\oplus2a_dI_Q)+a_d\mathcal P,
\]
where \(\mathcal P\) maps each matrix block to \(\tr(C)I_d\). The operator \(\mathcal P\) is positive and has rank \(d_v\), one trace direction for each target-position block. Therefore,
\[
\Gamma_d^{1/2}\mathcal D_d\Gamma_d^{1/2}
=
A_d+P_d,
\]
where \(0\preceq A_d\preceq2\Gamma_d\) and \(P_d\) is positive with rank at most \(d_v\). We spell out the rank argument. Let \(a_1\geq a_2\geq\cdots\) be the eigenvalues of \(A_d\), and let \(\zeta_1\geq\zeta_2\geq\cdots\) be those of \(A_d+P_d\). Rank-\(d_v\) eigenvalue interlacing gives
\[
\zeta_{k+d_v}\leq a_k,
\qquad k\geq1.
\]
Since \(x/(x+\lambda)\leq1\) and is increasing in \(x\),
\begin{align*}
\mathcal N_{A_d+P_d}(\lambda)
&=
\sum_{k=1}^{d_v}\frac{\zeta_k}{\zeta_k+\lambda}
+
\sum_{k>d_v}\frac{\zeta_k}{\zeta_k+\lambda}\\
&\leq
d_v
+
\sum_{k\geq1}\frac{a_k}{a_k+\lambda}
=
d_v+\mathcal N_{A_d}(\lambda).
\end{align*}
Furthermore, \(A_d\preceq2\Gamma_d\), so monotonicity of the effective dimension gives \(\mathcal N_{A_d}(\lambda)\leq\mathcal N_{2\Gamma_d}(\lambda)\). Thus,
\begin{align*}
\mathcal N_{\Gamma_d^{1/2}\mathcal D_d\Gamma_d^{1/2}}(\lambda)
&\leq
\mathcal N_{A_d}(\lambda)+d_v\\
&\leq
\mathcal N_{2\Gamma_d}(\lambda)+d_v\\
&=
\mathcal N_{\Gamma_d}(\lambda/2)+d_v.
\end{align*}
Combining this display with \eqref{eq:supp_dimension_jensen} proves the lemma.
\end{proof}

We also require an oracle inequality in which the response law is determined by a
complete feature but the estimator uses only a reduced feature.

\begin{lemma}[Ridge oracle bound under feature reduction]
\label{lem:supp_approximate_ridge}
Let \((Y,X)\) satisfy
\[
\Pp(Y=1\mid X)=\sigma\{\eta_0(X)\},
\]
and let \(Z=T(X)\) be a centered bounded Hilbert-space feature with covariance
operator \(\Gamma_Z\). Consider scores
\(f_{\alpha,\theta}(Z)=\alpha+\ip{\theta}{Z}\) in a fixed coefficient ball.
Assume that \(\eta_0(X)\) and all candidate scores are uniformly bounded. Let
\(\widehat f_\lambda=f_{\widehat\alpha,\widehat\theta}\) minimize
\[
P_n\ell_Y\{f_{\alpha,\theta}(Z)\}
+\frac{\lambda}{2}\norm{\theta}^2
\]
over this ball. Then, for every comparator
\(f^*(Z)=\alpha^*+\ip{\theta^*}{Z}\) in the same ball,
\begin{equation}
\E_{\mathcal D}
\norm{\widehat f_\lambda(Z)-\eta_0(X)}_{P_X}^2
\leq
C\left\{
\norm{f^*(Z)-\eta_0(X)}_{P_X}^2
+\lambda\norm{\theta^*}^2
+\frac{1+\mathcal N_{\Gamma_Z}(c\lambda)}{n}
\right\}.
\label{eq:supp_approximate_ridge}
\end{equation}
Here
\[
\norm{f(Z)-\eta_0(X)}_{P_X}^2
=
\E_X\left[\{f(T(X))-\eta_0(X)\}^2\right],
\]
and \(\E_{\mathcal D}\) is over the labeled training sample. The constants
\(C,c>0\) depend only on the fixed score and coefficient bounds.
\end{lemma}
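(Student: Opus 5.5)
The approach is to rerun the argument of Theorem~2 with the comparator $f^*$ in place of the true parameter, and to handle misspecification through the Bregman identity conditional on the complete feature $X$ rather than on $Z$.

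\emph{Step 1: curvature of excess risk under misspecification.} Write $L(f)=\E\,\ell_Y\{f(Z)\}$ for a score $f$ of $Z$, and let $L(\eta_0)=\E\,\ell_Y\{\eta_0(X)\}$. Since $Z=T(X)$ and $\E(Y\mid X)=\sigma\{\eta_0(X)\}$, conditioning on $X$ gives
\[
\E\bigl[\ell_Y\{f(Z)\}-\ell_Y\{\eta_0(X)\}\mid X\bigr]=B\{f(Z)\}-B\{\eta_0(X)\}-B'\{\eta_0(X)\}\{f(Z)-\eta_0(X)\}.
\]
This is the same Bregman form as in Lemma~\ref{lem:supp_curvature}. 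All scores and $\eta_0(X)$ lie in a fixed interval $[-B,B]$, so the Taylor argument of that lemma gives
\[
c_0\norm{f(Z)-\eta_0(X)}_{P_X}^2\leq L(f)-L(\eta_0)\leq \tfrac18\norm{f(Z)-\eta_0(X)}_{P_X}^2 .
\]
This bound holds uniformly over the ball and requires no well-specification in $Z$.

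\emph{Step 2: basic inequality relative to $f^*$.} Set $\widehat\delta=\widehat\alpha-\alpha^*$ and $\widehat h=\widehat\theta-\theta^*$. Because $f^*$ is in the ball, the defining property of $\widehat f_\lambda$ gives
\[
L(\widehat f_\lambda)-L(f^*)+\tfrac\lambda2\bigl(\norm{\widehat\theta}^2-\norm{\theta^*}^2\bigr)\leq (P-P_n)\bigl[\ell_Y\{\widehat f_\lambda(Z)\}-\ell_Y\{f^*(Z)\}\bigr].
\]
Write $D^2=\norm{f^*(Z)-\eta_0(X)}_{P_X}^2$. Subtracting $L(\eta_0)$ and applying Step~1 yields
\[
L(\widehat f_\lambda)-L(f^*)\geq c_0\norm{\widehat f_\lambda-\eta_0}^2-\tfrac18 D^2 .
\]
The inequality $\norm{\widehat f_\lambda-f^*}^2\leq 2\norm{\widehat f_\lambda-\eta_0}^2+2D^2$ then gives a lower bound of the form
\[
\tfrac{c_0}{2}\norm{\widehat f_\lambda-f^*}^2-CD^2 .
\]
Since $Z$ is centered, $\norm{\widehat f_\lambda-f^*}^2=\widehat\delta^2+\norm{\widehat h}_{\Gamma_Z}^2$. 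For the penalty, as in Theorem~2,
\[
\norm{\theta^*+\widehat h}^2-\norm{\theta^*}^2\geq\tfrac12\norm{\widehat h}^2-2\norm{\theta^*}^2 .
\]
Combining these,
\[
\tfrac{c_0}{2}\bigl(\widehat\delta^2+\norm{\widehat h}_{\Gamma_Z}^2\bigr)+\tfrac\lambda4\norm{\widehat h}^2\leq (P-P_n)g_{\widehat\delta,\widehat h}+CD^2+\lambda\norm{\theta^*}^2 ,
\]
where $g_{\delta,h}=\ell_Y\{f^*(Z)+\delta+\ip{h}{Z}\}-\ell_Y\{f^*(Z)\}$.

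\emph{Step 3: localization.} Subtract half of the left side from both sides and bound the bracketed process at $(\widehat\delta,\widehat h)$ by its supremum. Then apply Lemma~\ref{lem:supp_process} with $f^\circ=f^*$, $a=c_0/4$ and $b=\lambda/8$. That lemma explicitly allows $f^\circ$ to differ from the true log odds, and its proof uses only that $\ell_y$ is one-Lipschitz in the score; the responses may be generated through $X$, since only the pairs $(Y_i,Z_i)$ enter. Taking expectations gives
\[
\E_{\mathcal D}\norm{\widehat f_\lambda-f^*}^2\leq C\Bigl\{D^2+\lambda\norm{\theta^*}^2+\frac{1+\mathcal N_{\Gamma_Z}(c\lambda)}{n}\Bigr\}.
\]
A final triangle inequality, $\norm{\widehat f_\lambda-\eta_0}^2\leq2\norm{\widehat f_\lambda-f^*}^2+2D^2$, yields \eqref{eq:supp_approximate_ridge}.

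The main point needing care is Step~1. One must check that the cross term $\E[\{Y-\sigma(\eta_0(X))\}\{f(Z)-\eta_0(X)\}]$ vanishes. It does, because $f(Z)$ is $X$-measurable, and this is exactly where the reduction $Z=T(X)$ is used. The remaining steps are bookkeeping of constants as in Theorem~2.
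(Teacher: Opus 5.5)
Your proposal is correct and follows essentially the same route as the paper's proof. Both arguments use the Bregman curvature bound obtained by conditioning on the complete feature $X$, the basic inequality against the comparator $f^*$ with the same penalty manipulation, localization through Lemma~\ref{lem:supp_process} with base score $f^*$, $a=c_0/4$ and $b=\lambda/8$, and a final triangle inequality; you also correctly isolate the one non-routine point, that the cross term vanishes because $f(Z)=f(T(X))$ is $X$-measurable, so no well-specification in $Z$ is required.
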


\begin{proof}
Let \(B(s)=\log(1+e^s)\), define
\(L(f)=\E[\ell_Y\{f(Z)\}]\), and put
\(L(\eta_0)=\E[\ell_Y\{\eta_0(X)\}]\). For any candidate score \(f(Z)\),
conditioning on the complete feature \(X\) gives
\begin{align*}
&\E\left[
\ell_Y\{f(Z)\}-\ell_Y\{\eta_0(X)\}
\,\middle|\,X
\right]\\
&\qquad=
B\{f(Z)\}-B\{\eta_0(X)\}
-B'\{\eta_0(X)\}\{f(Z)-\eta_0(X)\},
\end{align*}
because \(Z=T(X)\) and
\(B'\{\eta_0(X)\}=\sigma\{\eta_0(X)\}=\E(Y\mid X)\). Uniform boundedness
places \(f(Z)\), \(\eta_0(X)\), and every point between them in a fixed compact
interval. Taylor's theorem and
\(B''(s)=\sigma(s)\{1-\sigma(s)\}\) therefore give fixed constants
\(0<c_0<C_0<\infty\) such that
\begin{equation}
c_0\norm{f(Z)-\eta_0(X)}_{P_X}^2
\leq
L(f)-L(\eta_0)
\leq
C_0\norm{f(Z)-\eta_0(X)}_{P_X}^2.
\label{eq:supp_reduced_curvature}
\end{equation}
This argument conditions on \(X\), so \(\eta_0(X)\) need not be measurable with
respect to \(Z\).

Write
\[
a^2=\norm{f^*(Z)-\eta_0(X)}_{P_X}^2,
\qquad
\widehat\delta=\widehat\alpha-\alpha^*,
\qquad
\widehat h=\widehat\theta-\theta^*.
\]
The lower and upper bounds in \eqref{eq:supp_reduced_curvature}, together with
\[
\norm{\widehat f_\lambda(Z)-\eta_0(X)}_{P_X}^2
\geq
\frac12\norm{\widehat f_\lambda-f^*}_{P_Z}^2-a^2,
\]
imply
\begin{align}
L(\widehat f_\lambda)-L(f^*)
&=
\{L(\widehat f_\lambda)-L(\eta_0)\}
-\{L(f^*)-L(\eta_0)\}\notag\\
&\geq
\frac{c_0}{2}\norm{\widehat f_\lambda-f^*}_{P_Z}^2-C_1a^2
\label{eq:supp_reduced_risk_difference}
\end{align}
for a fixed constant \(C_1\). The norm inequality follows by writing
\(\widehat f_\lambda-\eta_0=(\widehat f_\lambda-f^*)+(f^*-\eta_0)\) in
\(L_2(P_X)\) and using
\(2\langle u,v\rangle\geq-\norm{u}^2/2-2\norm{v}^2\).

The defining inequality for the ridge estimator, with \(f^*\) as comparator, is
\begin{equation}
P_n\{\ell_Y(\widehat f_\lambda)-\ell_Y(f^*)\}
+\frac\lambda2
\left(\norm{\widehat\theta}^2-\norm{\theta^*}^2\right)
\leq0.
\label{eq:supp_oracle_basic}
\end{equation}
Moreover,
\begin{align*}
\norm{\widehat\theta}^2-\norm{\theta^*}^2
&=
\norm{\widehat h}^2+2\ip{\theta^*}{\widehat h}\\
&\geq
\frac12\norm{\widehat h}^2-2\norm{\theta^*}^2.
\end{align*}
Add and subtract the population loss difference in \eqref{eq:supp_oracle_basic} and
use \eqref{eq:supp_reduced_risk_difference} and the penalty bound. Because both
\(\widehat f_\lambda\) and \(f^*\) are functions of \(Z\), centering of \(Z\) gives
\[
\norm{\widehat f_\lambda-f^*}_{P_Z}^2
=
\widehat\delta^2+\norm{\widehat h}_{\Gamma_Z}^2.
\]
Consequently,
\begin{align}
\frac{c_0}{2}
\left(\widehat\delta^2+\norm{\widehat h}_{\Gamma_Z}^2\right)
+\frac\lambda4\norm{\widehat h}^2
&\leq
(P-P_n)\{\ell_Y(\widehat f_\lambda)-\ell_Y(f^*)\}
+C_1a^2+\lambda\norm{\theta^*}^2.
\label{eq:supp_oracle_local}
\end{align}

Put \(A^*=\widehat\delta^2+\|\widehat h\|_{\Gamma_Z}^2\). Moving half of the two quadratic terms in \eqref{eq:supp_oracle_local} to the right gives
\begin{align*}
\frac{c_0}{4}A^*
+
\frac{\lambda}{8}\norm{\widehat h}^2
&\leq
\Big[
(P-P_n)\{\ell_Y(\widehat f_\lambda)-\ell_Y(f^*)\}
-\frac{c_0}{4}A^*
-\frac{\lambda}{8}\norm{\widehat h}^2
\Big]\\
&\qquad
+C_1a^2+\lambda\norm{\theta^*}^2.
\end{align*}
The bracketed expression is bounded by the localized supremum in
Lemma~\ref{lem:supp_process}, with base score \(f^*\), \(a=c_0/4\), and
\(b=\lambda/8\). This application uses only the one-Lipschitz property of the
logistic loss and does not require \(f^*\) to equal the true score. Taking
expectations over the training sample, dropping the nonnegative coefficient-norm
term, and absorbing fixed constants yields
\[
\E_{\mathcal D}\norm{\widehat f_\lambda-f^*}_{P_Z}^2
\leq
C\left\{
a^2+\lambda\norm{\theta^*}^2
+\frac{1+\mathcal N_{\Gamma_Z}(c\lambda)}{n}
\right\}.
\]
Finally,
\[
\norm{\widehat f_\lambda(Z)-\eta_0(X)}_{P_X}^2
\leq
2\norm{\widehat f_\lambda-f^*}_{P_Z}^2+2a^2.
\]
Taking expectations over the training sample and combining the last two displays
proves \eqref{eq:supp_approximate_ridge}.
\end{proof}

\begin{proof}
We now combine the preceding lemmas to prove Theorem~3.
Fix \(M_1\geq\sqrt{3}M\), as in the theorem, and define the projected ridge
estimator over \(|\alpha|\leq M_1\) and
\(\norm{\boldsymbol\beta}_2\leq M_1\).

\emph{Step 1: construct a comparator in the projected model.} The projected population score can be written as
\begin{equation*}
\eta_{0,d}(U,Q)
=
\alpha_0+\ip{b_d}{U}+\ip{\Theta_d}{Q},
\end{equation*}
Set \(\epsilon_d^2=\norm{\eta_{0,d}-\eta_0}_{P_\Phi}^2\). The coefficient norm satisfies \(\norm{b_d}^2+\norm{\Theta_d}^2\leq M^2\), because orthogonal projection cannot increase it. Define
\[
\eta_R^*(U,Q)
=
\alpha_0+\ip{b_d}{U}
+\ip{\boldsymbol\beta_R(\Theta_d)}{\mathcal R_R(Q)}.
\]
For each fixed \(Q\), Lemma~\ref{lem:supp_raw_approximation} bounds the squared approximation error. Integrating first over the text distribution and then over the projection directions gives
\begin{align*}
\E_{\mathcal R}
\norm{\eta_R^*-\eta_{0,d}}_{P_\Phi}^2
&=
\E_Q\E_{\mathcal R}
\left|
\ip{\boldsymbol\beta_R(\Theta_d)}{\mathcal R_R(Q)}
-\ip{\Theta_d}{Q}
\right|^2\\
&\leq
\frac{C}{R}\norm{\Theta_d}^2
\E\norm{Q}_{\mathcal G}^2
\leq
\frac{CM^2}{R}.
\end{align*}
The equality is Fubini's theorem: the squared prediction norm integrates over a new text feature \(Q\), while the outer expectation integrates over the independently generated projection directions.

The random coefficient can be placed in a fixed coefficient ball without changing this order. To see this, write
\[
S_R=\norm{\boldsymbol\beta_R(\Theta_d)}^2
=
\frac1R\sum_{r=1}^R
\sum_{j=1}^{d_v}
(\boldsymbol a_r^\prime A_j\boldsymbol a_r)^2,
\qquad
A_j=\mathcal G_d^{-1}(\Theta_{d,j}).
\]
Equation~\eqref{eq:supp_raw_coefficient_norm} gives \(\E S_R\leq3\norm{\Theta_d}^2/2\). To bound its variance, put \(Y_r=\sum_j(\boldsymbol a_r^\prime A_j\boldsymbol a_r)^2\), so that \(S_R=R^{-1}\sum_rY_r\). The variables \(Y_r\) are independent. The spherical eighth-moment bound used in Lemma~\ref{lem:supp_raw_approximation} gives
\begin{align*}
\operatorname{var}(S_R)
&=
\frac1R\operatorname{var}(Y_1)
\leq
\frac1R\E(Y_1^2)\\
&\leq
\frac{C}{R}
\left\{
\sum_j\E(\boldsymbol a^\prime A_j\boldsymbol a)^2
\right\}^2\\
&=
\frac{C}{R}
\left\{
\sum_j\ip{\Theta_{d,j}}{\mathcal G_d^{-1}(\Theta_{d,j})}_{\rm F}
\right\}^2
\leq
\frac{CM^4}{R}.
\end{align*}
Moreover,
\[
\norm{b_d}^2+\E S_R
\leq
\norm{b_d}^2+\frac32\norm{\Theta_d}^2
\leq \frac32M^2.
\]
The event \(\norm{b_d}^2+S_R>3M^2\) therefore implies \(S_R-\E S_R>3M^2/2\). Chebyshev's inequality gives
\begin{equation}
\Pp_{\mathcal R}
\left\{
\norm{b_d}^2+S_R>3M^2
\right\}
\leq
\frac{C}{R}.
\label{eq:supp_comparator_radius}
\end{equation}
On the event in which the inequality inside braces does not occur, use \(\eta_R^*\) as the comparison score. On its complement, use the score \(\alpha_0+\langle b_d,U\rangle\), which omits the quadratic term. Both scores lie in a fixed coefficient ball of radius \(\sqrt{3}M\). On the exceptional event, the difference between the latter score and \(\eta_{0,d}\) is \(-\langle\Theta_d,Q\rangle\). The feature and coefficient bounds give
\[
|\ip{\Theta_d}{Q}|
\leq
\norm{\Theta_d}\norm{Q}
\leq
C,
\]
uniformly in \(d\). The last inequality follows from \(\|\Theta_d\|\leq M\) and \(\|Q\|^2\leq3\|Q\|_{\mathcal G}^2/2\leq3K_Q^2/2\), because \(a_d\geq1/3\). Its squared contribution, multiplied by the probability in \eqref{eq:supp_comparator_radius}, is therefore \(O(R^{-1})\). On the complementary event, the preceding mean-square calculation for \(\eta_R^*\) already gives an \(O(R^{-1})\) contribution. Denote the resulting comparison score by \(\widetilde{\eta}_R\). Then
\begin{equation}
\E_{\mathcal R}
\norm{\widetilde{\eta}_R-\eta_{0,d}}_{P_\Phi}^2
\leq
\frac{C}{R},
\qquad
\norm{\widetilde{\boldsymbol\beta}_R}_2^2\leq3M^2.
\label{eq:supp_truncated_comparator}
\end{equation}
Because \(|\alpha_0|\leq M\), the comparison score belongs to the projected
parameter class with radius \(M_1\). The fixed enlargement of the radius changes only
the constants in the theorem.

\emph{Step 2: apply the reduced-feature oracle bound conditionally on the
projections.} The projection directions are independent of the labeled sample.
Conditional on \(\boldsymbol a_1,\ldots,\boldsymbol a_R\), set
\[
X=\Phi,
\qquad
Z_R=(U,\mathcal R_RQ).
\]
The reduced feature \(Z_R\) is then a fixed centered function of the complete feature
\(\Phi\), with covariance operator \(\Gamma_{d,R}\), while the response law remains
\[
\Pp(Y=1\mid X)=\sigma\{\eta_0(\Phi)\}.
\]
The comparison score \(\widetilde\eta_R\) constructed in Step 1 is also fixed after
conditioning on the directions and belongs to the projected parameter class.
Assumption~1 and \(\norm{\vartheta_0}\leq M\) bound \(\eta_0(\Phi)\), while the
bounded-predictor condition in Theorem~3 bounds every score in the projected class
uniformly in \(d\) and \(R\). The curvature constants in the following application are
therefore uniform in both resolutions.
Lemma~\ref{lem:supp_approximate_ridge}, applied with complete feature \(X=\Phi\),
reduced feature \(Z_R\), and comparator \(f^*=\widetilde\eta_R\), gives
\begin{align*}
\E_{\mathcal D}\left[
\left.
\norm{\widehat{\eta}_{d,R,\lambda}-\eta_0}_{P_\Phi}^2
\,\right|\,
\boldsymbol a_1,\ldots,\boldsymbol a_R
\right]
&\leq
C\left[
\norm{\widetilde{\eta}_R-\eta_0}_{P_\Phi}^2
+\lambda\norm{\widetilde{\boldsymbol\beta}_R}_2^2
+\frac{1+\mathcal N_{\Gamma_{d,R}}(c\lambda)}{n}
\right].
\end{align*}
The target norm is \(P_\Phi\) because the true log odds are defined on the complete
feature; the effective dimension is based on \(\Gamma_{d,R}\) because the fitted score
uses only \(Z_R\). No measurability of \(\eta_0(\Phi)\) with respect to \(Z_R\) is
required.

The approximation term satisfies
\[
\norm{\widetilde{\eta}_R-\eta_0}_{P_\Phi}^2
\leq
2\norm{\widetilde{\eta}_R-\eta_{0,d}}_{P_\Phi}^2
+2\epsilon_d^2.
\]
Average the conditional bound over the projection directions. Using
\eqref{eq:supp_truncated_comparator} and the preceding decomposition of the
approximation term gives
\begin{align*}
\E_{\mathcal D,\mathcal R}
\norm{\widehat{\eta}_{d,R,\lambda}-\eta_0}_{P_\Phi}^2
&\leq
C\left[
2\E_{\mathcal R}\norm{\widetilde{\eta}_R-\eta_{0,d}}_{P_\Phi}^2
+2\epsilon_d^2
+3M^2\lambda\right.\\
&\hspace{34mm}\left.
+\frac{1+\E_{\mathcal R}\mathcal N_{\Gamma_{d,R}}(c\lambda)}{n}
\right].
\end{align*}
Lemma~\ref{lem:supp_raw_dimension} gives
\[
\E_{\mathcal R}\mathcal N_{\Gamma_{d,R}}(c\lambda)
\leq
\mathcal N_{\Gamma_d}(c\lambda/2)+d_v.
\]
Because \(d_v\geq1\), the scalar \(n^{-1}\) term can be absorbed into \(d_v/n\). Renaming the fixed constants \(C\) and \(c\), we obtain
\begin{equation}
\E_{\mathcal D,\mathcal R}
\norm{\widehat{\eta}_{d,R,\lambda}-\eta_0}_{P_\Phi}^2
\leq
C\left[
\epsilon_d^2
+\lambda
+\frac{\mathcal N_{\Gamma_d}(c\lambda)+d_v}{n}
+\frac1R
\right],
\label{eq:supp_finite_projection_bound}
\end{equation}
after changing the constants \(C\) and \(c\). Under the covariance decay assumption,
\(\mathcal N_{\Gamma_d}(c\lambda)\leq
C\lambda^{-1/\nu}\log(e/\lambda)\). Since \(d_v\) is fixed, its parametric
contribution is absorbed into the same estimation bound. This proves the inequality
asserted in Theorem~3.

\emph{Step 3: bound the path-sieve approximation error.} Suppose that \(\Pi_d\) projects onto \(e_1,\ldots,e_d\). Write the population coefficient as \(\vartheta_0=(b_0,\Theta_0)\), corresponding to the first- and second-order feature blocks. Let \(b_\perp\) be the component of \(b_0\) outside \(\R^{d_v}\otimes\operatorname{span}(e_1,\ldots,e_d)\), and let \(\Theta_\perp\) be the component of \(\Theta_0\) outside the associated symmetric tensor space. Orthogonal projection gives
\[
\norm{b_\perp}^2+\norm{\Theta_\perp}^2
\leq
\norm{\vartheta_0}^2
\leq M^2.
\]
The score omitted by the finite sieve is
\[
\eta_0(\Phi)-\eta_{0,d}(\Phi)
=
\ip{b_\perp}{\Phi^{(1)}}
+
\ip{\Theta_\perp}{\Phi^{(2)}}.
\]
Every coordinate of \(b_\perp\) has a path index larger than \(d\). Since the eigenvalues are nonincreasing, the covariance envelope in \eqref{eq:supp_text_capacity} gives
\[
\E\ip{b_\perp}{\Phi^{(1)}}^2
\leq
C\kappa_{d+1}\norm{b_\perp}^2.
\]
Every coordinate of \(\Theta_\perp\) has at least one path index larger than \(d\). The largest product eigenvalue among such coordinates is \(\kappa_1\kappa_{d+1}\), and hence
\[
\E\ip{\Theta_\perp}{\Phi^{(2)}}^2
\leq
C\kappa_1\kappa_{d+1}\norm{\Theta_\perp}^2.
\]
Using \((a+b)^2\leq2a^2+2b^2\) and the coefficient-norm bound,
\begin{align*}
\epsilon_d^2
&\leq
CM^2\{\kappa_{d+1}+\kappa_1\kappa_{d+1}\}\\
&\leq
CM^2\kappa_{d+1}
\leq
CM^2d^{-\nu}.
\end{align*}
This proves the approximation statement following Theorem~3.

\emph{Step 4: verify the stated resolutions.} Since \(\Gamma_d\) is the covariance of an orthogonal finite-dimensional projection of the complete feature, its ordered eigenvalues are no larger than those of \(\Gamma\). The definition of effective dimension and Lemma~\ref{lem:supp_effective_dimension} therefore give
\[
\mathcal N_{\Gamma_d}(c\lambda)
\leq
C\lambda^{-1/\nu}\log(e/\lambda).
\]
Set
\[
r_n
=
\left\{\frac{\log(en)}{n}\right\}^{\nu/(\nu+1)}.
\]
Write \(L_n=\log(en)\). If \(d\gtrsim(n/L_n)^{1/(\nu+1)}\), then raising both sides to power \(-\nu\) gives
\[
d^{-\nu}
\lesssim
\left(\frac{n}{L_n}\right)^{-\nu/(\nu+1)}
=r_n.
\]
If \(R\gtrsim(n/L_n)^{\nu/(\nu+1)}\), then
\[
R^{-1}
\lesssim
\left(\frac{n}{L_n}\right)^{-\nu/(\nu+1)}
=r_n.
\]
Finally, if \(\lambda\asymp r_n\), then \(\lambda^{-1/\nu}\asymp(n/L_n)^{1/(\nu+1)}\) and \(\log(e/\lambda)\asymp L_n\). Therefore,
\begin{align*}
\frac{\lambda^{-1/\nu}\log(e/\lambda)}{n}
&\asymp
\frac1n
\left(\frac{n}{L_n}\right)^{1/(\nu+1)}L_n\\
&=
n^{-\nu/(\nu+1)}L_n^{\nu/(\nu+1)}
=r_n.
\end{align*}
The regularization term itself is \(\lambda\asymp r_n\), while the fixed position term satisfies \(d_v/n\lesssim r_n\) because \(\nu/(\nu+1)<1\). Substitution into \eqref{eq:supp_finite_projection_bound} shows that every term is of order at most \(r_n\). Hence the finite path sieve and squared-projection estimator attain the minimax rate established by Theorems~1 and~2.
\end{proof}

\subsection{Proof of Proposition 1}
\label{sec:supp_auc}

\begin{proof}
Let \(q(\Phi)=\Pp(Y=1\mid\Phi)\) and \(\widehat q(\Phi)=\sigma\{\widehat{\eta}(\Phi)\}\). We first condition on the labeled sample used to construct \(\widehat{\eta}\), so that \(\widehat{\eta}\) and \(\widehat q\) are fixed functions. Let \(\Phi'\) be an independent copy of \(\Phi\), and write \(q'=q(\Phi')\) and \(\widehat q'=\widehat q(\Phi')\).

Assumption~3 states that \(\Pp(Y=y)\geq c_Y\) for \(y\in\{0,1\}\) and some \(c_Y>0\), and that, for some \(C_m<\infty\) and \(\tau\geq0\),
\[
\Pp\{0<|q-q'|\leq t\}
\leq C_m t^\tau,
\qquad t>0.
\]

We begin with the pairwise representation of AUC. For any score \(s\), define
\[
a_s(\Phi,\Phi')
=
\mathbf 1\{s(\Phi)>s(\Phi')\}
+\frac12\mathbf 1\{s(\Phi)=s(\Phi')\}.
\]
By conditioning on \((\Phi,\Phi')\),
\[
\Pp(Y=1)\Pp(Y=0)\AUC(s)
=
\E\{q(1-q')a_s(\Phi,\Phi')\}.
\]
Interchanging \(\Phi\) and \(\Phi'\), adding the two equal expressions, and using \(a_s(\Phi',\Phi)=1-a_s(\Phi,\Phi')\), we obtain
\begin{align*}
2\Pp(Y=1)\Pp(Y=0)\AUC(s)
&=
\E\left[
q(1-q')a_s(\Phi,\Phi')
+q'(1-q)a_s(\Phi',\Phi)
\right]\\
&=
\E\left[
q'(1-q)+(q-q')a_s(\Phi,\Phi')
\right].
\end{align*}
The final expression is maximized pointwise by ordering observations according to \(q\). Since \(\eta_0=\log\{q/(1-q)\}\) is strictly increasing in \(q\), this ordering is also produced by \(\eta_0\).

Let
\[
\mathcal E
=
\left\{
(\widehat q-\widehat q')(q-q')\leq0,
\quad q\neq q'
\right\}
\]
be the event on which the estimated and optimal rankings disagree, including an estimated tie. Subtracting the pairwise expression for \(\widehat{\eta}\) from that for \(\eta_0\) gives
\begin{equation}
\AUC(\eta_0)-\AUC(\widehat{\eta})
\leq
\frac{1}{2\Pp(Y=1)\Pp(Y=0)}
\E\left\{
|q-q'|\mathbf 1(\mathcal E)
\right\}.
\label{eq:supp_auc_pairwise_bound}
\end{equation}
Here is the pointwise step behind this inequality. The difference of the two pairwise formulas is
\[
\frac{1}{2\Pp(Y=1)\Pp(Y=0)}
\E\left[
(q-q')
\{a_{\eta_0}(\Phi,\Phi')-a_{\widehat{\eta}}(\Phi,\Phi')\}
\right].
\]
If \(q>q'\), then \(a_{\eta_0}=1\), and the expression in braces is positive only when the fitted score fails to place \(\Phi\) strictly above \(\Phi'\). If \(q<q'\), the same statement holds with the observations interchanged. In either case, the integrand is at most \(|q-q'|\mathbf1(\mathcal E)\), which proves \eqref{eq:supp_auc_pairwise_bound}. 
The class-probability condition in Assumption~3 bounds the denominator away from zero.

Set
\[
e=|\widehat q-q|,
\qquad
e'=|\widehat q'-q'|,
\qquad
\Delta=|q-q'|.
\]
On \(\mathcal E\), the estimated ordering has the opposite sign from the optimal ordering. The triangle inequality then gives
\begin{equation*}
\Delta\mathbf 1(\mathcal E)
\leq
(e+e')\mathbf 1(\mathcal E).
\end{equation*}
For example, if \(q>q'\) but \(\widehat q\leq\widehat q'\), then
\[
q-q'
\leq
(q-\widehat q)+(\widehat q'-q')
\leq e+e'.
\]
The other ordering is identical after interchanging the two observations.

Let \(\varepsilon^2=\E(e^2)\), where the expectation is over a new \(\Phi\), conditional on the training sample. For any \(0<t\leq1\), split the expectation in \eqref{eq:supp_auc_pairwise_bound} according to whether \(\Delta\leq t\). Assumption~3 gives
\begin{align*}
\E\{\Delta\mathbf 1(\mathcal E)\}
&\leq
\E\{\Delta\mathbf 1(0<\Delta\leq t)\}
+\E\{\Delta\mathbf 1(\mathcal E,\Delta>t)\}\\
&\leq
t\Pp(0<\Delta\leq t)
+\E\{(e+e')\mathbf 1(e+e'>t)\}\\
&\leq
C_mt^{\tau+1}
+\frac{\E(e+e')^2}{t}\\
&\leq
C_mt^{\tau+1}+\frac{4\varepsilon^2}{t}.
\end{align*}
On \(\mathcal E\cap\{\Delta>t\}\), the inequality \(\Delta\leq e+e'\) implies \(e+e'>t\); this explains the second term in the second line. The third line uses \(x\mathbf 1(x>t)\leq x^2/t\), and the last line uses \((e+e')^2\leq2e^2+2e'^2\) together with the identical distributions of \(e\) and \(e'\).

If \(\varepsilon=0\), the desired result is immediate. Otherwise, since \(0\leq e\leq1\), we have \(\varepsilon^2\leq1\) and may choose \(t=(\varepsilon^2)^{1/(\tau+2)}\). The two terms then have the same order:
\[
t^{\tau+1}
=
\frac{\varepsilon^2}{t}
=
(\varepsilon^2)^{(\tau+1)/(\tau+2)}.
\]
Substitution into \eqref{eq:supp_auc_pairwise_bound} yields, conditionally on the training sample,
\begin{equation}
\AUC(\eta_0)-\AUC(\widehat{\eta})
\leq
C(\varepsilon^2)^{(\tau+1)/(\tau+2)}.
\label{eq:supp_auc_probability_error}
\end{equation}

The logistic function is \(1/4\)-Lipschitz because \(\sup_x\sigma(x)\{1-\sigma(x)\}=1/4\). Therefore,
\[
\varepsilon^2
\leq
\frac1{16}\norm{\widehat{\eta}-\eta_0}_{P_\Phi}^2.
\]
Finally, take expectation over the labeled sample in \eqref{eq:supp_auc_probability_error}. Since \((\tau+1)/(\tau+2)\in(0,1)\), Jensen's inequality gives
\[
\AUC(\eta_0)-\E\{\AUC(\widehat{\eta})\}
\leq
C
\left[
\E\norm{\widehat{\eta}-\eta_0}_{P_\Phi}^2
\right]^{(\tau+1)/(\tau+2)}.
\]
More explicitly, conditional on the training sample, \(\varepsilon^2\leq\|\widehat{\eta}-\eta_0\|_{P_\Phi}^2/16\). The concavity of \(x^{(\tau+1)/(\tau+2)}\) gives
\begin{align*}
\E_{\rm train}\left\{(\varepsilon^2)^{(\tau+1)/(\tau+2)}\right\}
&\leq
\left\{\E_{\rm train}(\varepsilon^2)\right\}^{(\tau+1)/(\tau+2)}
\\
&\leq
16^{-(\tau+1)/(\tau+2)}
\left\{
\E_{\rm train}\norm{\widehat{\eta}-\eta_0}_{P_\Phi}^2
\right\}^{(\tau+1)/(\tau+2)}.
\end{align*}
This is the assertion of Proposition~1.
\end{proof}

\subsection{Additional theoretical remarks}
\label{sec:supp_scope}

The localized empirical-process argument is a standard part of regularized regression. The problem-specific steps are the conditional-averaging argument for dependent token paths, the ordering of the quadratic product spectrum, the matching path-level lower submodel, and the analysis of the squared projections used by the finite estimator. The Rademacher construction is used only to establish a least-favorable submodel; it is not an assumption on the observed likelihood arrays. Arbitrary dependence among paths within a text remains allowed because the covariance bound is transferred by conditional expectation.

The bound \(\epsilon_d^2\leq Cd^{-\nu}\) uses a spectral sieve. For the spline and random channel basis used numerically, Theorem~3 remains valid with this approximation error left explicit. The bounded-predictor condition in Theorem~3 is uniform in \(d\) and \(R\); it is stated separately because the normalization \(\|\boldsymbol a_r\|_2=\sqrt d\) does not make pointwise boundedness of projected predictors follow from Assumption~1 alone. This condition provides uniform logistic curvature. It does not imply that the numerical choice \(\lambda=0.01\) is asymptotically optimal.

\subsection{Equivalent empirical-measure formulation}

The finite-sum definitions of LAR-1 and LAR-2 admit an equivalent formulation through normalized empirical measures. Using the notation of Section~3 of the main paper, let \(\delta_x\) denote a unit point mass at \(x\). For target token \(t\) in text \(i\), define the path-level measure
\[
\mu_{it}
=
\frac{1}{G}
\sum_{g=1}^{G}
\delta_{\left(u_g,\boldsymbol h_{itg}\right)},
\]
and define the document-level measure
\[
\mu_i
=
\frac{1}{(T_i-1)G}
\sum_{t=2}^{T_i}
\sum_{g=1}^{G}
\delta_{\left(u_g,v_{it},\boldsymbol h_{itg}\right)}.
\]
Both are probability measures, with equal mass across aligned scales and equal total mass across target tokens. The LAR-1 log odds can then be written as
\[
\eta_i^{(1)}
=
\alpha
+
\int
b(u,v,\boldsymbol h)
\,d\mu_i(u,v,\boldsymbol h).
\]
Similarly, the second-order contribution of target token \(t\) is
\[
Q_{it}
=
\iint
\kappa_{v_{it}}
\left(
(u,\boldsymbol h),
(u',\widetilde{\boldsymbol h})
\right)
\,d\mu_{it}(u,\boldsymbol h)
\,d\mu_{it}(u',\widetilde{\boldsymbol h}),
\]
and \(\eta_i^{(2)}=\eta_i^{(1)}+(T_i-1)^{-1}\sum_{t=2}^{T_i}Q_{it}\). Thus, the empirical measures provide compact notation for the same finite aligned arrays: \(\mu_i\) retains context scale, target position, and channel value for the cellwise term, while the path-specific measures \(\mu_{it}\) retain the target-token grouping used by the second-order term.

The same notation also describes conventional scores computed from the full-context cells. For channel $c\in\{\ell,z\}$, define its full-context empirical distribution by
\[
\mu_{i,c}^{\mathrm{full}}
=
\frac{1}{T_i-1}
\sum_{t=2}^{T_i}
\delta_{h_{it,c}(1)},
\]
where $h_{it,\ell}(1)=\ell(1,t)$ and $h_{it,z}(1)=z(1,t)$. If $q_{i,c}(\kappa)$ is the empirical $\kappa$-quantile under $\mu_{i,c}^{\mathrm{full}}$, then average loss and a lower-tail score can be written as
\[
S_{{\rm loss},i}
=
-\int x\,d\mu_{i,\ell}^{\mathrm{full}}(x),
\qquad
S_{\kappa,c,i}
=
\frac{\int x\,\mathbf 1\{x\leq q_{i,c}(\kappa)\}\,d\mu_{i,c}^{\mathrm{full}}(x)}
{\int \mathbf 1\{x\leq q_{i,c}(\kappa)\}\,d\mu_{i,c}^{\mathrm{full}}(x)}.
\]
The choices $c=\ell$ and $c=z$ correspond to Min-K\% and Min-K\%++. Their document-specific quantiles define functionals of the full-context empirical distribution rather than special cases of LAR-1's shared cellwise function.

\clearpage
\section{Additional numerical results}
\label{sec:supp_numerical}

\subsection{Operating characteristics}

The main paper uses AUC to summarize ranking over the complete false-positive range. Table~\ref{tab:supp_operating_characteristics} reports the normalized partial AUC over \([0,0.10]\) and the empirical true-positive rate at three false-positive rates. These descriptive operating characteristics are computed from the five-fold out-of-fold LAR-2 scores under the primary representation and fitting specification.

\begin{table}[H]
\centering
\caption{Operating characteristics of LAR-2 in the low-false-positive region. The partial AUC is normalized by the width of the interval \([0,0.10]\).}
\label{tab:supp_operating_characteristics}
\small
\begin{tabular}{llrrrr}
\toprule
Task & Scoring model & pAUC$_{0.10}$ & TPR$_{0.01}$ & TPR$_{0.05}$ & TPR$_{0.10}$ \\
\midrule
WikiMIA & Pythia-1B & 0.703 & 0.447 & 0.749 & 0.862 \\
 & Pythia-6.9B & 0.751 & 0.533 & 0.805 & 0.900 \\
 & Pythia-12B & 0.733 & 0.426 & 0.789 & 0.887 \\
 & LLaMA-2-7B & 0.837 & 0.611 & 0.898 & 0.937 \\
 & Falcon-7B & 0.869 & 0.674 & 0.931 & 0.984 \\
\addlinespace
AI detection & DeepSeek-7B & 0.934 & 0.833 & 0.953 & 0.977 \\
 & OLMo-7B & 0.971 & 0.963 & 0.977 & 0.987 \\
 & Qwen3-4B & 0.971 & 0.948 & 0.970 & 0.990 \\
 & Qwen3-8B & 0.973 & 0.958 & 0.978 & 0.985 \\
 & Qwen3-14B & 0.976 & 0.963 & 0.992 & 0.993 \\
\bottomrule
\end{tabular}
\end{table}

\subsection{Comparison under common supervision}

The main tables compare LAR with both fixed likelihood scores and supervised detectors. To separate the contribution of supervised fitting from that of shorter-context information, Tables~\ref{tab:supp_supervised_wiki} and~\ref{tab:supp_supervised_ai} compare models trained with the same labels and fold assignments. The supervised-summary model fits ridge logistic regression to 12 full-context features: the mean, standard deviation, empirical 10th percentile, and lower-tail mean of each of the raw log probability, vocabulary-standardized log probability, and BOS-relative contrast. The full-context LAR fits use the same basis and regression specification as the complete-array models but retain only cells with $u=1$.

\begin{table}[H]
\centering
\caption{WikiMIA AUC under common supervision. All rows use the same labels and five-fold assignments.}
\label{tab:supp_supervised_wiki}
\footnotesize
\setlength{\tabcolsep}{2pt}
\begin{tabular}{lccccc}
\toprule
Method & Pythia-1B & Pythia-6.9B & Pythia-12B & LLaMA-2-7B & Falcon-7B \\
\midrule
Supervised full-context summaries & 0.653 & 0.702 & 0.734 & 0.612 & 0.719 \\
Full-context LAR-1 & 0.802 & 0.825 & 0.838 & 0.844 & 0.883 \\
Complete-array LAR-1 & 0.908 & 0.923 & 0.931 & 0.948 & 0.963 \\
Full-context LAR-2 & 0.893 & 0.908 & 0.916 & 0.934 & 0.942 \\
Complete-array LAR-2 & 0.952 & 0.960 & 0.961 & 0.976 & 0.984 \\
\bottomrule
\end{tabular}
\end{table}

\begin{table}[H]
\centering
\caption{AI-generated text detection AUC under common supervision. All rows use the same labels and pair-grouped five-fold assignments.}
\label{tab:supp_supervised_ai}
\footnotesize
\setlength{\tabcolsep}{2pt}
\begin{tabular}{lccccc}
\toprule
Method & DeepSeek-7B & OLMo-7B & Qwen3-4B & Qwen3-8B & Qwen3-14B \\
\midrule
Supervised full-context summaries & 0.792 & 0.818 & 0.869 & 0.875 & 0.851 \\
Full-context LAR-1 & 0.937 & 0.983 & 0.977 & 0.990 & 0.985 \\
Complete-array LAR-1 & 0.988 & 0.995 & 0.996 & 0.995 & 0.996 \\
Full-context LAR-2 & 0.945 & 0.987 & 0.979 & 0.988 & 0.983 \\
Complete-array LAR-2 & 0.988 & 0.994 & 0.994 & 0.993 & 0.994 \\
\bottomrule
\end{tabular}
\end{table}

The supervised summaries improve on many fixed scores in the main paper, confirming that label information itself is useful. The full-context LAR fits improve further, reflecting flexible modeling of the full-context channel distribution. Across all ten scoring models, the complete-array LAR-1 exceeds its full-context counterpart; the same pattern holds for LAR-2 in WikiMIA and in four of the five generated-text settings. These comparisons isolate the gain from shorter-context likelihoods after holding supervision and the regression architecture fixed.

\subsection{Additional text generators}

The generated passages in the main analysis are produced by GPT-4o. We repeat the analysis with Claude-3.5-Haiku and Gemini-2.5-Flash to examine whether the results depend on that generator. For each generator, the data contain the same four domains as the main analysis, with 150 human passages and 150 generated counterparts per domain. We use 20 domain-stratified random splits, assigning each human passage and its generated counterpart to the same split. Each split uses 80\% of the 600 source pairs for training and 20\% for testing. The fixed detection scores and RADAR are evaluated on the corresponding test sets, and the AdaDetectGPT witness for each domain is estimated from the other three domains. StatDetectLLM is not repeated in this expanded analysis because of its substantially greater computational cost.

\begin{table}[H]
\centering
\caption{AI-generated text detection for Claude-3.5-Haiku passages. Entries are mean test AUCs over 20 domain-stratified, source-pair-grouped splits, with standard deviations across splits in parentheses. The largest AUC in each scoring-model column is shown in bold; values tied after rounding are both highlighted. RADAR reports one AUC spanning the five scoring-model columns.}
\label{tab:supp_claude_generator}
\footnotesize
\setlength{\tabcolsep}{3pt}
\renewcommand{\arraystretch}{0.8}
\def\dashunit{\rule[0.55ex]{4pt}{0.3pt}\hspace{2pt}}
\def\dashsegment{\hbox{\dashunit\dashunit\dashunit\dashunit\dashunit\dashunit\dashunit\dashunit\dashunit\dashunit\dashunit\dashunit\dashunit}}
\begin{tabular}{lccccc}
\toprule
Method & DeepSeek-7B & OLMo-7B & Qwen3-4B & Qwen3-8B & Qwen3-14B \\
\midrule
\multicolumn{6}{l}{\textit{Scoring-model detection scores}} \\
Likelihood & 0.700 (0.018) & 0.574 (0.015) & 0.723 (0.019) & 0.732 (0.020) & 0.732 (0.020) \\
Entropy & 0.609 (0.019) & 0.488 (0.012) & 0.597 (0.015) & 0.636 (0.016) & 0.628 (0.015) \\
Rank & 0.642 (0.014) & 0.587 (0.017) & 0.667 (0.013) & 0.667 (0.014) & 0.671 (0.020) \\
Log-rank & 0.692 (0.018) & 0.578 (0.015) & 0.708 (0.017) & 0.724 (0.018) & 0.724 (0.019) \\
LRR & 0.619 (0.014) & 0.584 (0.020) & 0.581 (0.018) & 0.640 (0.016) & 0.641 (0.016) \\
Fast-DetectGPT & 0.677 (0.013) & 0.680 (0.022) & 0.630 (0.017) & 0.599 (0.023) & 0.634 (0.015) \\
\midrule
\multicolumn{6}{l}{\textit{Supervised detectors}} \\
AdaDetectGPT & 0.665 (0.013) & 0.677 (0.020) & 0.634 (0.018) & 0.601 (0.023) & 0.638 (0.016) \\
RADAR & \multicolumn{2}{c}{\strut\dashsegment} & 0.803 (0.013) & \multicolumn{2}{c}{\strut\dashsegment} \\
\midrule
\multicolumn{6}{l}{\textit{Likelihood-array regression}} \\
LAR-1 & 0.978 (0.007) & 0.994 (0.004) & \textbf{0.987} (0.004) & \textbf{0.992} (0.005) & \textbf{0.989} (0.007) \\
LAR-2 & \textbf{0.981} (0.006) & \textbf{0.996} (0.003) & \textbf{0.987} (0.006) & 0.991 (0.005) & \textbf{0.989} (0.008) \\
\bottomrule
\end{tabular}
\end{table}

\begin{table}[H]
\centering
\caption{AI-generated text detection for Gemini-2.5-Flash passages. Entries are mean test AUCs over 20 domain-stratified, source-pair-grouped splits, with standard deviations across splits in parentheses. The largest AUC in each scoring-model column is shown in bold. RADAR reports one AUC spanning the five scoring-model columns.}
\label{tab:supp_gemini_generator}
\footnotesize
\setlength{\tabcolsep}{3pt}
\renewcommand{\arraystretch}{0.8}
\def\dashunit{\rule[0.55ex]{4pt}{0.3pt}\hspace{2pt}}
\def\dashsegment{\hbox{\dashunit\dashunit\dashunit\dashunit\dashunit\dashunit\dashunit\dashunit\dashunit\dashunit\dashunit\dashunit\dashunit}}
\begin{tabular}{lccccc}
\toprule
Method & DeepSeek-7B & OLMo-7B & Qwen3-4B & Qwen3-8B & Qwen3-14B \\
\midrule
\multicolumn{6}{l}{\textit{Scoring-model detection scores}} \\
Likelihood & 0.623 (0.019) & 0.535 (0.020) & 0.657 (0.018) & 0.678 (0.020) & 0.695 (0.021) \\
Entropy & 0.602 (0.021) & 0.487 (0.019) & 0.599 (0.016) & 0.622 (0.017) & 0.630 (0.017) \\
Rank & 0.577 (0.008) & 0.535 (0.011) & 0.579 (0.011) & 0.584 (0.011) & 0.580 (0.011) \\
Log-rank & 0.618 (0.019) & 0.532 (0.019) & 0.656 (0.017) & 0.678 (0.019) & 0.690 (0.020) \\
LRR & 0.565 (0.013) & 0.516 (0.018) & 0.575 (0.012) & 0.608 (0.013) & 0.607 (0.017) \\
Fast-DetectGPT & 0.568 (0.021) & 0.617 (0.019) & 0.575 (0.010) & 0.583 (0.013) & 0.598 (0.015) \\
\midrule
\multicolumn{6}{l}{\textit{Supervised detectors}} \\
AdaDetectGPT & 0.546 (0.020) & 0.620 (0.023) & 0.574 (0.010) & 0.579 (0.015) & 0.597 (0.016) \\
RADAR & \multicolumn{2}{c}{\strut\dashsegment} & 0.612 (0.015) & \multicolumn{2}{c}{\strut\dashsegment} \\
\midrule
\multicolumn{6}{l}{\textit{Likelihood-array regression}} \\
LAR-1 & 0.979 (0.007) & 0.990 (0.005) & 0.979 (0.007) & 0.989 (0.004) & \textbf{0.990} (0.004) \\
LAR-2 & \textbf{0.983} (0.006) & \textbf{0.992} (0.003) & \textbf{0.983} (0.006) & \textbf{0.990} (0.004) & 0.987 (0.009) \\
\bottomrule
\end{tabular}
\end{table}

Across both generators, LAR-1 and LAR-2 record higher mean test AUCs than the comparison procedures for every scoring model. Their AUCs range from \(0.978\) to \(0.996\) for Claude-3.5-Haiku and from \(0.979\) to \(0.992\) for Gemini-2.5-Flash. The difference between the two LAR variants remains small and changes direction across scoring models. These results extend the generated-text findings beyond GPT-4o and reinforce the conclusion that first-order likelihood-array information accounts for most of the separation in this application.

\subsection{Values underlying the restricted and sample-size analyses}

Table~\ref{tab:supp_mechanism_checks} gives the numerical values underlying the restricted analyses in the main paper. The channel restrictions are cumulative, and the array-support restrictions are reported separately for LAR-1 and LAR-2.

\begin{figure}[H]
\centering
\includegraphics[width=0.98\textwidth]{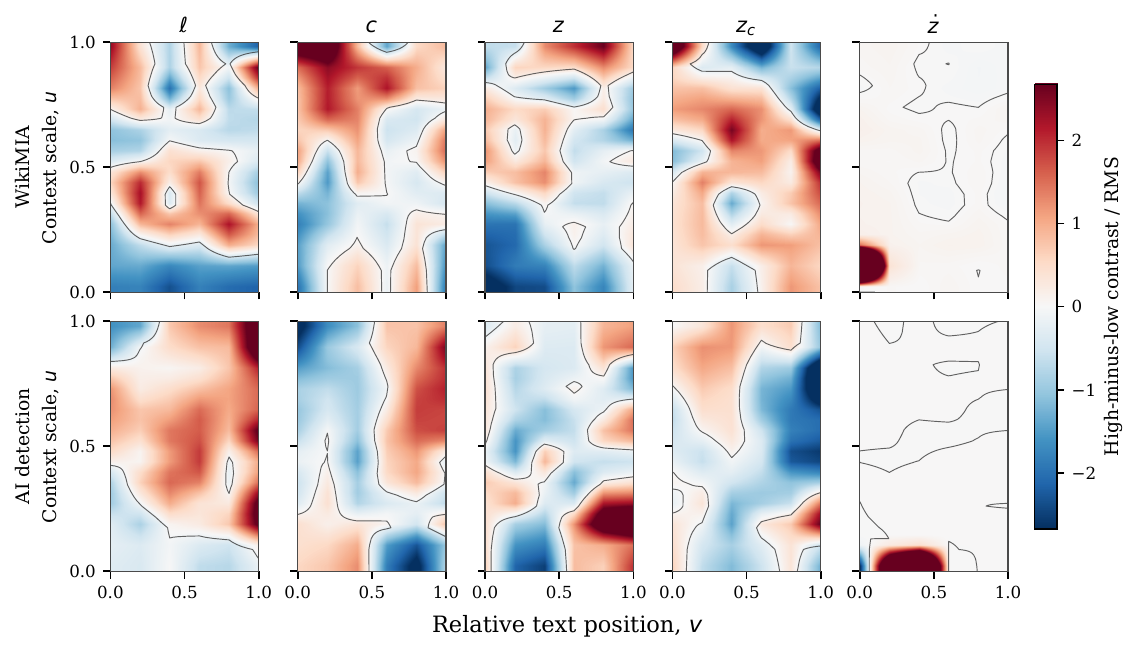}
\caption{Estimated coefficient contrasts for all five likelihood-derived channels used in the paper. Each panel compares the fitted first-order coefficient function at the 80th and 20th percentiles of the indicated channel, holding the remaining channels at their medians.  The fold-averaged surface is divided by its root mean square, so the panels show the spatial pattern and sign of each contrast rather than their relative magnitudes.  The rows correspond to WikiMIA with Pythia-1B and AI-generated text detection with DeepSeek-7B; gray contours mark zero.}
\label{fig:supp_functional_contrasts}
\end{figure}

\begin{table}[H]
\centering
\caption{Restricted analyses in two representative settings. Entries are out-of-fold AUCs. The likelihood-channel rows report LAR-2, and the array-support rows report both LAR-1 and LAR-2.}
\label{tab:supp_mechanism_checks}
\small
\begin{tabular}{llcc}
\toprule
Component & Restriction & WikiMIA & AI detection \\
\midrule
Likelihood-derived channels & Raw log probability & 0.830 & 0.975 \\
 & + BOS-relative probability & 0.906 & 0.984 \\
 & + Vocabulary-standardized channels & 0.953 & 0.989 \\
 & + Local context response & 0.952 & 0.988 \\
\addlinespace
Array support: LAR-1 & Full-context cells & 0.802 & 0.937 \\
 & Shorter-context cells & 0.907 & 0.988 \\
 & Complete array & 0.908 & 0.988 \\
\addlinespace
Array support: LAR-2 & Full-context cells & 0.893 & 0.945 \\
 & Shorter-context cells & 0.952 & 0.989 \\
 & Complete array & 0.952 & 0.988 \\
\bottomrule
\end{tabular}
\end{table}

Table~\ref{tab:supp_sample_size} reports the summaries underlying the labeled-sample-size curves.  The full-sample value is obtained from one fit; the other columns summarize 20 repeated training subsamples evaluated on a common test set.

\begin{table}[H]
\centering
\caption{Median test AUC over 20 repeated labeled subsamples. Parentheses give the 10th and 90th percentiles; the Full column reports the single full-sample fit.}
\label{tab:supp_sample_size}
\footnotesize
\setlength{\tabcolsep}{3pt}
\resizebox{\textwidth}{!}{%
\begin{tabular}{llccccc}
\toprule
Task & Method & 100 & 200 & 400 & 800 & Full \\
\midrule
WikiMIA & Full-context only & 0.638 (0.619, 0.664) & 0.682 (0.661, 0.725) & 0.708 (0.689, 0.731) & 0.749 (0.735, 0.760) & 0.777 \\
 & LAR-1 & 0.684 (0.658, 0.719) & 0.753 (0.725, 0.785) & 0.812 (0.788, 0.841) & 0.875 (0.863, 0.884) & 0.913 \\
 & LAR-2 & 0.714 (0.674, 0.743) & 0.796 (0.760, 0.813) & 0.857 (0.842, 0.881) & 0.914 (0.909, 0.923) & 0.943 \\
\addlinespace
AI detection & Full-context only & 0.860 (0.846, 0.883) & 0.890 (0.874, 0.907) & 0.897 (0.884, 0.912) & 0.902 (0.898, 0.911) & 0.908 \\
 & LAR-1 & 0.956 (0.945, 0.967) & 0.973 (0.967, 0.979) & 0.978 (0.976, 0.983) & 0.982 (0.979, 0.983) & 0.982 \\
 & LAR-2 & 0.955 (0.945, 0.963) & 0.969 (0.964, 0.975) & 0.977 (0.973, 0.982) & 0.980 (0.977, 0.982) & 0.981 \\
\bottomrule
\end{tabular}}
\end{table}

\subsection{Sensitivity analysis}
\label{sec:supp_numerical_sensitivity}

We examine the sensitivity of LAR-2 to its numerical specification. Panels A--E of Figure~\ref{fig:supp_sieve_stability} show the results of varying one component at a time for WikiMIA with Pythia-1B, while Panel F examines variation across random-feature seeds in both applications. The vertical dashed lines indicate the specification used in the main comparisons.

\begin{figure}[H]
\centering
\begingroup
\small
\renewcommand{\baselinestretch}{1}\selectfont
\includegraphics[width=0.90\textwidth]{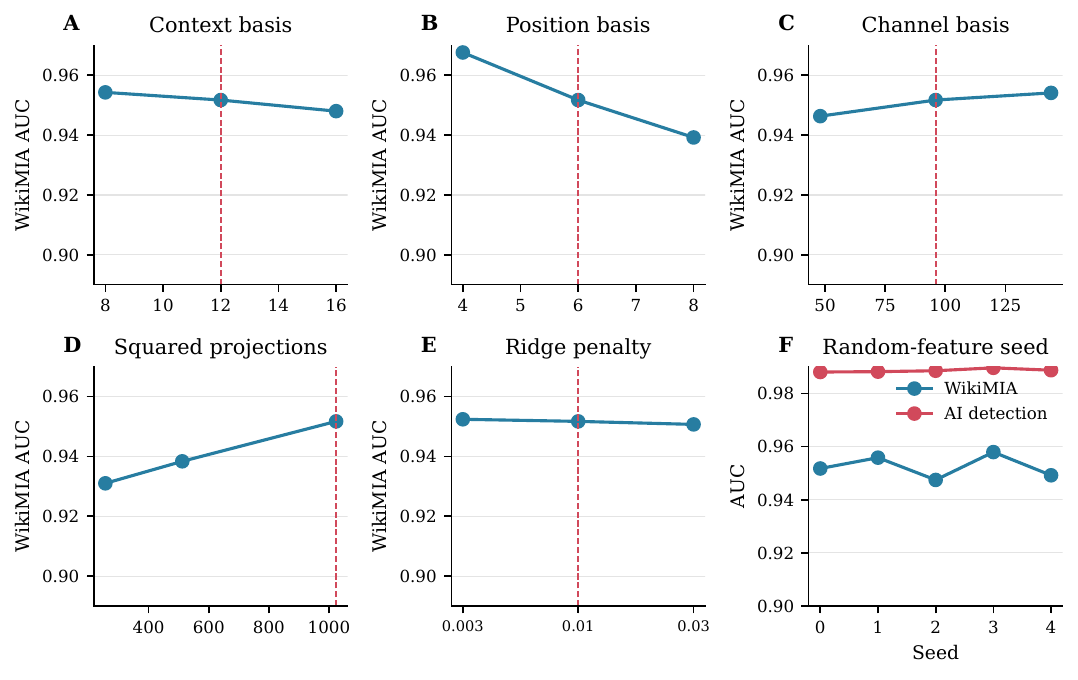}
\caption{Sensitivity of LAR-2 to its numerical specification. Panels A--E report WikiMIA AUC as the context-basis dimension, position-basis dimension, channel-basis dimension, number of squared projections, and ridge penalty are varied separately. Dashed lines indicate the specification used in the main comparisons. Panel F reports AUC across random-feature seeds for WikiMIA and AI-generated text detection.}
\label{fig:supp_sieve_stability}
\endgroup
\end{figure}

AUC changes little over the displayed ranges of the ridge penalty and channel-basis dimension. The position-basis dimension and number of squared projections produce somewhat greater variation in WikiMIA AUC, but LAR-2 remains above the strongest comparison procedure over all displayed settings. Across five random-feature seeds, AUC ranges from \(0.947\) to \(0.958\) for WikiMIA and from \(0.988\) to \(0.990\) for AI-generated text detection.

\subsection{Simulation}

The simulation isolates the first- and second-order information represented by the likelihood-array model. Each observation is a \(16\times24\) aligned grid with five Gaussian channels. Under the baseline model, each channel follows an autoregressive process over context scale with correlation 0.45. Class differences are introduced over the region \(u\in[0.25,0.75]\) and \(v\in[0.20,0.80]\). The marginal-mean and marginal-scale settings shift the mean or variance of the first channel. The dependence-only setting changes the sign and magnitude of the context-scale correlation without changing the one-point Gaussian margins. The variable-length setting combines a marginal mean shift with a random number of retained target positions. Each of 30 repetitions contains 400 training and 200 test observations. The oracle score uses the known data-generating feature.

\begin{figure}[t]
\centering 
\includegraphics[width=0.90\textwidth]{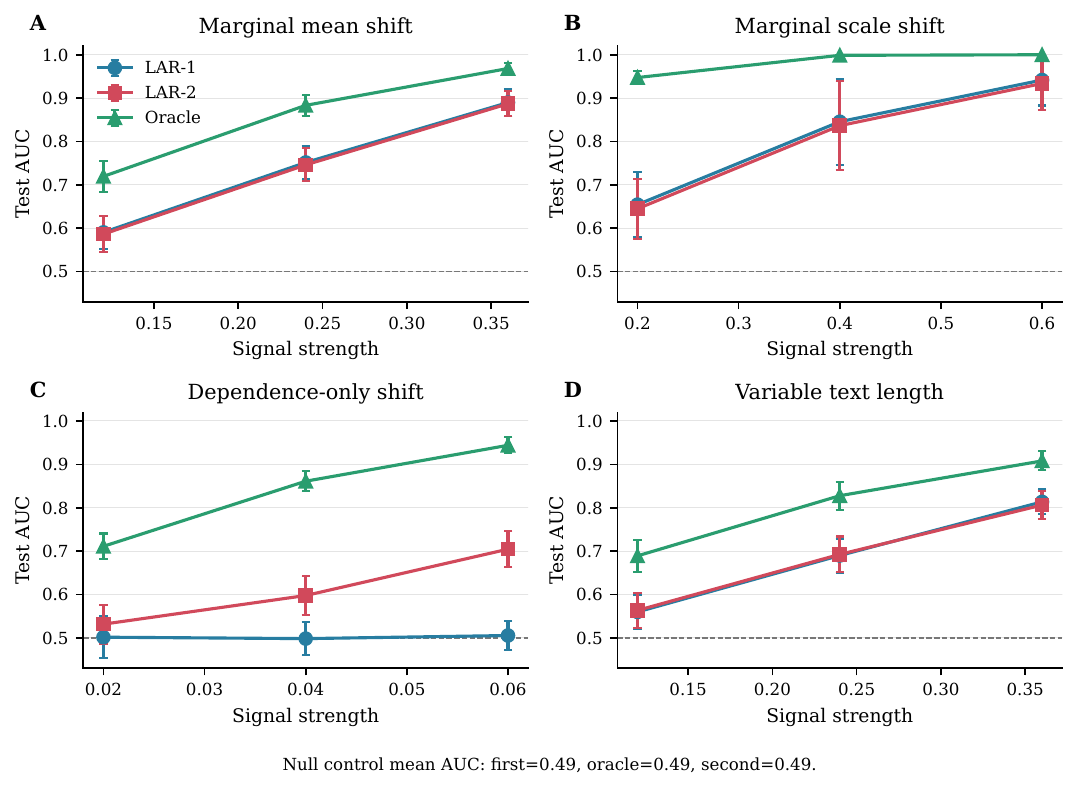}
\caption{Recovery of distinct likelihood-array signals. Points show mean test AUC and bars show one Monte Carlo standard deviation over 30 repetitions for marginal mean shifts (A), marginal scale shifts (B), dependence-only shifts (C), and variable text lengths (D). Under the dependence-only alternative, the classes have the same first-order marginal distributions; LAR-1 remains near chance while LAR-2 gains ranking accuracy as the association difference increases.}
\label{fig:supp_simulation}
\end{figure}

Panels A, B, and D show that LAR-1 recovers changes in the distribution of the aligned channel values and remains effective when the number of target positions varies. Panel C isolates the role of the quadratic term: LAR-1 remains near 0.5, whereas the LAR-2 AUC increases from approximately 0.53 to 0.70 over the displayed dependence strengths. The null control has mean AUC 0.49 for both fitted models.

\subsection{Transfer across length groups and text domains}

The transfer analyses keep the representation and regression specification fixed while separating training and test groups.  For WikiMIA, the across-length fits train on one pair of benchmark length groups and test on the complementary pair.  For generated-text detection, each fit leaves one domain out of training.  Figure~\ref{fig:supp_transfer} summarizes the comparisons, and Table~\ref{tab:supp_transfer} gives the numerical values.

\begin{figure}[t]
\centering
\includegraphics[width=0.98\textwidth]{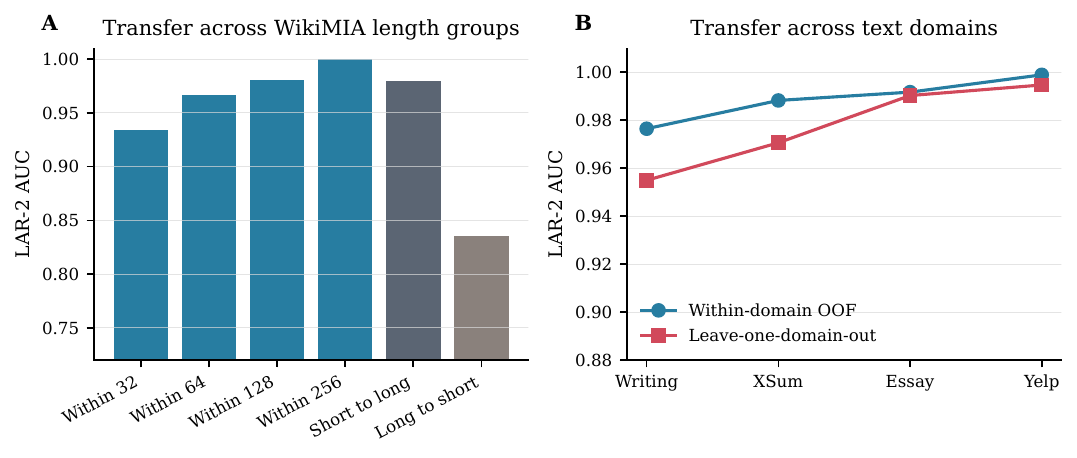}
\caption{Transfer across text length and domain. Panel A reports LAR-2 AUC within WikiMIA length groups and when training and testing on disjoint pairs of length groups. Panel B compares pair-grouped within-domain out-of-fold AUC with leave-one-domain-out AUC for AI-generated text detection.}
\label{fig:supp_transfer}
\end{figure}

\begin{table}[H]
\centering
\caption{Within-group and out-of-group AUCs for LAR-2. The WikiMIA cross-length rows train on the complementary pair of length groups.}
\label{tab:supp_transfer}
\small
\begin{tabular}{lllrr}
\toprule
Task & Analysis & Test group & Test size & AUC \\
\midrule
WikiMIA & Within length group & 32 & 776 & 0.934 \\
WikiMIA & Within length group & 64 & 542 & 0.967 \\
WikiMIA & Within length group & 128 & 250 & 0.981 \\
WikiMIA & Within length group & 256 & 82 & 1.000 \\
WikiMIA & Across length groups & 128+256 & 332 & 0.979 \\
WikiMIA & Across length groups & 32+64 & 1318 & 0.836 \\
AI detection & Within domain & Writing & 300 & 0.976 \\
AI detection & Within domain & XSum & 300 & 0.988 \\
AI detection & Within domain & Essay & 300 & 0.992 \\
AI detection & Within domain & Yelp & 300 & 0.999 \\
AI detection & Leave one domain out & Writing & 300 & 0.955 \\
AI detection & Leave one domain out & XSum & 300 & 0.971 \\
AI detection & Leave one domain out & Essay & 300 & 0.990 \\
AI detection & Leave one domain out & Yelp & 300 & 0.995 \\
\bottomrule
\end{tabular}
\end{table}

Leave-one-domain-out AUC differs from the corresponding within-domain value by 0.002--0.021. Cross-length AUC is 0.979 when the shorter groups are used for training and the longer groups for testing, and 0.836 in the reverse direction. The within-group AUC of 1.000 for the 256-token group is descriptive because that group contains only 82 texts.

\end{document}